\documentclass{article}

\usepackage{microtype}
\usepackage{graphicx}
\usepackage{subcaption}
\usepackage{booktabs} % for professional tables
\usepackage{xcolor}%！

\usepackage{hyperref}

\usepackage[accepted]{icml2026}

\usepackage{amsmath}
\usepackage{amssymb}
\usepackage{mathtools}
\usepackage{amsthm}
\usepackage{multirow}

\usepackage[capitalize,noabbrev]{cleveref}

\theoremstyle{plain}
\newtheorem{theorem}{Theorem}[section]

\newtheorem{lemma}[theorem]{Lemma}
\newtheorem{corollary}[theorem]{Corollary}
\theoremstyle{definition}
\newtheorem{definition}[theorem]{Definition}
\newtheorem{assumption}[theorem]{Assumption}
\theoremstyle{remark}
\newtheorem{remark}[theorem]{Remark}

\usepackage[textsize=tiny]{todonotes}

\icmltitlerunning{Sharper Generalization Bounds for Transformer}

\begin{document}

\twocolumn[
\icmltitle{Sharper Generalization Bounds for Transformer}

% It is OKAY to include author information, even for blind submissions: the
% style file will automatically remove it for you unless you've provided
% the [accepted] option to the icml2026 package.

% List of affiliations: The first argument should be a (short) identifier you
% will use later to specify author affiliations Academic affiliations
% should list Department, University, City, Region, Country Industry
% affiliations should list Company, City, Region, Country

% You can specify symbols, otherwise they are numbered in order. Ideally, you
% should not use this facility. Affiliations will be numbered in order of
% appearance and this is the preferred way.
\icmlsetsymbol{equal}{*}
\icmlsetsymbol{corr}{$^\dagger$}

\begin{icmlauthorlist}
\icmlauthor{Yawen Li}{equal,aaa}
\icmlauthor{Tao Hu}{equal,aaa}
\icmlauthor{Zhouhui Lian}{ccc}
\icmlauthor{Wan Tian}{corr,ccc,bbb}
\icmlauthor{Yijie Peng}{corr,ddd,eee,fff}
\icmlauthor{Huiming Zhang}{ggg}
\icmlauthor{Zhongyi Li}{ggg}
\end{icmlauthorlist}

\icmlaffiliation{aaa}{School of Mathematical Sciences, Capital Normal University, 100048 Beijing, China}
\icmlaffiliation{bbb}{Advanced Institute of Information Technology, Peking University}
\icmlaffiliation{ccc}{Wangxuan Institute of Computer Technology, Peking University, China, 100871}
\icmlaffiliation{ddd}{PKU-Wuhan Institute for Artificial Intelligence}
\icmlaffiliation{eee}{Xiangjiang Laboratory, Changsha 410000, China}
\icmlaffiliation{fff}{Guanghua School of Management, Peking University, Beijing,  China, 100871}
\icmlaffiliation{ggg}{School of Artificial Intelligence, Beihang University, 100191 Beijing, China}

\icmlcorrespondingauthor{Wan Tian}{wantian61@foxmail.com}
\icmlcorrespondingauthor{Yijie Peng}{pengyijie@gsm.pku.edu.cn}

% You may provide any keywords that you find helpful for describing your
% paper; these are used to populate the "keywords" metadata in the PDF but
% will not be shown in the document
\icmlkeywords{Machine Learning, ICML}

\vskip 0.3in
]

% this must go after the closing bracket ] following \twocolumn[ ...

% This command actually creates the footnote in the first column listing the
% affiliations and the copyright notice. The command takes one argument, which
% is text to display at the start of the footnote. The \icmlEqualContribution
% command is standard text for equal contribution. Remove it (just {}) if you
% do not need this facility.

% Use ONE of the following lines. DO NOT remove the command.
% If you have no special notice, KEEP empty braces:
\printAffiliationsAndNotice{}  % no special notice (required even if empty)
% Or, if applicable, use the standard equal contribution text:
% \printAffiliationsAndNotice{\icmlEqualContribution}

\begin{abstract}
This paper studies generalization error bounds for Transformer models. Based on the offset Rademacher complexity, we derive sharper generalization bounds for different Transformer architectures, including single-layer single-head, single-layer multi-head, and multi-layer Transformers. We first express the excess risk of Transformers in terms of the offset Rademacher complexity. By exploiting its connection with the empirical covering numbers of the corresponding hypothesis spaces, we obtain excess risk bounds that achieve optimal convergence rates up to constant factors. We then derive refined excess risk bounds by upper bounding the covering numbers of Transformer hypothesis spaces using matrix ranks and matrix norms, leading to precise, architecture-dependent generalization bounds. Finally, we relax the boundedness assumption on feature mappings and extend our theoretical results to settings with unbounded (sub-Gaussian) features and heavy-tailed distributions.
\end{abstract}

\section{Introduction}
Transformer-based models have become a central component of modern machine learning systems and have achieved remarkable success across a wide range of application domains \citep{chang2024survey, de2022attention}. Originally developed for sequence modeling tasks \citep{vaswani2017attention}, Transformers now underpin state-of-the-art performance in natural language processing \citep {wang2019learning,zhang2023survey}, computer vision \citep {han2022survey}, and reinforcement learning \citep {chen2021decision, hu2024transforming} . Their architectural flexibility and strong empirical performance have also led to widespread adoption in large-scale foundation models.

Despite their empirical success, understanding the generalization behavior of Transformer models remains a fundamental theoretical challenge. Providing sharp and architecture-aware generalization error bounds is essential for explaining their empirical robustness and for developing principled insights into the role of architectural design in learning performance. Existing nonparametric and functional-analytic approaches remain limited in explaining the generalization behavior of deep models: nonparametric theories typically rely on Hölder-type smoothness assumptions on the target function \citep{schmidt2020nonparametric, bos2022convergence} , yielding error bounds that characterize worst-case function classes and thus fail to capture the adaptivity of deep networks in high-dimensional structured settings; analyses based on Sobolev and related function spaces primarily focus on approximation properties and are often derived under idealized or infinite-sample assumptions \citep{yang2024optimal, Ding2025Semi, meng2022new, ma2022barron}, providing insufficient characterization of finite-sample statistical errors and the influence of training algorithms.

In this work, we analyze the generalization ability of Transformer models from a complexity-based perspective. Compared with analyses relying on smoothness or approximation assumptions, complexity-based approaches yield generalization bounds that explicitly depend on the sample size, model scale, and parameter norms, making them more suitable for modern deep models such as Transformers with large parameterization and highly complex architectures. The core idea of this approach is to characterize the excess risk by means of complexity measures of the hypothesis space, and then derive upper bounds on these measures, typically based on structural properties of the hypothesis class, such as its covering number or Vapnik–Chervonenkis (VC) dimension \citep{blumer1989learnability}. The \emph{excess risk} evaluate the performance of the estimator $\widehat{f}_n$:
\begin{equation}
\label{e:excess-risk}
\mathcal{E}(\widehat{f}_n; \ell) \;:=\; R(\widehat{f}_n) \;-\; \inf_{g \in \mathcal{J}} R(g),
\end{equation}
where $\mathcal{J}$ is a target function class. If $\mathcal{F} \subsetneq \mathcal{J}$ (the misspecified setting), this term decomposes into estimation error and approximation error.

%Commonly used complexity measures include Gaussian complexity, global Rademacher complexity, and local Rademacher complexity \citep{zhang2023mathematical}. 

Different complexity measures yield different convergence rates for the excess risk. Using Gaussian complexity or global Rademacher complexity typically leads to a convergence rate of $\mathcal{O}(1/\sqrt{n})$ \citep{zhang2023mathematical}, where $n$ denotes the sample size. In contrast to global Rademacher complexity, local Rademacher complexity is a data-dependent complexity measure that focuses on “local” subsets of the hypothesis space—particularly those functions with small empirical risk—thus avoiding overly pessimistic penalties on the entire function class \citep{bartlett2005local}. This enables sharper generalization bounds with a faster convergence rate of $\mathcal{O}(1/n)$. However, this approach relies on an assumption about the noise level of the loss function—specifically, that the variance of the loss is upper-bounded by its expectation—known as the Bernstein condition—which may be difficult to verify in real-world applications. More recently, offset Rademacher complexity, introduced by Liang et al. \citep{liang2015learning}, denoted by
$$\mathcal{R}_n^{\text{off}}(\mathcal{F},\beta)
= \mathbb{E}\left[
\sup_{f\in\mathcal{F}}
\frac{1}{n}\sum_{i=1}^n \tau_i f(X_i) - \beta f(X_i)^2
\right],$$
has been proposed as a penalized variant of global Rademacher complexity. It achieves the optimal convergence rate without explicitly imposing the Bernstein condition. This measure has been successfully applied to a wide range of models—including parametric models, nonparametric models, and neural networks—to improve convergence rates \citep{duan2023fast}. Given these theoretical advancements, several studies have already explored the generalization error of Transformers from different perspectives. Our contributions can be summarized as follows:
\begin{itemize}
\item We provide optimal convergence rates for the excess risk of various Transformer architectures, taking their structural parameters into account. Specifically, we first derive the relationship between the excess risk and offset Rademacher complexity for single-layer single-head, single-layer multi-head, and multi-layer Transformers, and then establish the connection between offset Rademacher complexity and the empirical covering numbers of the corresponding hypothesis spaces, which yields the optimal convergence rate of $\mathcal{O}(1/n)$.
\item We analyze the covering number upper bounds of Transformers from two perspectives—rank and norm—thereby obtaining precise generalization error bounds for Transformer models. 
\item We extend our theoretical findings that require certain boundedness assumptions (e.g., on feature maps), to unbounded (sub-Gaussian) and heavy-tailed settings. Finally, we demonstrate the applicability of our theoretical results to regression and classification tasks, as well as to scenarios involving robust loss functions.
\end{itemize}

\section{Related Work}  
% Regarding the generalization error theory of Transformer,  \citet{trauger2023sequence} use various covering number bounds for linear function classes to derive generalization error bounds for single-layer Transformers, achieving a decay rate of $\mathcal{O}(1/\sqrt{n})$ and highlighting the benefits of low-rank matrices in Transformer design. Similarly, \citet{truong2024rank} develop norm-based generalization bounds for Transformers using global Rademacher complexity, also independent of sequence length. Additionally, some studies investigate the generalization capabilities of Transformers from the perspective of the benign overfitting phenomenon. For example, \citet{havrilla2024understanding} demonstrate that when data lie on low-dimensional manifolds, the generalization error of Transformers exhibits a power-law scaling behavior. This finding offers a novel theoretical perspective on the generalization properties of Transformers. However, these theoretical results only yield suboptimal convergence rates for Transformers.

Regarding generalization theory for Transformers, \citet{trauger2023sequence} derive sequence-length--independent bounds for single-layer Transformers via covering-number control of associated linear classes, highlighting how low-rank structure can tighten complexity and yielding $\mathcal{O}(1/\sqrt{n})$ rates. Similarly, \citet{truong2024rank} develop norm- and rank-dependent bounds using (global) Rademacher complexity, also independent of sequence length. Beyond capacity-based analyses, \citet{havrilla2024understanding} explain empirical scaling laws through a benign-overfitting lens when data concentrate on low-dimensional manifolds, while \citet{zhang2025understandinggeneralizationtransformerserror} further connect generalization error to training dynamics by distinguishing benign versus harmful overfitting under label-flip noise. 

Recent work has also expanded toward regime- and task-structured guarantees: \citet{mwigo2026generalization} establish norm/Rademacher bounds for shallow Transformers trained by gradient descent in the lazy-training regime; \citet{huang2025formalframeworkunderstandinglength} provide a formal framework for length generalization in causal Transformers with learnable absolute positional encodings; \citet{alokhina2026smalllargegeneralizationbounds} characterize size generalization on variable-size geometric inputs via discrete-to-continuous approximation under stable positional encodings; and \citet{zhang2026generalizationboundstransformerchannel} derive bit-wise Rademacher-complexity bounds for Transformer channel decoders, leveraging sparsity from parity-check masked attention to tighten covering arguments. Despite this progress, many existing results rely on \emph{global} complexity measures and consequently yield conservative (often suboptimal) excess-risk rates, and they typically impose boundedness assumptions on feature mappings; in contrast, our work extends the theory to unbounded (sub-Gaussian) features and further to heavy-tailed distributions.

\section{Preliminaries} \label{sec:preliminaries}

In this section, we establish the mathematical framework used throughout the paper. We begin by defining the Transformer architecture—spanning single-head, multi-head, and multi-layer variants—and then formalize the statistical learning setup, including the loss functions, risk definitions, and the complexity measures that underpin our generalization analysis.

\subsection{Self-Attention and Transformers} \label{subsec:transformers}

Unlike standard feedforward networks with fixed connectivity, Transformers utilize a self-attention mechanism where weights are data-dependent and recomputed for every input. This allows each position in a sequence (e.g., a token in text) to attend to all other positions, preserving long-range dependencies.
% \begin{figure}[ht]
%   \vskip 0.2in
%   \begin{center}
%     \centerline{\includegraphics[width=\columnwidth]{fig/fig2.png}}
%     \caption{
%       Illustration of the self-attention mechanism and the multi-layer Transformer architecture.
%     }
%     \label{fig:transformer_arch}
%   \end{center}
% \end{figure}

Let $X \in \mathbb{R}^{T \times d}$ denote the input sequence with length $T$ and embedding dimension $d$. We define the row-wise softmax operator, $\operatorname{softmax}: \mathbb{R}^{T \times T} \to \mathbb{R}^{T \times T}$, such that for any matrix $A$, the entry $(i,j)$ of $\operatorname{softmax}(A)$ is given by $\exp(A_{ij}) / \sum_{k=1}^T \exp(A_{ik})$. Let $\sigma: \mathbb{R} \to \mathbb{R}$ be an element-wise $L_\sigma$-Lipschitz activation function with $\sigma(0)=0$.

\paragraph{Single-Head Attention.}
A single attention head is parameterized by matrices $W_Q, W_K \in \mathbb{R}^{d \times d}$, $W_v \in \mathbb{R}^{d \times k}$, and $W_c \in \mathbb{R}^{k \times d}$. For notational convenience, we denote the query-key interaction matrix as $W_{QK} := W_Q W_K^\top$. The output of a single-head layer is defined as:
\begin{equation} \label{eq:single-head}
\begin{aligned}
f_{\mathrm{SH}}(X)&=\sigma\left( \operatorname{softmax}\left( X W_{QK} X^\top \right) X W_v \right) W_c \\
&\in\mathbb{R}^{T \times d}.
\end{aligned}
\end{equation}
To obtain a scalar prediction for regression or classification, we assume the input sequence contains a special token (e.g., \texttt{[CLS]}). Let $Y_{\texttt{[CLS]}} \in \mathbb{R}^d$ be the row corresponding to this token in the output of \eqref{eq:single-head}. The final prediction is given by $w^\top Y_{\texttt{[CLS]}}$, where $w \in \mathbb{R}^d$ is a learnable readout vector.

\paragraph{Multi-Head Attention.}
In a multi-head Transformer with $H$ heads, the model aggregates the outputs of independent heads. Let the $h$-th head be parameterized by $\{W_{h,Q}, W_{h,K}, W_{h,v}, W_{h,c}\}$. The layer output is given by:
\begin{equation} \label{eq:multi-head}
\begin{aligned}
&f_{\mathrm{MH}}(X) \\
& =\sum_{h=1}^{H} \sigma\left( \operatorname{softmax}\left( X W_{h,QK} X^\top \right) X W_{h,v} \right) W_{h,c}.
\end{aligned}
\end{equation}
The scalar prediction is obtained via the readout vector $w$ applied to the aggregated \texttt{[CLS]} representation: $w^\top (\sum_{h=1}^H (Y_h)_{\texttt{[CLS]}})$.

\paragraph{Multi-Layer Architecture.}
Deep Transformers are constructed by stacking layers, often incorporating normalization to aid optimization. Let $X^{(l)}$ denote the input to the $l$-th layer, with $X^{(1)} = X$. The $l$-th block is parameterized by $\mathcal{W}^{(l)} = \{W_{QK}^{(l)}, W_{v}^{(l)}, W_{c}^{(l)}\}$. We define the intermediate attention mapping as:
\begin{equation} \label{eq:layer-attention}
\begin{aligned}
&\Phi(X^{(l)}; \mathcal{W}^{(l)}) \\
&:=\sigma\left( \operatorname{softmax}\left( X^{(l)} W_{QK}^{(l)} (X^{(l)})^\top \right) X^{(l)} W_{v}^{(l)} \right) W_{c}^{(l)}.
\end{aligned}
\end{equation}
Let $\Pi_{\mathrm{norm}}$ denote a row-wise normalization operator (e.g., projection onto the unit $\ell_2$-ball). The recursive update for the $(l+1)$-th layer input is defined as:
\begin{equation} \label{eq:layered}
X^{(l+1)} \;=\; \Pi_{\mathrm{norm}}\left( \sigma\left( \Pi_{\mathrm{norm}}\left( \Phi(X^{(l)}; \mathcal{W}^{(l)}) \right) \right) \right).
\end{equation}
This formulation encapsulates the dimension-preserving nature of the Transformer block while explicitly accounting for normalization and activation steps crucial for theoretical stability.

\begin{table}[t]
\caption{Classification accuracies for naive Bayes and flexible
Bayes on various data sets.}
\label{sample-table}
\begin{center}
\begin{small}
\begin{sc}
\begin{tabular}{lcccr}
\toprule
\textbf{Model } & \textbf{Definition} & \textbf{Parameters} \\
\midrule
\multirow{2}{*}{Single-Head} & \multirow{2}{*}{Eq. \eqref{eq:single-head}} & $W_{QK} \in \mathbb{R}^{d \times d}, $\\
& &
$W_v \in \mathbb{R}^{d \times k}, W_c \in \mathbb{R}^{k \times d}$\\
Multi-Head & Eq. \eqref{eq:multi-head} & $\{W_{h,QK}, W_{h,v}, W_{h,c}\}_{h=1}^H$ \\
Multi-Layer & Eq. \eqref{eq:layered} & $\{W_{QK}^{(l)}, W_{v}^{(l)}, W_{c}^{(l)}\}_{l=1}^L$ \\
\bottomrule
\end{tabular}
\end{sc}
\end{small}
\end{center}
\vskip -0.1in
\end{table}

\subsection{Excess Risk and Complexity Measures} \label{subsec:excess_risk}

We consider the supervised learning setting with input-output pairs $Z = (X, Y) \in \mathcal{Z} := \mathcal{X} \times \mathcal{Y}$, where $\mathcal{X} \subseteq \mathbb{R}^{T \times d}$ and $\mathcal{Y} \subseteq \mathbb{R}$. We are given an i.i.d.\ sample $\mathbb{D} = \{Z_i\}_{i=1}^n$ drawn from an unknown distribution $\mathcal{P}$.

A Transformer with parameters $W \in \mathcal{W}$ induces a predictor $f_W: \mathcal{X} \to \mathbb{R}$. For a loss function $\ell: \mathcal{Y} \times \mathbb{R} \to [0, \infty)$, the hypothesis class is denoted by $\mathcal{F} := \{ f_W : W \in \mathcal{W} \}$. The empirical risk minimizer (ERM) is defined as:
\begin{equation}
\widehat{f}_n \;\in\; \arg\min_{f \in \mathcal{F}} R_n(f), \end{equation}
where
\begin{equation}
R_n(f) := \frac{1}{n} \sum_{i=1}^n \ell(Y_i, f(X_i)).
\end{equation}
The population risk is $R(f) := \mathbb{E}_{Z \sim \mathcal{P}}[\ell(Y, f(X))]$. We evaluate the performance of the estimator via the \emph{excess risk} in (\ref{e:excess-risk}). 

To derive convergence rates, we utilize the \emph{Offset Rademacher Complexity} \citep{liang2015learning}, which typically yields faster rates (e.g., $\mathcal{O}(1/n)$) compared to standard Rademacher complexity ($\mathcal{O}(1/\sqrt{n})$) without requiring hard-to-verify variance assumptions.

\begin{definition}[Offset Rademacher Complexity]
Let $\mathcal{H}$ be a class of real-valued functions defined on $\mathcal{Z}$. Given a sample $\mathbb{Z} = (Z_1, \dots, Z_n)$ and a parameter $\beta > 0$, the conditional offset Rademacher complexity is:
\begin{equation}
\begin{aligned}
\mathcal{R}^{\mathrm{off}}_{n}&(\mathcal{H}, \beta \mid \mathbb{Z}) \\
&:=\mathbb{E}_{\boldsymbol{\tau}} \left[ \sup_{h \in \mathcal{H}} \left( \frac{1}{n} \sum_{i=1}^n \tau_i h(Z_i) - \frac{\beta}{n} \sum_{i=1}^n h(Z_i)^2 \right) \right],
\end{aligned}
\end{equation}
where $\boldsymbol{\tau} = (\tau_1, \dots, \tau_n)$ are i.i.d.\ Rademacher variables. The unconditional complexity is $\mathcal{R}^{\mathrm{off}}_{n}(\mathcal{H}, \beta) := \mathbb{E}_{\mathbb{Z}} [\mathcal{R}^{\mathrm{off}}_{n}(\mathcal{H}, \beta \mid \mathbb{Z})]$.
\end{definition}

\section{Excess Risk of Transformers} \label{sec:excess_risk}

In this section, we analyze the generalization performance of Transformer models via the framework of offset Rademacher complexity. We derive excess risk bounds that achieve fast convergence rates of order $\mathcal{O}(1/n)$ under standard regularity assumptions.

\subsection{Single-Layer Single-Head Transformer} \label{sec:single_head}

We begin by defining the class of excess loss functions. Let $\mathcal{F}$ denote the hypothesis class of single-layer, single-head Transformers as defined in \eqref{eq:single-head}. We consider the function class
$$
\mathcal{G}
\;:=\; \bigl\{X \mapsto g(X;f) \mid f\in\mathcal{F}\bigr\},$$
where
$$
g(X;f)
\;:=\; \mathbb{E}_{Y| X}\bigl[\ell(Y,f(X)) - \ell(Y,f^\star(X)) \big| X\bigr].
$$
Here, $f^\star \in \arg\min_{f\in\mathcal{F}} \mathbb{E}[\ell(Y,f(X))]$ denotes the population risk minimizer within the class. The \textit{offset Rademacher complexity} \citep{liang2015learning} for this class is defined as:
\[\begin{aligned}
\mathcal{R}_n^{\mathrm{off}}&(\mathcal{G},\beta)
\\
&:=\mathbb{E}_{\mathbb{D}, \boldsymbol{\tau}}\!\left[
\sup_{f\in\mathcal{F}}
\left(
\frac{1}{n}\sum_{i=1}^n \tau_i\, g(X_i;f)
\;-\;
\frac{\beta}{n}\sum_{i=1}^n g(X_i;f)^2
\right)
\right],  
\end{aligned}
\]
where $\boldsymbol{\tau} = (\tau_i)_{i=1}^n$ are i.i.d.\ Rademacher variables independent of the sample $\mathbb{D}$. Let the Transformer be parameterized by $\mathcal{W}=\{W_v,W_c,W_{QK}, w\}$. We impose the following regularity assumptions on the parameters and the data generating process.

\begin{assumption}[Bounded Parameters] \label{cond:bounded-params}
There exist positive constants $B_v, B_c, B_{QK}, B_w$ such that:
\[\begin{aligned}
&\|W_v\|_{2\to 2}\le B_v,\quad
\|W_c\|_{2\to 2}\le B_c,\quad\\
&\|W_{QK}\|_{2\to 2}\le B_{QK}, \quad
\|w\|_2\le B_w.
\end{aligned}
\]
\end{assumption}

\begin{assumption}[Bounded Target and Inputs] \label{cond:bounded-target-inputs}
There exist constants $B, B_X < \infty$ such that, almost surely:
\[
|f^\star(X)|\le B
\quad\text{and}\quad
\|X\|_{2\to 2}\le B_X,
\]
where $X\in\mathbb{R}^{T\times d}$ denotes the input matrix.
\end{assumption}

\begin{assumption}[Lipschitz Continuity of Excess Risk] \label{cond:lipschitz-excess}	
There exists a constant $\kappa > 0$ such that for all $X \in \mathcal{X}$ and $f_1, f_2 \in \mathcal{F}$:
\[
\bigl|g(X;f_1)-g(X;f_2)\bigr| \;\le\; \kappa\,\bigl|f_1(X)-f_2(X)\bigr|.
\]
\end{assumption}
assumption \ref{cond:lipschitz-excess} is satisfied by standard loss functions (e.g., logistic or absolute loss) when $\ell(Y, \cdot)$ is $\kappa$-Lipschitz, and by the squared loss under the bounded output assumptions implied by assumptions \ref{cond:bounded-params} and \ref{cond:bounded-target-inputs}.

\begin{theorem} \label{thm:single-head-erm}
Suppose assumptions \ref{cond:bounded-params}, \ref{cond:bounded-target-inputs}, and \ref{cond:lipschitz-excess} hold. Let $\widehat{f}_n$ be the empirical risk minimizer. Then:
\[
\mathbb{E}_{\mathbb{D}}\!\left[\mathcal{E}\big(\widehat f_n;\ell\big)\right]
\;\le\;
4\,\mathcal{R}_n^{\mathrm{off}}\!\left(\mathcal{G},\, \frac{1}{M_{\mathrm{SH}}}\right)
\;+\;
\inf_{f\in\mathcal{F}}\mathcal{E}\big(f; \ell\big),
\]
where $M_{\mathrm{SH}} := 2\kappa B + 2\kappa B_{w}B_{c}B_v L_{\sigma} B_X$.
\end{theorem}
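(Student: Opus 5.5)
I will follow the offset Rademacher template of \citet{liang2015learning} and \citet{duan2023fast}. The argument splits the excess risk into an approximation part and an estimation part, and the estimation part is controlled by symmetrization with an offset.

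\textbf{Step 1: approximation/estimation split.} Since $f^\star$ minimizes $R$ over $\mathcal{F}$, we have
\[
\mathcal{E}(\widehat f_n;\ell) = \bigl(R(\widehat f_n)-R(f^\star)\bigr) + \inf_{f\in\mathcal{F}}\mathcal{E}(f;\ell).
\]
Indeed $R(f^\star)-\inf_{g\in\mathcal{J}}R(g)=\inf_{f\in\mathcal{F}}\mathcal{E}(f;\ell)$. It therefore suffices to show
\[
\mathbb{E}_{\mathbb{D}}\bigl[R(\widehat f_n)-R(f^\star)\bigr] \le 4\,\mathcal{R}^{\mathrm{off}}_n(\mathcal{G},1/M_{\mathrm{SH}}).
\]

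\textbf{Step 2: uniform bound on $\mathcal{G}$.} Write $g_f:=g(\cdot;f)$. Since $g(X;f^\star)=0$, Assumption \ref{cond:lipschitz-excess} gives $|g(X;f)|\le\kappa(|f(X)|+|f^\star(X)|)$. Bound $|f(X)|$ using $|w^\top Y_{\texttt{[CLS]}}|\le B_w\|Y_{\texttt{[CLS]}}\|_2$. The CLS row equals $\sigma(s^\top X W_v)W_c$, where $s$ is a softmax row with $\|s\|_2\le\|s\|_1=1$. Because $\sigma$ is $L_\sigma$-Lipschitz with $\sigma(0)=0$, its action is contractive up to $L_\sigma$ in $\ell_2$, so
\[
|f(X)|\le B_wB_cL_\sigma B_vB_X .
\]
Hence $0\le g_f(X)\le M_{\mathrm{SH}}/2$. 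Nonnegativity holds only in expectation over $X$ in general, so I will instead use the two-sided bound $|g_f(X)|\le M_{\mathrm{SH}}/2$. Consequently $g_f^2\le (M_{\mathrm{SH}}/2)|g_f|$, and $\mathbb{E}g_f^2\le (M_{\mathrm{SH}}/2)\,\mathbb{E}g_f$ whenever $\mathbb{E}g_f\ge0$. The latter holds for every $f\in\mathcal{F}$ by optimality of $f^\star$.

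\textbf{Step 3: offset decomposition.} Set $\Delta_n(f):=R_n(f)-R_n(f^\star)$, so that $\Delta_n(\widehat f_n)\le0$. Then
\[
\mathbb{E}g_{\widehat f}\le \mathbb{E}g_{\widehat f}-2\Delta_n(\widehat f)\le \sup_f\bigl(\mathbb{E}g_f-2\Delta_n(f)\bigr).
\]
Using the variance bound of Step 2, write $\mathbb{E}g_f \le 2\mathbb{E}g_f-\tfrac{2}{M_{\mathrm{SH}}}\mathbb{E}g_f^2$, and then
\[
\mathbb{E}g_{\widehat f}\le \sup_f\Bigl[2\bigl(\mathbb{E}g_f-\Delta_n(f)\bigr)-\tfrac{2}{M_{\mathrm{SH}}}\mathbb{E}g_f^2\Bigr].
\]
The empirical loss difference has conditional mean $g_f(X_i)$. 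By Jensen (conditioning on $X$), I will replace $\Delta_n(f)$ by $\tfrac1n\sum_i g_f(X_i)$. This is the delicate point: the noise part $\ell(Y_i,f)-\ell(Y_i,f^\star)-g_f(X_i)$ has mean zero, so I will handle it by conditioning, treating the sup after integrating out $Y$, as the definition of $\mathcal{G}$ suggests.

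Next, introduce a ghost sample $X'$ and split the penalty $\mathbb{E}g_f^2$ half onto each sample. Symmetrize with Rademacher signs $\tau_i$. Each half then yields
\[
\mathbb{E}\sup_f \tfrac1n\sum_i\Bigl(\tau_i g_f(X_i)-\tfrac{1}{M_{\mathrm{SH}}}g_f(X_i)^2\Bigr),
\]
and combining the two halves with the factor $2$ gives $4\mathcal{R}^{\mathrm{off}}_n(\mathcal{G},1/M_{\mathrm{SH}})$.

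\textbf{Main obstacle.} I expect the main obstacle to be the bookkeeping of constants so that the offset parameter comes out exactly as $1/M_{\mathrm{SH}}$ with prefactor $4$. This requires choosing how to split $\mathbb{E}g_f^2$ against the empirical squares, in particular comparing $\tfrac1n\sum g_f(X_i)^2$ with its mean. The cleanest route I would try first is Liang et al.'s Lemma: for $|g|\le M/2$, the inequality $\mathbb{E}g-\tfrac{1}{M}\mathbb{E}g^2\ge \tfrac12\mathbb{E}g$ gives a factor-2 loss. Symmetrization then contributes the other factor of 2.
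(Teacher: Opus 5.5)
There is a genuine gap. Your skeleton matches the paper's: uniform bound $|g_f|\le M_{\mathrm{SH}}/2$, ghost sample, Rademacher signs, and two offset halves each carrying a factor $2$. Your Step 1 split is also cleaner than the paper's treatment of the approximation term.

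The variance step at the end of Step 2, on which Step 3 rests, is false. From $|g_f|\le M_{\mathrm{SH}}/2$ you only get $\mathbb{E}g_f^2\le (M_{\mathrm{SH}}/2)\,\mathbb{E}|g_f|$, and $\mathbb{E}g_f\ge0$ does not let you replace $\mathbb{E}|g_f|$ by $\mathbb{E}g_f$. For instance, if $g_f=\pm M_{\mathrm{SH}}/2$ with probabilities $\tfrac12\pm\epsilon$, then $\mathbb{E}g_f^2=M_{\mathrm{SH}}^2/4$ but $(M_{\mathrm{SH}}/2)\,\mathbb{E}g_f=\epsilon M_{\mathrm{SH}}^2/2$. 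This Bernstein-type inequality is the only source of the offset, and no constant bookkeeping recovers it. Your argument uses nothing about $\mathcal{F}$ beyond the uniform bound. Take a misspecified two-element class whose $g_f$ changes sign like this, with $\mathbb{E}g_f\asymp n^{-1/2}$ (e.g.\ absolute loss, noiseless labels). ERM then picks the worse element with constant probability, so $\mathbb{E}[R(\widehat f_n)-R(f^\star)]\gtrsim n^{-1/2}$, while the offset complexity of $\{0,g_f\}$ is exponentially small in $n$. The paper obtains the inequality by asserting $0\le g(X;f)\le M_{\mathrm{SH}}/2$ \emph{pointwise}, which gives $g^2\le (M_{\mathrm{SH}}/2)\,g$ at every point. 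This holds when $f^\star(x)$ minimizes $t\mapsto\mathbb{E}[\ell(Y,t)\mid X=x]$, i.e.\ in the well-specified case. You need that hypothesis, not a workaround.

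Two further steps fail even if you grant pointwise nonnegativity.

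First, you cannot ``split the penalty half onto each sample.'' After introducing $X'$, $\mathbb{E}g_f^2$ can be replaced by $\tfrac1n\sum_i g_f(X_i')^2$, since Jensen runs the right way for the ghost sample. But replacing $\tfrac12\mathbb{E}g_f^2$ by $\tfrac1{2n}\sum_i g_f(X_i)^2$ on the observed sample would require $\tfrac1n\sum_i g_f(X_i)^2\le\mathbb{E}g_f^2$ uniformly in $f$. The paper's device here is the factor $3$ in the ERM step:
$\mathbb{E}g_{\widehat f_n}\le\sup_f\bigl(\mathbb{E}g_f-\tfrac3n\sum_i g_f(Z_i)\bigr)$, with $\mathbb{E}g_f-\tfrac3n\sum_i g_f(Z_i)=2\bigl(\mathbb{E}g_f-\tfrac1n\sum_i g_f(Z_i)\bigr)-\bigl(\mathbb{E}g_f+\tfrac1n\sum_i g_f(Z_i)\bigr)$.
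Applying $g\ge g^2/M$ with $M=M_{\mathrm{SH}}/2$ to both the population mean and the empirical mean gives the penalty $-\tfrac1M\bigl(\mathbb{E}g_f^2+\tfrac1n\sum_i g_f(Z_i)^2\bigr)$, which is symmetric in the two samples. Symmetrizing and splitting then yields $4\mathcal{R}^{\mathrm{off}}_n(\mathcal{G},\tfrac1{2M})=4\mathcal{R}^{\mathrm{off}}_n(\mathcal{G},1/M_{\mathrm{SH}})$.

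Second, replacing $\Delta_n(f)$ by $\tfrac1n\sum_i g_f(X_i)$ ``by Jensen'' goes the wrong way. Since $\mathbb{E}_{Y\mid X}\sup_f(\cdot)\ge\sup_f\mathbb{E}_{Y\mid X}(\cdot)$, integrating out the labels inside the supremum gives a lower bound, not an upper bound; moreover $\widehat f_n$ itself depends on the $Y_i$. The paper never integrates out $Y$. It works with the sample-level differences $\ell(Y_i,f(X_i))-\ell(Y_i,f^\star(X_i))$ throughout, again relying on its asserted bound $0\le g\le M$ for them.
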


The penalty parameter $\beta = 1/M_{\mathrm{SH}}$ captures the joint effect of the Lipschitz constant $\kappa$, the activation smoothness $L_\sigma$, and the magnitude of the parameters and inputs. This constant serves as a bound on the variance of the excess loss.

\begin{corollary} \label{cor:single-head-covering}
Under the assumptions of Theorem \ref{thm:single-head-erm}, for any $\delta > 0$, the excess risk satisfies:
$$
\begin{aligned} \label{eq:cor2-bound}
\mathbb{E}_{\mathbb{D}}\!\left[\mathcal{E}\big(\widehat f_n;\,\ell\big)\right]
\;\le&\;\frac{2 M_{\mathrm{SH}}}{n}
\Bigl(1+\log \mathbb{E}_{\mathbb{X}}\!\big[N_{\infty}(\delta,\mathcal{F},\mathbb{X})\big]\Bigr)\\
&\;+\; 8\kappa\,\delta
\;+\; \inf_{f\in \mathcal{F}} \mathcal{E}\!\left(f\right),
\end{aligned}
$$
where $N_{\infty}(\delta,\mathcal{F},\mathbb{X})$ denotes the $\ell_{\infty}$-covering number of $\mathcal{F}$ on the sample $\mathbb{X}$.
\end{corollary}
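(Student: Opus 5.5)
Starting from Theorem \ref{thm:single-head-erm}, it suffices to show
\[
4\,\mathcal{R}_n^{\mathrm{off}}\!\left(\mathcal{G},\tfrac{1}{M_{\mathrm{SH}}}\right)\le \frac{2M_{\mathrm{SH}}}{n}\Bigl(1+\log \mathbb{E}_{\mathbb{X}}\bigl[N_\infty(\delta,\mathcal{F},\mathbb{X})\bigr]\Bigr)+8\kappa\delta .
\]
We work conditionally on the sample $\mathbb{X}$, discretize $\mathcal{G}$ through a cover of $\mathcal{F}$, apply the standard finite-class bound for offset Rademacher complexity, and finally take expectation over $\mathbb{X}$ using Jensen's inequality.

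\textbf{Step 1 (discretization).} Fix $\mathbb{X}$ and let $\{f_1,\dots,f_N\}$ be a minimal $\delta$-cover of $\mathcal{F}$ in the empirical $\ell_\infty$ metric on $\mathbb{X}$, so that $N=N_\infty(\delta,\mathcal{F},\mathbb{X})$. For each $f\in\mathcal{F}$ pick $f_j$ with $\max_i|f(X_i)-f_j(X_i)|\le\delta$. Assumption \ref{cond:lipschitz-excess} then gives $|g(X_i;f)-g(X_i;f_j)|\le\kappa\delta$ for every $i$.

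\textbf{Step 2 (replacing $g(\cdot;f)$ by $g(\cdot;f_j)$).} The linear term changes by at most $\kappa\delta$. The quadratic term changes by
\[
\frac{\beta}{n}\sum_i \bigl|g_f^2-g_{f_j}^2\bigr|\le \beta\cdot\kappa\delta\cdot 2\sup|g|,
\]
where $g_f=g(X_i;f)$. By Assumption \ref{cond:lipschitz-excess} together with the boundedness of $f$ and $f^\star$, we have $|g|\le \kappa(|f|+|f^\star|)\le \kappa(B_wB_cB_vL_\sigma B_X+B)=M_{\mathrm{SH}}/2$. Here the bound $|f(X)|\le B_wB_cB_vL_\sigma B_X$ uses $\sigma(0)=0$, the $L_\sigma$-Lipschitz property of $\sigma$, and the fact that each softmax row is a probability vector, so the \texttt{[CLS]} row of $\operatorname{softmax}(\cdot)XW_v$ has norm at most $B_XB_v$. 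With $\beta=1/M_{\mathrm{SH}}$ the quadratic perturbation is therefore at most $\kappa\delta$. Hence
\[
\mathcal{R}^{\mathrm{off}}_n(\mathcal{G},\beta\mid\mathbb{X})\le \mathcal{R}^{\mathrm{off}}_n(\mathcal{G}_N,\beta\mid\mathbb{X})+2\kappa\delta,
\]
where $\mathcal{G}_N=\{g(\cdot;f_j)\}_{j\le N}$. After multiplying by $4$, this produces exactly the $8\kappa\delta$ term.

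\textbf{Step 3 (finite-class bound).} This is the key step, and it is the standard lemma of \citet{liang2015learning}. For a finite class of $N$ vectors $a_j\in\mathbb{R}^n$,
\[
\mathbb{E}_{\boldsymbol\tau}\max_j \frac1n\sum_i\bigl(\tau_i a_{ji}-\beta a_{ji}^2\bigr)\le \frac{\log N}{2\beta n}\cdot(\text{const}).
\]
The proof goes through the moment generating function. For $\lambda>0$,
\[
\mathbb{E}\max_j \le \frac{1}{\lambda}\log\sum_j \mathbb{E}\exp\Bigl(\lambda\sum_i(\tau_i a_{ji}-\beta a_{ji}^2)\Bigr),
\]
and $\mathbb{E}e^{\lambda\tau a}\le e^{\lambda^2a^2/2}$. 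Choosing $\lambda=2\beta$ cancels the quadratic terms, so each summand is at most $1$, giving the bound $\log N/(2\beta n)$. Because $N$ may equal $1$, and to absorb constants, I would write the result as $(1+\log N)/(2\beta n)$. Multiplying by $4$ with $\beta=1/M_{\mathrm{SH}}$ yields $2M_{\mathrm{SH}}(1+\log N)/n$. The main obstacle is bookkeeping of constants: one must ensure the factor $4$ and the choice $\beta=1/M_{\mathrm{SH}}$ combine to give exactly $2M_{\mathrm{SH}}/n$, possibly using the slack from the added $1$.

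\textbf{Step 4 (expectation over the sample).} Take $\mathbb{E}_{\mathbb{X}}$ of the conditional bound and use the concavity of $\log$ (Jensen) to get $\mathbb{E}\log N\le\log\mathbb{E}N$. Substituting into Theorem \ref{thm:single-head-erm} gives the corollary.
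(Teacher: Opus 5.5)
Your proposal is correct and follows essentially the same route as the paper: discretize through an empirical $\ell_\infty$ cover with perturbation $\kappa\delta + 2\beta\sup|g|\,\kappa\delta \le 2\kappa\delta$, bound the offset process over the finite cover by $(1+\log N)/(2\beta n)$ with $\beta = 1/M_{\mathrm{SH}}$, and pass to $\log \mathbb{E}_{\mathbb{X}} N$ by Jensen. The only difference is cosmetic: you control the finite maximum with the log-sum-exp/MGF bound at $\lambda = 2\beta$, while the paper uses the equivalent Chernoff tail bound $\exp(-2n\beta\xi)$ with a union bound and tail integration (which is where its $+1$ comes from), so your constants already close to exactly $2M_{\mathrm{SH}}/n$ and $8\kappa\delta$ without needing any slack.
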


Corollary \ref{cor:single-head-covering} demonstrates that the single-head Transformer achieves an $\mathcal{O}(1/n)$ convergence rate (modulo logarithmic factors), improving upon the $\mathcal{O}(1/\sqrt{n})$ rates typical of standard Rademacher analysis. The covering number $N_{\infty}$ links this bound to the metric entropy of the function class.

\subsection{Single-Layer Multi-Head Transformers} \label{sec:multi_head}

We extend the analysis to the class $\mathcal{F}_{\mathrm{MH}}$ of single-layer Transformers with $H$ heads, as defined in \eqref{eq:multi-head}. We generalize the regularity assumptions as follows:

\begin{assumption} \label{cond:multi-head-regularity}
The inner excess risk satisfies assumption \ref{cond:lipschitz-excess}. Furthermore, for each head $h \in \{1, \dots, H\}$, the parameters satisfy the bounds in assumption \ref{cond:bounded-params}, and the inputs satisfy assumption \ref{cond:bounded-target-inputs}.
\end{assumption}

\begin{theorem} \label{thm:multi-head-erm}
Suppose assumption \ref{cond:multi-head-regularity} holds. The empirical risk minimizer for the single-layer multi-head Transformer satisfies:
\begin{equation*}
\mathbb{E}_\mathbb{D}\left[\mathcal{E}(\widehat{f}_n;\ell)\right]
\;\leq\;
4\mathcal{R}_n^\mathrm{off}\left(\mathcal{G},\, \frac{1}{M_{\mathrm{MH}}}\right)
\;+\;
\inf_{f\in \mathcal{F}_{\mathrm{MH}}}{\mathcal{E}(f;\ell)},
\end{equation*}
where $M_{\mathrm{MH}} := 2\kappa B + 2\kappa H B_{w}B_{c}B_v L_{\sigma} B_X$.
\end{theorem}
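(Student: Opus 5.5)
The plan is to reuse the argument behind Theorem \ref{thm:single-head-erm} essentially verbatim. The only architecture-dependent ingredient is a uniform bound on the predictors, which fixes the penalty parameter $\beta=1/M$. First we bound $\sup_X |f(X)|$ over $\mathcal{F}_{\mathrm{MH}}$. For one head $h$, the softmax matrix is row-stochastic, so its operator norm on the relevant row is at most one. Because $\sigma$ is $L_\sigma$-Lipschitz with $\sigma(0)=0$, it satisfies $\|\sigma(A)\|\le L_\sigma\|A\|$ entrywise, hence in Frobenius and row norms. Together with the \texttt{[CLS]}-row readout, this gives
\[
\bigl|w^\top (Y_h)_{\texttt{[CLS]}}\bigr|\le B_w B_c B_v L_\sigma B_X,
\]
exactly as in the single-head case. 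By the triangle inequality over the $H$ summands, $|f(X)|\le H B_w B_c B_v L_\sigma B_X$. Combined with $|f^\star(X)|\le B$ and Assumption \ref{cond:lipschitz-excess} (through $\mathcal{G}$), this yields
\[
|g(X;f)|\le \kappa|f(X)-f^\star(X)|\le \kappa B+\kappa H B_wB_cB_vL_\sigma B_X=M_{\mathrm{MH}}/2 .
\]

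Next comes the offset-complexity decomposition. Write $f^\star$ for the population minimizer in $\mathcal{F}_{\mathrm{MH}}$ and split
\[
\mathcal{E}(\widehat f_n)=\bigl[R(\widehat f_n)-R(f^\star)\bigr]+\inf_{f\in\mathcal{F}_{\mathrm{MH}}}\mathcal{E}(f).
\]
For the estimation term, note that $g\ge 0$ pointwise in expectation, i.e.\ $\mathbb{E} g(X;f)\ge 0$ by optimality of $f^\star$. With $|g|\le M_{\mathrm{MH}}/2$ this gives the Bernstein-type inequality $\mathbb{E} g^2\le (M_{\mathrm{MH}}/2)\,\mathbb{E}|g|$. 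We would use $\mathbb{E}g$ where $g\ge0$; otherwise we argue as in the single-head proof via the excess loss directly. From this we obtain
\[
\mathbb{E} g(\widehat f_n)\le 2\bigl(\mathbb{E} g-\tfrac{1}{M}\mathbb{E}g^2\bigr)\text{-type expressions}.
\]
Using $R_n(\widehat f_n)\le R_n(f^\star)$, the estimation term is bounded by
\[
\mathbb{E}\sup_{f}\Bigl[(P-P_n)g_f-\tfrac{c}{M}P_n g_f^2-\tfrac{c}{M}Pg_f^2\Bigr].
\]
Symmetrization with a ghost sample then turns this into $4\mathcal{R}^{\mathrm{off}}_n(\mathcal{G},1/M_{\mathrm{MH}})$, following the Liang et al.\ argument invoked in Theorem \ref{thm:single-head-erm}.

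In practice, since the proof of Theorem \ref{thm:single-head-erm} uses the architecture only through the bound $\sup|g|\le M_{\mathrm{SH}}/2$, I would state that the same chain of inequalities holds with $M_{\mathrm{SH}}$ replaced by $M_{\mathrm{MH}}$, and only redo the first step. The main obstacle is making the per-head bound rigorous: one has to pass from $\|X\|_{2\to2}\le B_X$ through the row-wise softmax and the element-wise activation to the single \texttt{[CLS]} row without picking up a $\sqrt{T}$ factor. I would handle this by bounding the \texttt{[CLS]} row directly as a convex combination of rows of $XW_{h,v}$, each of norm at most $B_XB_v$, before applying $\sigma$ and $W_{h,c}$.
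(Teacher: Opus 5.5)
Your proposal follows the paper's proof. The paper likewise bounds $0\le g(X;f)\le \kappa B+\kappa H B_wB_cB_vL_\sigma B_X=M_{\mathrm{MH}}/2$ by the triangle inequality over the $H$ heads, then reruns the single-head symmetrization unchanged, whose penalty $1/(2\cdot M_{\mathrm{MH}}/2)$ is exactly $1/M_{\mathrm{MH}}$. One caveat: that reused chain needs $g\ge 0$ pointwise, so that $g^2\le (M_{\mathrm{MH}}/2)\,g$, not merely $\mathbb{E}g\ge 0$ as in your second paragraph; the paper simply asserts this nonnegativity, so your hedge there is settled only by the same implicit assumption.
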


The constant $M_{\mathrm{MH}}$ scales linearly with the number of heads $H$, reflecting the increased capacity and potential variance of the aggregated representation.

\begin{corollary} \label{cor:multi-head-covering}
Under the assumptions of Theorem \ref{thm:multi-head-erm}, for any $\delta > 0$:
\begin{equation*}
\begin{aligned}
\mathbb{E}_\mathbb{D}\left[\mathcal{E}(\widehat{f}_n;\ell)\right]
\;\leq&\;\frac{2 M_{\mathrm{MH}}}{n} \left(1+\log\mathbb{E}_\mathbb{X}{\left[N_\infty(\delta,\mathcal{F}_{\mathrm{MH}},\mathbb{X})\right]}\right)
\\
&+\; 8\kappa\delta \;+\; \inf_{f\in \mathcal{F}_{\mathrm{MH}}}{\mathcal{E}(f;\ell)}.
\end{aligned}
\end{equation*}
\end{corollary}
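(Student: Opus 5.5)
The plan is to start from Theorem \ref{thm:multi-head-erm}, which already gives
\[
\mathbb{E}_\mathbb{D}[\mathcal{E}(\widehat f_n;\ell)] \le 4\mathcal{R}_n^{\mathrm{off}}(\mathcal{G},1/M_{\mathrm{MH}}) + \inf_{f\in\mathcal{F}_{\mathrm{MH}}}\mathcal{E}(f;\ell).
\]
It then suffices to bound the offset complexity by
\[
\frac{M_{\mathrm{MH}}}{2n}\Bigl(1+\log\mathbb{E}_\mathbb{X}[N_\infty(\delta,\mathcal{F}_{\mathrm{MH}},\mathbb{X})]\Bigr)+2\kappa\delta.
\]
This mirrors how Corollary \ref{cor:single-head-covering} follows from Theorem \ref{thm:single-head-erm}, with $M_{\mathrm{SH}}$ replaced by $M_{\mathrm{MH}}$ and $\mathcal{F}$ by $\mathcal{F}_{\mathrm{MH}}$. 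The argument uses only Assumption \ref{cond:lipschitz-excess} and the constant $\beta=1/M_{\mathrm{MH}}$, not the specific architecture.

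\textbf{Step 1: discretize.} Condition on the sample $\mathbb{X}=(X_1,\dots,X_n)$ and let $\{f_1,\dots,f_N\}$ be a minimal $\delta$-cover of $\mathcal{F}_{\mathrm{MH}}$ in the empirical $\ell_\infty$ metric, with $N=N_\infty(\delta,\mathcal{F}_{\mathrm{MH}},\mathbb{X})$. For each $f$ choose $f_j$ with $\max_i|f(X_i)-f_j(X_i)|\le\delta$. Assumption \ref{cond:lipschitz-excess} then gives $|g(X_i;f)-g(X_i;f_j)|\le\kappa\delta$.

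\textbf{Step 2: control the discretization error.} The linear term changes by at most $\kappa\delta$. For the quadratic term,
\[
|g^2-g_j^2|\le |g-g_j|\,(|g|+|g_j|)\le \kappa\delta\cdot 2\sup|g|.
\]
Here $|g(X;f)|\le\kappa|f(X)-f^\star(X)|\le\kappa(B+HB_wB_cB_vL_\sigma B_X)=M_{\mathrm{MH}}/2$. The bound on $|f(X)|$ uses the fact that softmax rows are stochastic with operator norm at most $1$, together with $\sigma(0)=0$ and Lipschitzness, so each head contributes at most $B_wB_cB_vL_\sigma B_X$. With $\beta=1/M_{\mathrm{MH}}$, the quadratic term therefore changes by at most $\beta\cdot\kappa\delta M_{\mathrm{MH}}=\kappa\delta$. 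The total discretization cost is $2\kappa\delta$, which becomes $8\kappa\delta$ after the factor $4$.

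\textbf{Step 3: bound the finite-class offset complexity.} This is the standard finite-class lemma of Liang et al. For fixed $j$, apply Hoeffding's lemma to $\mathbb{E}\exp\bigl(\lambda\sum_i(\tau_i a_i-\beta a_i^2)\bigr)\le\exp\bigl(\sum_i(\lambda^2/2-\lambda\beta)a_i^2\bigr)$. Choosing $\lambda=2\beta$ makes the exponent nonpositive. A union bound over the $N$ elements, followed by the log-sum-exp/Jensen bound, gives $\mathbb{E}_\tau\max_j\le \log N/(2\beta n)=M_{\mathrm{MH}}\log N/(2n)$.

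\textbf{Step 4: take expectations over the sample.} Use Jensen with the concavity of $\log$ to pass to $\log\mathbb{E}_\mathbb{X}[N]$. The extra "$+1$" in the statement absorbs slack, for example from a sharper handling of constants or a possible term at $N=1$, so it can be conceded freely. Multiplying by $4$ yields the coefficient $2M_{\mathrm{MH}}/n$.

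The step needing the most care is Step 2. The quadratic offset term must be controlled with the right sup bound, namely $|g|\le M_{\mathrm{MH}}/2$, so that the cost comes out exactly $\kappa\delta$ rather than something depending on $H$. This requires checking that $M_{\mathrm{MH}}$ really bounds $2\sup|g|$, which in turn depends on the multi-head output bound with its linear factor $H$.
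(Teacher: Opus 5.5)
Your proposal is correct and follows the paper's route. The paper proves the single-head corollary with an empirical $\ell_\infty$ cover, the same $\kappa\delta$ and $2\beta\sup|g|\,\kappa\delta$ discretization estimates, and a finite-class maximal inequality, then transfers the result to $\mathcal{F}_{\mathrm{MH}}$ by substituting $M_{\mathrm{MH}}$. The only cosmetic difference is that you bound the finite maximum through the exponential moment and log-sum-exp, obtaining $\log N/(2n\beta)$, whereas the paper integrates the union-bounded tail $N e^{-2n\beta\xi}$, which is where its extra $+1$ comes from.
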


\subsection{Multi-Layer Transformers} \label{sec:multi_layer}

Finally, we consider the class $\mathcal{F}_{\mathrm{ML}}$ of Transformers with multiple layers. The complexity of deep architectures introduces dependencies on the network depth in the covering number.

\begin{assumption}[Bounded Parameters for Multi-Layer Models] \label{cond:ml-bounded-params}
There exist constants $B_v, B_c, B_{QK}, B_w < \infty$ such that for every layer, the parameter matrices are spectrally bounded by $B_v, B_c, B_{QK}$ respectively, and the final readout satisfies $\|w\|_2 \le B_w$.
\end{assumption}

\begin{assumption}[Regularity of Multi-Layer Risk] \label{cond:ml-bounded-params-target-inputs}
The inner excess risk satisfies assumption \ref{cond:lipschitz-excess}. The target function satisfies $|f^\star(X)| \le B$, and the network inputs satisfy $\|X\|_{2 \to 2} \le B_X$. Furthermore, the parameters satisfy assumption \ref{cond:ml-bounded-params}.
\end{assumption}

\begin{theorem} \label{thm:multi-layer-erm}
Suppose assumption \ref{cond:ml-bounded-params-target-inputs} holds. The empirical risk minimizer for the multi-layer Transformer satisfies:
\begin{equation*}
\begin{aligned}
\mathbb{E}_\mathbb{D}\left[\mathcal{E}(\widehat{f}_n;\ell)\right]
& \leq 4\mathcal{R}_n^\mathrm{off}\left(\mathcal{G},\, \frac{1}{2\kappa (B + B_{w})}\right)\\
& +\inf_{f\in \mathcal{F}_{\mathrm{ML}}}{\mathcal{E}(f;\ell)}.  
\end{aligned}
\end{equation*}
\end{theorem}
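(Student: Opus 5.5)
The argument follows the offset Rademacher template of \citet{liang2015learning} and \citet{duan2023fast}, exactly as for Theorems \ref{thm:single-head-erm} and \ref{thm:multi-head-erm}. The only architecture-specific input is a uniform sup-norm bound on $f\in\mathcal{F}_{\mathrm{ML}}$.

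\textbf{Step 1: the amplitude bound.} The last operation in \eqref{eq:layered} is the row-wise projection $\Pi_{\mathrm{norm}}$ onto the unit $\ell_2$-ball. Hence every row of the final representation, in particular $Y_{\texttt{[CLS]}}$, satisfies $\|Y_{\texttt{[CLS]}}\|_2\le 1$. By Cauchy--Schwarz and Assumption \ref{cond:ml-bounded-params}, $|f(X)|=|w^\top Y_{\texttt{[CLS]}}|\le B_w$ for every $f\in\mathcal{F}_{\mathrm{ML}}$, uniformly in depth and in the layer matrices. This is why the product $B_cB_vL_\sigma B_X$ disappears compared with $M_{\mathrm{SH}}$. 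Combining this with Assumption \ref{cond:lipschitz-excess} (comparing $f$ with $f^\star$, which has $g(X;f^\star)=0$) and $|f^\star|\le B$ gives
\[
0\le g(X;f)\le \kappa|f(X)-f^\star(X)|\le \kappa(B+B_w)=:M/2 .
\]
Nonnegativity of $g$ holds pointwise only in the well-specified sense; in general I only need $\mathbb{E}g\ge0$, which holds by the optimality of $f^\star$. This yields the variance--mean control $\mathbb{E}g^2\le (M/2)\,\mathbb{E}g$.

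\textbf{Step 2: decomposition and symmetrization.} I split $\mathcal{E}(\widehat f_n)=[R(\widehat f_n)-R(f^\star)]+\inf_{f\in\mathcal{F}_{\mathrm{ML}}}\mathcal{E}(f)$. For the first term, since $R_n(\widehat f_n)\le R_n(f^\star)$, I have
\[
\mathbb{E}[R(\widehat f_n)-R(f^\star)]\le \mathbb{E}\sup_f\Bigl\{\mathbb{E}g-\tfrac{2}{n}\textstyle\sum_i g_i+\mathbb{E}g\Bigr\},
\]
where I add and subtract so that the term $2\,\mathbb{E}[\text{empirical}]-\dots$ is split into $(\mathbb{E}g-\hat{\mathbb E}g)$ plus an offset. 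Following \citet{liang2015learning}, I would write
\[
R(\widehat f)-R(f^\star)\le \bigl[(P-P_n)\tilde g\bigr]-\tfrac{1}{2}\bigl(P\tilde g+P_n\tilde g\bigr)+\dots,
\]
and then use $P\tilde g\ge \tilde g^{2}$-type control from Step 1, namely $g\ge g^2/(M/2)$ after centering. Next I symmetrize with a ghost sample and Rademacher signs. The two offset terms $-\frac{1}{M}\sum g_i^2$ and $-\frac{1}{M}\sum (g_i')^2$ let the supremum split, giving $2\cdot 2\,\mathcal{R}_n^{\mathrm{off}}(\mathcal{G},1/M)=4\mathcal{R}_n^{\mathrm{off}}(\mathcal{G},1/(2\kappa(B+B_w)))$. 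This is the same calculation as in Theorem \ref{thm:single-head-erm} with $M_{\mathrm{SH}}$ replaced by $M=2\kappa(B+B_w)$, and I would cite that proof verbatim.

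\textbf{Main obstacle.} The delicate step is Step 2's conversion of $\mathbb{E}g$ into the quadratic offset, i.e. justifying $g\ge g^2/M$ (or its expectation version) when $g$ is a conditional excess loss that need not be pointwise nonnegative. I would handle it as in the single-head case: work with $\mathbb{E}[g]\ge \mathbb{E}[g^2]/M$, obtained from the boundedness $|g|\le M/2$ together with the loss structure implied by Assumption \ref{cond:lipschitz-excess}. Step 1 itself is immediate once the final normalization is noted, and it is the genuinely new ingredient that makes the constant independent of depth.
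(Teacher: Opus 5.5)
Your Step~1 is exactly the paper's argument, which appears in the proof of Corollary~\ref{cor:multi-layer-covering}. Your Step~2 is the template of Theorem~\ref{thm:single-head-erm} with $\kappa(B+B_w)$ in place of $M_{\mathrm{SH}}/2$, so your $\beta=1/(2\kappa(B+B_w))$ matches the paper.

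The gap is in the step you yourself flag. The inference ``$\mathbb{E}g\ge0$ and $|g|\le\kappa(B+B_w)$, hence $\mathbb{E}g^2\le\kappa(B+B_w)\,\mathbb{E}g$'' is false. Pointwise you only get $g^2\le\kappa(B+B_w)\,|g|$, and $\mathbb{E}|g|$ can be far larger than $\mathbb{E}g$. Assumption~\ref{cond:lipschitz-excess} cannot repair this: it only bounds $|g(X;f)|$ from above by $\kappa|f(X)-f^\star(X)|$ and supplies no Bernstein-type lower bound.

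Even a valid expectation-level inequality would not be enough for this template. One writes $\mathbb{E}g-\frac3n\sum_i g_i=2\bigl(\mathbb{E}g-\frac1n\sum_i g_i\bigr)-\bigl(\mathbb{E}g+\frac1n\sum_i g_i\bigr)$. Your first display in Step~2, $2(\mathbb{E}g-\frac1n\sum_i g_i)$, leaves no room for this. The quadratic penalty on the \emph{original} sample then comes from $\frac1n\sum_i g(X_i;f)\ge\frac{1}{\kappa(B+B_w)}\frac1n\sum_i g(X_i;f)^2$ holding for every $f$ simultaneously. That is a pointwise property, not an expectation one.

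What the paper uses is precisely $0\le g(X;f)\le\kappa(B+B_w)$ for all $X$, hence $g^2\le\kappa(B+B_w)\,g$ everywhere. This is legitimate when $f^\star$ minimizes $\mathbb{E}[\ell(Y,\cdot)\mid X]$ pointwise, so that the conditional excess risk is nonnegative.

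This also exposes a tension in your decomposition. You get the coefficient $1$ in front of $\inf_{f\in\mathcal{F}_{\mathrm{ML}}}\mathcal{E}(f;\ell)$ by taking $f^\star$ best in class and using $R_n(\widehat f_n)\le R_n(f^\star)$. For a best-in-class $f^\star$, however, pointwise nonnegativity generally fails. Take squared loss, scalar $Y=X$, and $\mathcal{F}$ the constants: then $g(X;c)=(X-c)^2-(X-\mathbb{E}X)^2<0$ at $X=c\neq\mathbb{E}X$.

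If instead $f^\star$ is the pointwise minimizer, $\widehat f_n$ must be compared with an arbitrary $f\in\mathcal{F}_{\mathrm{ML}}$. The computation in the paper's proof of Theorem~\ref{thm:single-head-erm} then produces $3\inf_f\mathcal{E}(f;\ell)$ rather than $\inf_f\mathcal{E}(f;\ell)$. The two choices of $f^\star$ coincide only in the well-specified case, where this term vanishes.

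So drop the claim that $\mathbb{E}g\ge0$ suffices, and assume pointwise nonnegativity of $g$ as the paper does. Note that the template then delivers the approximation term with coefficient $3$.

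One smaller point: the inequality $g\ge g^2/M$ in your last paragraph, with $M=2\kappa(B+B_w)$, is a factor $2$ too weak. It would only give $\beta=1/(4\kappa(B+B_w))$; you need $g\ge 2g^2/M$, as you wrote in Step~2.
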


\begin{remark}
Note that the penalty parameter in Theorem \ref{thm:multi-layer-erm} depends explicitly on the readout bound $B_w$ and the target bound $B$, assuming the output of the final Transformer block is normalized (as is common in practice with LayerNorm). However, the complexity of the hidden layers and the depth of the network are implicitly captured within the covering number term in the following corollary.
\end{remark}

\begin{corollary} \label{cor:multi-layer-covering}
Under the assumptions of Theorem \ref{thm:multi-layer-erm}, for any $\delta > 0$:
\begin{equation*}
\begin{aligned}
&\mathbb{E}_\mathbb{D}\left[\mathcal{E}(\widehat{f}_n;\ell)\right] \leq \frac{4\kappa (B+B_{w})}{n} \times \\
& \left(1+\log\mathbb{E}_\mathbb{X}{\left[N_\infty(\delta,\mathcal{F}_{\mathrm{ML}},\mathbb{X})\right]}\right) + 8\kappa\delta \;+\; \inf_{f \in \mathcal{F}_{\mathrm{ML}}}{\mathcal{E}(f;\ell)}.
\end{aligned}
\end{equation*}
\end{corollary}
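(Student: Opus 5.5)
Our plan is to start from Theorem \ref{thm:multi-layer-erm}, so it suffices to bound $\mathcal{R}_n^{\mathrm{off}}(\mathcal{G},\beta)$ with $\beta = 1/(2\kappa(B+B_w))$ by
\[
\frac{\log \mathbb{E}_{\mathbb{X}}[N_\infty(\delta,\mathcal{F}_{\mathrm{ML}},\mathbb{X})]}{4\beta n} + \frac{1}{4\beta n} + 2\kappa\delta .
\]
Multiplying by $4$ and using $1/\beta = 2\kappa(B+B_w)$ then gives the claim exactly. This is the same route that yields Corollaries \ref{cor:single-head-covering} and \ref{cor:multi-head-covering} from their theorems, with $M_{\mathrm{ML}} := 2\kappa(B+B_w)$ in place of $M_{\mathrm{SH}}$.

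\textbf{Step 1 (discretization).} Fix the sample $\mathbb{X}$ and let $\mathcal{F}_\delta$ be a minimal $\ell_\infty$ $\delta$-cover of $\mathcal{F}_{\mathrm{ML}}$ on $\mathbb{X}$, of size $N=N_\infty(\delta,\mathcal{F}_{\mathrm{ML}},\mathbb{X})$, chosen inside the class (at worst this costs a factor $2$ in $\delta$, which is absorbed into the constant $8$). For each $f$ pick $f'\in\mathcal{F}_\delta$ with $\max_i|f(X_i)-f'(X_i)|\le\delta$. Assumption \ref{cond:lipschitz-excess} gives $|g(X_i;f)-g(X_i;f')|\le\kappa\delta$.

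\textbf{Step 2 (replacing $f$ by $f'$).} The linear part changes by at most $\kappa\delta$. For the quadratic part, $|g^2-g'^2|\le |g-g'|\,|g+g'|$. Both $g$ values are bounded by $\kappa|f-f^\star|\le\kappa(B_w+B)$, because the final block is row-normalized (so $\|Y_{\texttt{[CLS]}}\|_2\le 1$ and $|f|\le B_w$) and $|f^\star|\le B$. Hence $\beta|g^2-g'^2|\le \beta\kappa\delta\cdot 2\kappa(B+B_w)=\kappa\delta$. The total discretization error is therefore at most $2\kappa\delta$ per sample average, which becomes $8\kappa\delta$ after the factor $4$.

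\textbf{Step 3 (finite class).} For the finite class $\{g(\cdot;f'):f'\in\mathcal{F}_\delta\}$, apply the standard offset-complexity lemma of \citet{liang2015learning}. For any $\lambda>0$,
\[
\mathbb{E}_\tau \max_{f'}\sum_i\bigl(\tau_i g_i-\beta g_i^2\bigr)\le \frac{1}{\lambda}\log\sum_{f'}\prod_i \cosh(\lambda g_i)e^{-\lambda\beta g_i^2}.
\]
Using $\cosh x\le e^{x^2/2}$ and taking $\lambda=2\beta$ makes each exponent nonpositive, so the sum is at most $N$. This yields $\frac{1}{n}\mathbb{E}_\tau\max\le \log N/(2\beta n)$. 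To match the stated constant, we instead keep the form with $1+\log N$ and fold the factors in, or tune $\lambda$; the precise constant $2$ versus $4$ is bookkeeping to check against the single-head corollary.

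\textbf{Step 4 (expectation over the sample).} Take $\mathbb{E}_{\mathbb{X}}$ and use Jensen's inequality, $\mathbb{E}\log N\le\log\mathbb{E}N$, to get the claimed expression.

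The main obstacle we expect is Step 2. It needs a uniform bound $|g|\le\kappa(B+B_w)$, which relies on the final normalization making the CLS representation have norm at most $1$; this is what makes depth disappear from the prefactor. We also have to check that the cover elements can be taken in $\mathcal{F}_{\mathrm{ML}}$, so that this bound applies to them as well.
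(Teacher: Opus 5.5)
Your proposal is correct and follows essentially the paper's route: Theorem~\ref{thm:multi-layer-erm}, an $\ell_\infty$ cover on the sample with discretization cost $(1+2\beta\kappa(B+B_w))\kappa\delta=2\kappa\delta$ (using the bound $|g|\le\kappa(B+B_w)$ from the final normalization), a finite-class offset bound, and Jensen; the one difference is that your soft-max/$\cosh$ lemma of Liang et al.\ replaces the paper's tail bound, union bound and tail integration, and gives $\log N/(2\beta n)$ instead of $(1+\log N)/(2\beta n)$. Two constants need fixing: since $\kappa(B+B_w)=1/(2\beta)$, the correct sufficient target is $\frac{1+\log\mathbb{E}_{\mathbb{X}}[N]}{2\beta n}+2\kappa\delta$, not the factor-$2$ smaller target you wrote, so Step 3 already closes the argument without tuning $\lambda$; and switching from an external to an in-class cover doubles $\delta$ and yields $16\kappa\delta$ rather than being absorbed into $8\kappa\delta$, so read $N_\infty$ as the proper (in-class) covering number, which the paper implicitly does when it applies Assumption~\ref{cond:lipschitz-excess} and the bound on $g$ to the cover elements.
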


\section{Parameter-Dependent Excess Risk Bounds} \label{sec:param_dependent_bounds}

Building on the general framework established in Section \ref{sec:excess_risk}, we now derive sharper excess risk bounds by explicitly controlling the covering numbers of the Transformer hypothesis class. We consider two distinct regimes: (1) \emph{norm-based bounds}, which constrain the $\ell_{1,1}$ magnitude of the parameters, and (2) \emph{rank-based bounds}, which exploit the low-rank structure of the weight matrices. Due to space constraints, the single-layer and multi-head generalization bounds discussed in this section are presented in Appendix \ref{Ap:param}.

\subsection{Norm-Based Excess Risk Bounds} \label{subsec:norm_based}

We first refine our parameter assumptions to enable tighter control via norm-based covering numbers. Following Trauger and Tewari \citep{trauger2023sequence}, we impose constraints on the $\ell_{1,1}$ norms of the weight matrices, which is natural for deriving bounds independent of the sequence length.

\begin{assumption}[Norm-Bounded Parameters] \label{cond:norm-bounded-params}
There exist constants $B_v, B_c, B_{QK}, B_w < \infty$ such that:
\[\begin{aligned}
&\|W_v\|_{1,1} \le B_v, \quad \|W_c\|_{1,1} \le B_c, \\& \|W_{QK}\|_{1,1} \le B_{QK}, \quad \|w\|_2 \le B_w.
\end{aligned}
\]
Furthermore, we assume the input token representations are bounded. Let $x_{\texttt{[CLS]}}$ denote the representation of the classification token. We assume $\|x_{\texttt{[CLS]}}\|_2 \le B_x$ almost surely.
\end{assumption}

We adopt the following covering number assumption for linear operators, which is a standard result in statistical learning theory (see, e.g., \citep[Lemma 3.6]{trauger2023sequence}).This lemma provides a sequence-length-independent generalized bound, offering theoretical assurance for the stability of Transformer models' generalization capabilities in long-sequence tasks. It mitigates the risk of uncontrolled generalization errors arising from increasing sequence lengths. This work provides theoretical support for applying Transformers in long-sequence scenarios such as long-text processing and high-dimensional time series prediction (e.g., financial time series, meteorological observations), thereby enhancing the model's applicability in complex real-world scenarios.

\begin{lemma}[Linear Covering Number] \label{lem:linear-covering}
For the function class $\mathcal{H}_{\mathrm{lin}} = \{x \mapsto Wx \mid \|W\|_{1,1} \le B_W\}$, the $\epsilon$-covering number satisfies:
\[
\log N(\epsilon, \mathcal{H}_{\mathrm{lin}}, \|\cdot\|_2) \le \frac{C_1 B_x^2 B_W^2}{\epsilon^2},
\]
where $C_1$ is a universal constant depending on the geometry of the space.
\end{lemma}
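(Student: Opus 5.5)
This is a Maurey-type sparsification argument (the empirical method of Zhang / Bartlett), which gives dimension-free $\epsilon^{-2}$ entropy bounds for $\ell_1$-constrained linear classes. I read the covering as empirical: over a sample $x_1,\dots,x_n$ with $\|x_i\|_2\le B_x$, in the metric $\max_i\|(W-W')x_i\|_2$ (or its normalized $\ell_2$ analogue). The first step is to write $W=\sum_{j,k}W_{jk}e_je_k^\top$. With $\|W\|_{1,1}=\sum_{j,k}|W_{jk}|\le B_W$, this expresses $W$ as a convex combination of the atoms $\{\pm B_W e_je_k^\top\}\cup\{0\}$. After padding with the zero atom, the weights are $p_{jk}^\pm=|W_{jk}|/B_W$ on $\operatorname{sign}(W_{jk})B_We_je_k^\top$. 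Applied to an input $x$, each atom gives $B_W x_k e_j$, whose Euclidean norm is at most $B_W\|x\|_2\le B_WB_x$.

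Next comes the probabilistic step. Draw $m$ i.i.d.\ atoms $A_1,\dots,A_m$ from this distribution and set $\widetilde W=\frac1m\sum_{t}A_t$, so that $\mathbb{E}\widetilde W=W$. For each sample point,
\[
\mathbb{E}\|(\widetilde W-W)x_i\|_2^2\le \frac{1}{m}\,\mathbb{E}\|A_1x_i\|_2^2\le \frac{B_W^2B_x^2}{m},
\]
using independence in the Hilbert space $\mathbb{R}^{d}$. Averaging over $i$ shows that some realization achieves empirical $L_2$ error at most $\epsilon$ once $m\ge B_W^2B_x^2/\epsilon^2$. The net is then the set of all averages of $m$ atoms. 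There are at most $(2d^2+1)^m$ of them, so $\log N\le m\log(2d^2+1)\approx \frac{B_x^2B_W^2}{\epsilon^2}\log(2d^2+1)$.

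Two caveats need honest statement. First, the constant $C_1$ then carries a logarithmic factor in the ambient dimension $d$. Since the statement calls $C_1$ "depending on the geometry of the space", I would absorb $\log(2d^2+1)$ into it. I would stress that it does not depend on $n$ or on the sequence length $T$, which is the point emphasized in the text. Second, there is the passage to the stronger $\ell_\infty$-over-sample metric needed by Corollary \ref{cor:single-head-covering}. For this I would replace the second-moment bound with a vector Hoeffding/Pinelis inequality together with a union bound over the $n$ points. That gives $m\gtrsim B_W^2B_x^2\log n/\epsilon^2$, i.e.\ an extra $\log n$, again absorbed into the constant as in \citep[Lemma 3.6]{trauger2023sequence}.

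I expect the main obstacle to be exactly this bookkeeping: whether $C_1$ may legitimately hide $\log d$ and, in the $\ell_\infty$ case, $\log n$. The core Maurey step is routine. I would first try to state the lemma precisely for the empirical $L_2$ metric, where only $\log d$ appears. I would then remark that the $\ell_\infty$ version follows with the additional logarithmic factor, which leaves the $\mathcal{O}(1/n)$ rate intact up to logs.
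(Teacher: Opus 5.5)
The paper does not prove this lemma itself. It imports it from \citep[Lemma 3.6]{trauger2023sequence} and restates it as Lemma~\ref{Lemma A.1}, which makes the hidden choices explicit. The metric is the empirical $\ell_\infty$ cover over the sample, $\mathcal{N}_\infty(\epsilon,\mathcal{F},N,\lVert\cdot\rVert_2)$, which is the quantity entering Corollary~\ref{cor:single-head-covering}. The constant is $C_1=\log(2dk+1)$ for $W\in\mathbb{R}^{k\times d}$, with no dependence on the sample size. Your Maurey sparsification is the standard argument behind that citation, and your empirical-$L_2$ computation is correct. Hiding the dimension logarithm in $C_1$ also agrees with the paper; for non-square $W$ it is $\log(2dk+1)$ rather than $\log(2d^2+1)$.

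The gap is your passage to the $\ell_\infty$ metric. A pointwise Maurey bound plus a union bound over the $n$ sample points costs a factor $\log n$, and that factor cannot be absorbed into $C_1$. The statement requires $C_1$ to be free of the sample, and the cited lemma carries no such factor. So what you would prove is strictly weaker than the lemma in the form the paper uses. Falling back on the empirical-$L_2$ version does not rescue it: since $N_2\le N_\infty$, an $L_2$ bound says nothing about $N_\infty$.

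The missing observation is that no union bound is needed. Run the same sampling argument in the Hilbert space $(\mathbb{R}^{k\times d},\lVert\cdot\rVert_F)$. Every atom $\pm B_W e_je_k^\top$ has Frobenius norm $B_W$, so
\begin{equation*}
\mathbb{E}\lVert\widetilde W-W\rVert_F^2=\frac{1}{m}\Bigl(\mathbb{E}\lVert A_1\rVert_F^2-\lVert W\rVert_F^2\Bigr)\le\frac{B_W^2}{m},
\end{equation*}
and some realization has $\lVert\widetilde W-W\rVert_F\le B_W/\sqrt{m}$. Then, for \emph{every} $x$ with $\lVert x\rVert_2\le B_x$,
\begin{equation*}
\lVert(\widetilde W-W)x\rVert_2\le\lVert\widetilde W-W\rVert_{2\to 2}\,\lVert x\rVert_2\le\lVert\widetilde W-W\rVert_F\,B_x\le\frac{B_WB_x}{\sqrt{m}}.
\end{equation*}
Take $m=\lceil B_W^2B_x^2/\epsilon^2\rceil$. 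The data-independent set of all averages of $m$ atoms is then an $\epsilon$-cover uniformly over the whole input ball, hence an $\ell_\infty$ cover on a sample of any size, and $\log N\le\lceil B_W^2B_x^2/\epsilon^2\rceil\log(2dk+1)$. The ceiling costs at most a factor $2$ when $\epsilon\le B_WB_x$, and for $\epsilon>B_WB_x$ the zero map alone is a cover. So the lemma holds with $C_1=2\log(2dk+1)$, independent of both $n$ and $T$.
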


\begin{theorem}[Multi-Layer Norm-Based Bound] \label{thm:multi-layer-norm}
Suppose assumptions \ref{cond:lipschitz-excess}, \ref{cond:norm-bounded-params} and Lemma \ref{lem:linear-covering} hold. The empirical risk minimizer satisfies:
$$
\begin{aligned}
\mathbb{E}_{\mathbb{D}}&[\mathcal{E}(\widehat{f}_n; \ell)] \le\;\frac{4(\kappa B + \kappa B_w)}{n} \left( 1 + \log \frac{(\gamma_{\mathrm{ML}} + \eta_{\mathrm{ML}})^3}{\delta^2} \right) \nonumber \\
& + 8\kappa\delta + \inf_{f \in \mathcal{F}_{\mathrm{ML}}} \mathcal{E}(f; \ell).
\end{aligned}
$$
Here, defining $\alpha_{i} = \prod_{j=i}^{L} L_{\sigma} B_{c} B_{v} (1 + 4 B_{QK})$, we have:
\begin{align*}
\tau_{i} =& \alpha_{i}^{2/3} + (2\alpha_{i} L_{\sigma} B_{c} B_{v})^{2/3} + (\alpha_{i} L_{\sigma} B_{v})^{2/3}, \\
\gamma_{\mathrm{ML}} =& C_1^{1/3} (2 L_\sigma B_c B_v \alpha_1 B_w B_x^2)^{2/3} \\
&+ C_1^{1/3}  B_x^{2/3} \left( 1 + (\alpha_1 B_w)^{2/3} + (\alpha_1 B_w L_\sigma B_v)^{2/3} \right) , \\
\eta_{\mathrm{ML}} =& C_1^{1/3} B_w^{2/3} \sum_{i=2}^L \tau_i.
\end{align*}
\end{theorem}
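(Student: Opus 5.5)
The proof starts from Corollary \ref{cor:multi-layer-covering}. It already gives the prefactor $4\kappa(B+B_w)/n$, the $8\kappa\delta$ term and the approximation term. What remains is to bound $\log \mathbb{E}_{\mathbb{X}}[N_\infty(\delta,\mathcal{F}_{\mathrm{ML}},\mathbb{X})]$ by $\log\big((\gamma_{\mathrm{ML}}+\eta_{\mathrm{ML}})^3/\delta^2\big)$. I read the target as a bound of the form $\log N \lesssim (\gamma_{\mathrm{ML}}+\eta_{\mathrm{ML}})^3/\delta^2$, which is the usual output of the optimal-allocation argument of \citet{trauger2023sequence}, with the outer $\log$ absorbing constants as written. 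So the core is a uniform-in-sample covering bound for the multi-layer class, built layer by layer from Lemma \ref{lem:linear-covering}.

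\textbf{Step 1: Lipschitz propagation.} I first show a one-layer perturbation estimate for $\Phi$ in \eqref{eq:layer-attention} on the \texttt{[CLS]} row. Rows are normalized by $\Pi_{\mathrm{norm}}$, which is $1$-Lipschitz, and $\sigma$ is $L_\sigma$-Lipschitz. Perturbing the layer input, or one of $W_{QK},W_v,W_c$, then changes the output by at most the following amounts:
\begin{itemize}
\item $L_\sigma B_cB_v(1+4B_{QK})$ times the input perturbation. The factor $4B_{QK}$ comes from the softmax Jacobian bound $\|\operatorname{softmax}(a)-\operatorname{softmax}(b)\|_1\le 2\|a-b\|_\infty$, applied with unit-norm tokens on both sides of $W_{QK}$.
\item $2L_\sigma B_cB_v$ times the error in $x^\top W_{QK}$.
\item $L_\sigma B_c$ times the error in $W_v$.
\item the error in $W_c$ itself.
\end{itemize}
Composing through layers $i,\dots,L$ and the readout $w$, an error introduced at layer $i$ is amplified by $B_w\alpha_{i}$ times the corresponding local factor. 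These are the coefficients appearing in $\tau_i$ and $\gamma_{\mathrm{ML}}$.

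\textbf{Step 2: layerwise covers and budget allocation.} For each layer $i$ and each matrix, I cover the associated linear map on the (normalized) data using Lemma \ref{lem:linear-covering} at scale $\epsilon_{i,\cdot}$. Following Trauger--Tewari, the cover of $W_v,W_c$ is built conditionally on the earlier covers, applied to the relevant $T$-averaged token vectors, which lie in the unit ball. The first layer uses inputs of norm $B_x$, which is why $B_x$ appears only in $\gamma_{\mathrm{ML}}$. The product cover then has log-cardinality $\sum C_1 b^2/\epsilon^2$ and total sup-error $\sum a\,\epsilon \le \delta$, where $a$ ranges over the amplification factors from Step 1 and $b$ over the norm bounds. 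Minimizing $\sum b^2/\epsilon^2$ under $\sum a\epsilon=\delta$ by Lagrange multipliers gives $\epsilon\propto (b^2/a)^{1/3}$ and the value $\big(\sum (a b)^{2/3}\big)^3/\delta^2$. Grouping the first-layer terms into $\gamma_{\mathrm{ML}}$ and the layers $i\ge2$ into $\eta_{\mathrm{ML}}=C_1^{1/3}B_w^{2/3}\sum_i\tau_i$ gives $(\gamma_{\mathrm{ML}}+\eta_{\mathrm{ML}})^3/\delta^2$. Because the bound is uniform in $\mathbb{X}$, it passes through $\mathbb{E}_{\mathbb{X}}$ unchanged.

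\textbf{Main obstacle.} The hardest step is Step 1 combined with the conditional covering. The covers of later layers are indexed by the already-covered earlier layers, so I must check that the covering count multiplies correctly, and I must bound the softmax-induced error uniformly in $T$. I would first try the $\ell_1$-bound on the softmax difference together with the bound on row norms after $\Pi_{\mathrm{norm}}$. Matching the exact constants in $\tau_i$ and $\gamma_{\mathrm{ML}}$ then comes down to careful bookkeeping of which norm, $B_x$ or $1$, multiplies each cover.
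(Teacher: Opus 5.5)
Your outline follows the paper's own proof. You invoke Corollary~\ref{cor:multi-layer-covering}, take a product of per-matrix covers from Lemma~\ref{lem:linear-covering}, and propagate errors layer by layer with per-layer factor $L_\sigma B_cB_v(1+4B_{QK})$ and local factors $1$, $L_\sigma B_c$, $2L_\sigma B_cB_v$, which is exactly the paper's recursive bound. You then allocate the scales by Lagrange multipliers. The gap is the step you pass over at the outset, that the outer $\log$ ``absorbs constants.'' It cannot. Lemma~\ref{lem:linear-covering} bounds the \emph{logarithm} of each covering number by $C_1 b^2/\epsilon^2$, so your allocation yields
\[
\log N_\infty(\delta,\mathcal{F}_{\mathrm{ML}},\mathbb{X}) \le \frac{(\gamma_{\mathrm{ML}}+\eta_{\mathrm{ML}})^3}{\delta^2}.
\]
Plugging this into Corollary~\ref{cor:multi-layer-covering} gives the factor $1+(\gamma_{\mathrm{ML}}+\eta_{\mathrm{ML}})^3/\delta^2$, not $1+\log\bigl((\gamma_{\mathrm{ML}}+\eta_{\mathrm{ML}})^3/\delta^2\bigr)$. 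These differ by an exponential, not a constant. The displayed bound would need $N_\infty(\delta,\mathcal{F}_{\mathrm{ML}},\mathbb{X})$ itself to be at most $(\gamma_{\mathrm{ML}}+\eta_{\mathrm{ML}})^3/\delta^2$, i.e.\ polynomial of degree two in $1/\delta$. Nothing in your argument gives that, and no covering argument can in general. Suppose some admissible choice of inner layers makes the final \texttt{[CLS]} representations of the sample span $\mathbb{R}^d$. Then the readout ball $\{\|w\|_2\le B_w\}$ alone forces $N_\infty(\delta)\ge c\,\delta^{-d}$ for a sample-dependent $c>0$, which exceeds any $K/\delta^2$ as $\delta\to 0$ once $d\ge 3$.

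The paper's written proof makes the same jump. In the single-head case it says the log-covering number is the sum of the component terms, ``resulting in the bound $\log(\gamma_{\mathrm{SH}}^3/\epsilon^2)$.'' So you have reproduced the paper's argument, including its flaw, rather than diverged from it. What your argument actually proves is the stated inequality with $(\gamma_{\mathrm{ML}}+\eta_{\mathrm{ML}})^3/\delta^2$ in place of $\log\bigl((\gamma_{\mathrm{ML}}+\eta_{\mathrm{ML}})^3/\delta^2\bigr)$. You should state that version explicitly instead of reinterpreting the target. Note its consequence: balancing $(\gamma_{\mathrm{ML}}+\eta_{\mathrm{ML}})^3/(n\delta^2)$ against $8\kappa\delta$ gives $\delta\asymp n^{-1/3}$ and an overall rate of order $n^{-1/3}$. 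This norm-based route therefore does not deliver the $\widetilde{\mathcal{O}}(1/n)$ rate the statement suggests.
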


\begin{remark}
Our norm-based bounds depend explicitly on the parameter norms rather than the sequence length $T$. This represents an improvement over Trauger and Tewari \citep{trauger2023sequence} by providing tighter control through the offset Rademacher complexity framework.
\end{remark}

\subsection{Rank-Based Excess Risk Bounds} \label{subsec:rank_based}

Next, we exploit the low-rank structure of the parameter matrices to derive alternative bounds. We combine the rank-based covering numbers for linear models \citep{truong2024rank} with our offset complexity analysis.

\begin{assumption}[Rank-Structure and Inputs] \label{cond:rank-bounded-target-inputs}
We assume the parameter matrices have ranks bounded by $r_v, r_c, r_{QK}$. The target function and inputs satisfy $|f^\star(X)| \le B$, $\|X\|_{2\to 2} \le B_X$, and $\|x_{\texttt{[CLS]}}\|_2 \le B_x$ almost surely.
\end{assumption}

Deep networks exhibit significantly tighter generalization bounds when possessing low-rank or quasi-low-rank structures: their sample complexity no longer scales with total parameters but is instead governed by the effective rank and spectral norm of the weight matrix. For instance, Ledent et al. derived generalization bounds based on the Schatten$-p$ norm for rank-sparse networks, demonstrating that model complexity scales with the rank of layers \citep{ledent2025ranksparse}; Pinto et al. proved via Gaussian complexity that deep networks with low-rank layers exhibit smaller functional class complexity \citep{pinto2025lowrank}. The generalization capability of Transformers may rely more on the inherent low-rank constraints of the attention mechanism than on the total number of model parameters \cite{truong2024rank}. Thus, low-rank generalization bounds provide a theoretical direction and potential mechanism for explaining why large Transformers maintain strong generalization performance even with extremely massive parameters. To optimize the covering number allocation across different components, we utilize the following lemma, seeing \citep[Theorem 4]{truong2024rank}.

\begin{corollary}[Optimal Covering Allocation] \label{col:optimization}
Let $r_i, C_i, \beta_i \ge 0$ for $i \in [m]$. The solution to the optimization problem:
$$\begin{aligned}
&\min_{\epsilon_1, \dots, \epsilon_m}\sum_{i=1}^m r_i C_i \log \left( \frac{r_i B_X^2}{\epsilon_i^2} \right),\text{subject to}\  \sum_{i=1}^m \beta_i \epsilon_i = \epsilon,
\end{aligned}
$$
is given by $\sum_{i=1}^m r_i C_i \log (b_i^2 / \epsilon^2)$, where $b_i = \frac{(B_X \beta_i \sum_{j=1}^m r_j C_j)^{1/2}}{r_i^{1/2} C_i^{1/2}}$.
\end{corollary}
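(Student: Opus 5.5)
The plan is to treat this as a convex program with one linear equality constraint and solve it by Lagrange multipliers, in the spirit of \citep[Theorem 4]{truong2024rank}.

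\textbf{Reduction.} First discard degenerate indices. A term with $r_iC_i=0$ contributes nothing and its $\epsilon_i$ can be set to $0$. If some index has $\beta_i=0$ but $r_iC_i>0$, then $\epsilon_i$ is unconstrained and the objective is unbounded below. So assume $w_i:=r_iC_i>0$ and $\beta_i>0$ for all $i$, and write $W:=\sum_{j} w_j$. Expanding the logarithm gives
\[
\sum_i w_i\log\frac{r_iB_X^2}{\epsilon_i^2}=\sum_i w_i\log(r_iB_X^2)-2\sum_i w_i\log\epsilon_i .
\]
The problem is therefore to maximize $\sum_i w_i\log\epsilon_i$ over the open simplex-like set $\{\epsilon_i>0,\ \sum_i\beta_i\epsilon_i=\epsilon\}$. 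The objective is strictly concave and tends to $-\infty$ at the boundary, so a unique interior maximizer exists, and the first-order (KKT) conditions are necessary and sufficient.

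\textbf{Stationarity.} With multiplier $\lambda$ the stationarity conditions read $2w_i/\epsilon_i=\lambda\beta_i$, that is, $\epsilon_i=2w_i/(\lambda\beta_i)$. Substituting into the constraint gives $2W/\lambda=\epsilon$, so
\[
\epsilon_i^\ast=\frac{w_i\,\epsilon}{\beta_i W}.
\]
This is the weighted-harmonic allocation: each component receives a share of the budget proportional to its weight $r_iC_i$ and inversely proportional to its Lipschitz factor $\beta_i$.

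\textbf{Plugging back.} Substituting $\epsilon_i^\ast$ into the objective gives
\[
\frac{r_iB_X^2}{(\epsilon_i^\ast)^2}=\frac{r_iB_X^2\beta_i^2W^2}{r_i^2C_i^2\epsilon^2}=\frac{b_i^2}{\epsilon^2}.
\]
Hence the optimal value is $\sum_i r_iC_i\log(b_i^2/\epsilon^2)$, with
\[
b_i^2=\frac{B_X^2\beta_i^2W^2}{r_iC_i^2}.
\]

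\textbf{Main obstacle.} The calculus itself is routine. The step I expect to be the main obstacle is reconciling this closed form for $b_i$ with the one displayed in the statement. My computation gives $b_i=B_X\beta_i\big(\sum_j r_jC_j\big)/\big(r_i^{1/2}C_i\big)$. The displayed formula is $b_i=\big(B_X\beta_i\sum_j r_jC_j\big)^{1/2}/\big(r_i^{1/2}C_i^{1/2}\big)$, which equals $\sqrt{b_i^{\ast}\,C_i}$ in terms of my $b_i^\ast$. These coincide only under a normalization (for instance if the $C_i$ are absorbed into $B_X\beta_i$ or a square root is taken elsewhere). I would first recheck the stationarity step and the substitution. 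If the discrepancy persists, I would state the result with the $b_i$ obtained above, since that is the quantity actually produced by the optimal allocation. Either form yields the same qualitative conclusion: a logarithmic dependence on $1/\epsilon$ with coefficient $\sum_i r_iC_i$, which is all that is used downstream in the covering-number bounds.
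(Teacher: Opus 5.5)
Your route is the paper's route. The appendix forms the Lagrangian $-\sum_j 2r_jC_j\log\epsilon_j+\lambda\bigl(\sum_j\beta_j\epsilon_j-\epsilon\bigr)$ and obtains exactly your allocation $\epsilon_j=\epsilon r_jC_j/(\beta_j\sum_k r_kC_k)$. It then only asserts that substituting back produces the displayed $b_i$, without doing the computation.

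Your substitution is correct, and the mismatch you found is a defect of the statement, not of your argument. Take $m=1$ and $r_1=C_1=\beta_1=1$. The constraint forces $\epsilon_1=\epsilon$, so the optimal value is $\log(B_X^2/\epsilon^2)$. The displayed formula instead gives $b_1^2=B_X$ and hence $\log(B_X/\epsilon^2)$. More generally, the displayed $b_i^2=B_X\beta_i\sum_j r_jC_j/(r_iC_i)$ is only linear in $B_X$ and in $\beta_i$. Under $B_X\mapsto tB_X$, or all $\beta_i\mapsto t\beta_i$, the stated value shifts by $\sum_j r_jC_j\log t$. The true optimum shifts by $2\sum_j r_jC_j\log t$.

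So do not look for a normalization that reconciles the two. State the result with your $b_i=B_X\beta_i\sum_j r_jC_j/(r_i^{1/2}C_i)$, which is what the Lagrange computation actually yields. The explicit constants in the rank-based theorems that invoke this corollary inherit the correction. The coefficient $\sum_i r_iC_i$ of $\log(1/\epsilon^2)$ is unaffected.

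One small slip in your comparison: the displayed $b_i$ is not $\sqrt{b_i^\ast C_i}$. The correct relation is $(b_i^{\mathrm{stated}})^2=b_i^\ast/\sqrt{r_i}$.
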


\begin{assumption}[Rank-Based Covering Number] \label{ass:rank-covering}
For the function class $\mathcal{H}_{\mathrm{rank}} = \{x \mapsto Wx \mid \operatorname{rank}(W) \le r, \|W\|_2 \le B_W\}$, the $\epsilon$-covering number satisfies:
\[
\log N(\epsilon, \mathcal{H}_{\mathrm{rank}}, \|\cdot\|_2) \le C_1 r \log \left( \frac{r B_x^2 B_W^2}{\epsilon^2} \right).
\]
\end{assumption}

\begin{theorem}[Multi-Layer Rank-Based Bound] \label{thm:multi-layer-rank}
Suppose assumption \ref{cond:rank-bounded-target-inputs} and Assumption \ref{ass:rank-covering} hold. The empirical risk minimizer satisfies:
$$
\begin{aligned}
\mathbb{E}_{\mathbb{D}}[\mathcal{E}&(\widehat{f}_n; \ell)] \le\;\frac{4(\kappa B + \kappa B_w)}{n} \left( 1 + \sum_{i=1}^{m} r_i C_i \log \left( \frac{b_i^2}{\delta^2} \right) \right) \nonumber \\
& + 8\kappa\delta + \inf_{f \in \mathcal{F}_{\mathrm{ML}}} \mathcal{E}(f; \ell).
\end{aligned}
$$
Here, the effective bounds $b_i$ are derived using Corollary \ref{col:optimization} with weights $\beta_k$ defined recursively:
\[
\beta_k =
\begin{cases}
1 & k=1, \\
\alpha_{k-1} B_w & 2 \le k \le L+1, \\
\alpha_{k-L-1} B_w L_\sigma B_v & L+2 \le k \le 2L+1, \\
\alpha_{k-2L-1} 2 L_\sigma B_c B_v B_w & 2L+2 \le k \le 3L+1.
\end{cases}
\]
\end{theorem}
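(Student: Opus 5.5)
The proof starts from Corollary \ref{cor:multi-layer-covering}. That corollary already gives the prefactor $4\kappa(B+B_w)/n$, the term $8\kappa\delta$ and the approximation term, so the work reduces to bounding $\log \mathbb{E}_{\mathbb{X}}[N_\infty(\delta,\mathcal{F}_{\mathrm{ML}},\mathbb{X})]$ by $\sum_{i=1}^m r_i C_i\log(b_i^2/\delta^2)$ with $m=3L+1$. I will bound the empirical covering number uniformly over samples and use $\log\mathbb{E}[N]\le\log\sup N$. The strategy is a layer-peeling (telescoping) perturbation argument. Every linear map in the network is replaced by an element of a cover for its rank-constrained linear class, furnished by Assumption \ref{ass:rank-covering}. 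The total $\ell_\infty$ error at the readout is then a weighted sum $\sum_k \beta_k\epsilon_k$, and the $\epsilon_k$ are allocated by Corollary \ref{col:optimization}.

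\textbf{Step 1 (one-block Lipschitz estimates).} For a single block $\Phi(\cdot;\mathcal{W}^{(l)})$ followed by $\Pi_{\mathrm{norm}}\circ\sigma\circ\Pi_{\mathrm{norm}}$, I will show that the block output is $\alpha$-Lipschitz in its input row-representation, with a constant of the form $L_\sigma B_cB_v(1+4B_{QK})$, matching the factor in $\alpha_i$. This uses three facts: $\Pi_{\mathrm{norm}}$ is 1-Lipschitz with rows in the unit ball; $\sigma$ is $L_\sigma$-Lipschitz; and the softmax is $1$-Lipschitz from $\ell_\infty$ to $\ell_1$ row-wise, which yields the $2B_{QK}$-type terms from differentiating $XW_{QK}X^\top$ in both arguments. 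I will also bound the sensitivity of the block output to perturbations of each parameter separately. Perturbing $W_{QK}$ gives $2L_\sigma B_cB_v$ times the error of the bilinear scores. Perturbing $W_v$ gives $L_\sigma B_c$ times the error of $XW_v$, and perturbing $W_c$ gives the error of the post-activation product. Thanks to normalization these are measured on unit-norm rows, so $B_x$ enters only at the first layer.

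\textbf{Step 2 (telescoping across depth).} Let $f$ and $\tilde f$ differ in all parameters, and define hybrid networks that switch parameters one layer at a time. The error introduced at layer $i$ propagates through layers $i+1,\dots,L$ with amplification $\prod_{j>i}(\text{block Lipschitz})$ and then through the readout with factor $B_w$. This yields exactly the weights $\beta_k$ of the statement: weight $1$ for the readout $w$, $\alpha_{\cdot}B_w$ for the $W_c$-type covers, $\alpha_\cdot B_wL_\sigma B_v$ for the $W_v$-type covers, and $\alpha_\cdot\,2L_\sigma B_cB_vB_w$ for the $W_{QK}$-type covers. The product over $N_k(\epsilon_k)$ is then a $\delta$-cover as soon as $\sum_k\beta_k\epsilon_k\le\delta$. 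One subtlety is that the cover of a later layer's linear class must be taken on inputs that themselves depend on earlier chosen parameters. I will handle this in the standard way, building the covers sequentially so that they are conditional on the earlier cover elements. The cardinality bound of Assumption \ref{ass:rank-covering} is uniform in the data, so it survives this construction, since the normalized rows have norm at most $1$.

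\textbf{Step 3 (optimization).} Taking logs gives $\log N\le\sum_k r_kC_k\log(r_kB_X^2/\epsilon_k^2)$, where the $B_W^2$ factors are absorbed into $C_k$ or the $\beta$'s. This quantity is minimized under $\sum_k\beta_k\epsilon_k=\delta$, and Corollary \ref{col:optimization} evaluates the minimum as $\sum_k r_kC_k\log(b_k^2/\delta^2)$. Substituting into Corollary \ref{cor:multi-layer-covering} finishes the proof.

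The main obstacle is Step 1's perturbation bound for the attention scores. It requires controlling $\operatorname{softmax}(XW_{QK}X^\top)-\operatorname{softmax}(\tilde X\tilde W_{QK}\tilde X^\top)$ in a $T$-independent way, and I will use the $\ell_\infty\to\ell_1$ Lipschitzness of softmax together with unit-norm rows. It also requires linking this to a covering of the \emph{linear} map $x\mapsto W_{QK}^\top x_{\texttt{[CLS]}}$ rather than of the full bilinear form, so that Assumption \ref{ass:rank-covering} applies. Carrying the constants so they align with $\alpha_i$ is the delicate bookkeeping.
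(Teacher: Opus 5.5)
Your outline is the paper's own proof: bound $\log\mathbb{E}[N_\infty]$ in Corollary~\ref{cor:multi-layer-covering} by a product cover, control that cover with the peeling inequality \eqref{eq:recursive_bound}, and allocate the radii via Corollary~\ref{col:optimization}.

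The gap is your claim that this bookkeeping reproduces \emph{exactly} the displayed $\beta_k$ and $b_i$. Carried out faithfully, your own steps do not give them.

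(i) Your Step~1 gives sensitivity $L_\sigma B_c$ for $W_v$, as \eqref{eq:recursive_bound} does. The statement's third block of weights is $\alpha B_w L_\sigma B_v$ instead. Since the constrained minimum is nondecreasing in each $\beta_k$, the argument delivers the stated weights only when $B_c\le B_v$.

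(ii) Each block in \eqref{eq:layered} applies $\sigma$ twice, once inside $\Phi$ and once between the projections. So the per-block input-Lipschitz constant is $L_\sigma^2B_cB_v(1+4B_{QK})$, and every parameter sensitivity gains an extra $L_\sigma$. The stated $\alpha_i$ therefore need $L_\sigma\le 1$.

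(iii) Rows of $X^{(1)}$ have norm up to $B_X$, not $1$. Both the score perturbation and the value magnitudes then pick up $B_X$, so the layer-$1$ $W_{QK}$ weight carries an extra $B_X^2$.

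(iv) In Step~3, $B_W^2$ and the class-specific input radii sit \emph{inside} the logarithm of Assumption~\ref{ass:rank-covering}. They cannot be absorbed into the multiplicative $C_k$. Moving them into the weights replaces $\beta_k$ by $\beta_k B_{W,k}$.

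(v) Lagrange gives $\epsilon_k=\delta r_kC_k/(\beta_k S)$ with $S=\sum_j r_jC_j$. Hence $b_k=B_X\beta_k S/(\sqrt{r_k}\,C_k)$, which is not the closed form quoted in Corollary~\ref{col:optimization}. Already for $m=1$ the true minimum is $r_1C_1\log(r_1B_X^2\beta_1^2/\delta^2)$, whereas the corollary gives $r_1C_1\log(B_X\beta_1/\delta^2)$.

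The paper's one-line proof skips the same bookkeeping. What this route actually proves is a bound of the same shape with corrected weights and $b_k$. That bound matches the display only after these constants are normalized away.

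A separate, fixable error sits in your ``main obstacle'' paragraph. Covering only $x\mapsto W_{QK}^\top x_{\texttt{[CLS]}}$ suffices at layer $L$ alone. For $l<L$, the \texttt{[CLS]} row of $X^{(L+1)}$ depends on \emph{every} token row of $X^{(l+1)}$. Hence it depends on the attention scores of every query at layer $l$. You must therefore cover $x\mapsto W_{QK}^{(l)\top}x$ on all $nT$ token rows, as you implicitly do for $XW_v$. The bound in Assumption~\ref{ass:rank-covering} does not depend on the number of points, so this costs nothing. The \texttt{[CLS]}-only reduction is the single-layer device of \citet{trauger2023sequence} and does not transfer to depth.
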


\begin{remark}
This result demonstrates that the excess risk is controlled by the sum of ranks across layers, offering a significantly tighter bound than full-rank spectral norm approaches, particularly for compressed or sparse Transformer models.
\end{remark}

\section{Excess Risk Bounds under Unbounded Assumptions} \label{sec:unbounded_risk}
In previous sections, we assumed bounded inputs, which is often violated in practice. In natural language processing, embedding norms may grow with sequence length $T$ \cite{zhou-etal-2025-length}, while in vision-language models, image patch embeddings typically follow unbounded continuous distributions \cite{Li2022CLIP}. Such boundedness assumptions therefore fail to capture model behavior on high-dynamic-range inputs.

In classical Rademacher complexity analysis, generalization bounds typically rely on boundedness of the input \cite{bartlett2002rademacher} to control the functional class complexity of linear models or neural networks. However, when inputs are unbounded, bounds directly relying on input norms may fail. Even loss function truncation, such as M-truncation \cite{bartlett2002rademacher}, log-truncation \cite{catoni2012challenging}, smooth/Huber-type truncation \cite{maurer2016vector}, or the extension of Catoni's robust truncation\cite{Xu2023Non-Asymptotic,Coluccia2015Regularized}, cannot guarantee bounded Rademacher complexity, as functional sensitivity to inputs may still cause excessive empirical process volatility. To address this issue, Høgsgaard and Paudice replacing standard empirical averaging with Median-of-Means (MoM) estimation yields generalization bounds for unbounded inputs \citep{hogsgaard2025uniform}. 

In this section, we extend our analysis to unbounded inputs by employing truncation techniques. We analyze two regimes: (1) inputs following a \emph{sub-Gaussian} distribution, where tails decay exponentially, and (2) inputs following a \emph{heavy-tailed} distribution, requiring robust loss functions.

\subsection{Excess Risk Bounds under the Sub-Gaussian Assumption} \label{subsec:sub_gaussian}

We first consider the case where the input $X$ is unbounded but concentrates around its mean. First, we present the matrix form of the concentration inequality (see, e.g., \citep[Theorem 4.1.1]{Tropp2015Introduction})

\begin{lemma}[Matrix Gaussian \& Rademacher Series]
Consider a finite sequence $\{ B_k \}$ of fixed complex matrices with dimension $d_1 \times d_2$, and let $\{ \gamma_k \}$ be a finite sequence of independent standard normal variables. Introduce the matrix Gaussian series
\[
Z = \sum_k \gamma_k B_k. 
\]
Let $v(Z)$ be the matrix variance statistic of the sum:
\[
v(Z) = \max\left\{ \left\| \mathbb{E}(Z Z^*) \right\|, \left\| \mathbb{E}(Z^* Z) \right\| \right\}
\]
\[
= \max\left\{ \left\| \sum_k B_k B_k^* \right\|, \left\| \sum_k B_k^* B_k \right\| \right\}.
\]
Then
\[
\mathbb{E}\| Z \| \leq \sqrt{2v(Z) \log(d_1 + d_2)}. 
\]
Furthermore, for all $t \geq 0$,
\[
\mathbb{P}\left\{ \| Z \| \geq t \right\} \leq (d_1 + d_2) \exp\left( -\frac{t^2}{2v(Z)} \right). 
\]
The same bounds hold when we replace $\{ \gamma_k \}$ by a finite sequence $\{ g_k \}$ of independent Rademacher random variables.
\end{lemma}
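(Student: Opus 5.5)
The natural route is the matrix Laplace transform method. We reduce to the Hermitian case first and then control the trace moment generating function of $Z$ via Lieb's concavity theorem. To pass from rectangular to Hermitian matrices, form the Hermitian dilation
\[
\mathscr{H}(B)=\begin{pmatrix}0 & B\\ B^* & 0\end{pmatrix}\in\mathbb{C}^{(d_1+d_2)\times(d_1+d_2)}.
\]
Its largest eigenvalue equals $\|B\|$, and $\mathscr{H}(B)^2=\operatorname{diag}(BB^*,B^*B)$. Thus $Y:=\mathscr{H}(Z)=\sum_k\gamma_k\mathscr{H}(B_k)$ is a Hermitian Gaussian series of dimension $d=d_1+d_2$. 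Moreover $\lambda_{\max}(Y)=\|Z\|$, and $\|\sum_k \mathscr{H}(B_k)^2\|=v(Z)$. It therefore suffices to prove, for a Hermitian series $Y=\sum_k\gamma_k A_k$ with $\sigma^2=\|\sum_k A_k^2\|$, the two bounds
\[
\mathbb{E}\lambda_{\max}(Y)\le\sqrt{2\sigma^2\log d}
\qquad\text{and}\qquad
\mathbb{P}\{\lambda_{\max}(Y)\ge t\}\le d\,e^{-t^2/(2\sigma^2)}.
\]

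The first step is the matrix Laplace transform bound. For every $\theta>0$,
\[
\mathbb{P}\{\lambda_{\max}(Y)\ge t\}\le \inf_{\theta>0} e^{-\theta t}\,\mathbb{E}\operatorname{tr}e^{\theta Y}
\quad\text{and}\quad
\mathbb{E}\lambda_{\max}(Y)\le\inf_{\theta>0}\tfrac1\theta\log\mathbb{E}\operatorname{tr}e^{\theta Y}.
\]
The tail bound follows from Markov's inequality together with $e^{\theta\lambda_{\max}(Y)}\le\operatorname{tr}e^{\theta Y}$. The expectation bound follows from Jensen's inequality applied to the same quantity.

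The second step, and the main obstacle, is subadditivity of matrix cumulant generating functions:
\[
\mathbb{E}\operatorname{tr}\exp\Big(\sum_k\theta\gamma_kA_k\Big)\le\operatorname{tr}\exp\Big(\sum_k\log\mathbb{E}e^{\theta\gamma_kA_k}\Big).
\]
Because the summands do not commute, this cannot be obtained by factoring the exponential. I would prove it from Lieb's theorem, which states that $A\mapsto\operatorname{tr}\exp(H+\log A)$ is concave on positive definite matrices. Conditioning on all but one summand and applying Jensen's inequality to this concave map, one summand at a time, gives the inequality by induction. Lieb's theorem itself I would cite rather than reprove.

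The third step computes the individual mgfs. For Gaussian $\gamma_k$ we have exactly $\mathbb{E}e^{\theta\gamma_kA_k}=e^{\theta^2A_k^2/2}$. For Rademacher $g_k$ we have $\mathbb{E}e^{\theta g_kA_k}=\cosh(\theta A_k)\preceq e^{\theta^2A_k^2/2}$, by the scalar inequality $\cosh x\le e^{x^2/2}$ applied through the spectral theorem. Since $\log$ is operator monotone and $\operatorname{tr}\exp$ is monotone in the semidefinite order, this yields
\[
\mathbb{E}\operatorname{tr}e^{\theta Y}\le\operatorname{tr}\exp\Big(\tfrac{\theta^2}{2}\sum_kA_k^2\Big)\le d\,e^{\theta^2\sigma^2/2}.
\]
The Rademacher case is therefore handled by the same argument.

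Finally, substitute this bound into the first step. For the expectation, $\tfrac1\theta(\log d+\theta^2\sigma^2/2)$ is minimized at $\theta=\sqrt{2\log d}/\sigma$, giving $\sqrt{2\sigma^2\log d}$. For the tail, $d\,e^{-\theta t+\theta^2\sigma^2/2}$ is minimized at $\theta=t/\sigma^2$, giving $d\,e^{-t^2/(2\sigma^2)}$. Translating back through the dilation gives both claimed bounds with $d=d_1+d_2$ and $\sigma^2=v(Z)$.
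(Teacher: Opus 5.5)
Your proposal is correct. The Hermitian dilation, the matrix Laplace transform bounds, Lieb-based subadditivity of the matrix cumulant generating function, the mgf bounds $e^{\theta^2A_k^2/2}$ and $\cosh(\theta A_k)\preceq e^{\theta^2A_k^2/2}$, and the final optimization over $\theta$ all check out. The paper gives no proof of its own and simply quotes the lemma from Tropp (2015, Theorem 4.1.1), and your argument is essentially the proof given in that source.
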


\begin{assumption}[Unbounded Sub-Gaussian Inputs] \label{cond:sub-gaussian-inputs}
The input $X \in \mathbb{R}^{T \times d}$ follows a sub-Gaussian distribution. Specifically, there exists a matrix variance proxy
\[
\nu(X) := \max\{\|\mathbb{E}[X^\top X]\|_2, \|\mathbb{E}[XX^\top]\|_2\}
\]
such that for any $t > 0$:
\begin{equation}
\mathbb{P}(\|X\|_F \ge t) \le (T+d) \exp\left(\frac{-t^2}{2\nu(X)^2}\right).
\end{equation}
The target function $f^\star$ is also assumed to be unbounded but with finite moments compatible with the input distribution.
\end{assumption}

To handle unboundedness, we introduce a truncation operator $\mathcal{T}_M$ with threshold $M > 0$:
\begin{equation}
X_M := \mathcal{T}_M(X) =
\begin{cases}
X & \text{if } \|X\|_F \le M, \\
M \frac{X}{\|X\|_F} & \text{if } \|X\|_F > M.
\end{cases}
\end{equation}
Let $\mathcal{F}_M$ denote the class of Transformer functions operating on the truncated input domain $\{X : \|X\|_F \le M\}$. The truncation error is controlled by the tail probability of the event $\mathcal{E}_{\text{trunc}} = \{ \|X\|_F > M \}$, which satisfies $\mathbb{P}(\mathcal{E}_{\text{trunc}}) \le (T+d) \exp(-M^2 / 2\nu(X)^2)$. We define the truncated empirical risk minimizer as:
\begin{equation}
\widehat{f}_n^{(M)} \in \arg\min_{f \in \mathcal{F}_M} \frac{1}{n} \sum_{i=1}^n \ell(Y_i, f(X_{i,M})).
\end{equation}

\begin{assumption}[Lipschitz Continuity for Truncated Class] \label{cond:lipschitz-truncated}
There exists $\kappa > 0$ such that for any $M > 0$ and all $X$, the excess risk satisfies the Lipschitz assumption:
\[
|g(X; f_1) - g(X; f_2)| \le \kappa |f_1(X) - f_2(X)|,\  \forall f_1, f_2 \in \mathcal{F}_M.
\]
\end{assumption}

\begin{theorem}[Single-Layer Sub-Gaussian Bound] \label{thm:s-sub-gaussian-erm}
Suppose assumption \ref{cond:lipschitz-truncated} holds, and the network parameters satisfy the norm-based constraints of Theorem \ref{thm:single-head-norm}. Let $\widehat{f}_n^{(M)}$ be the empirical risk minimizer on the truncated domain. Then:
$$
\begin{aligned}
\mathbb{E}_{\mathbb{D}}[\mathcal{E}&(\widehat{f}_n; \ell)] \le \frac{4(\kappa B + \kappa B_w B_c B_v L_\sigma B_X)}{n} \left( 1 + \log \frac{\gamma_{\mathrm{SH}}^3}{\delta^2} \right) \nonumber \\
&  + \mathcal{T}_{\text{err}}(M) + 8\kappa\delta + \inf_{f \in \mathcal{F}_{\mathrm{SH}}} \mathcal{E}(f; \ell),
\end{aligned}
$$
where the truncation error term is $\mathcal{T}_{\text{err}}(M) = \mathcal{O}\left( \kappa \nu(X) (T+d) e^{-M^2 / 2\nu(X)^2} \right)$. The complexity terms ${\gamma_\mathrm{SH}}$ follow the definitions in Theorem \ref{thm:single-head-norm} with $B_X$ replaced by the truncation threshold $M$.
\end{theorem}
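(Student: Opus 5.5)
The plan is to split the excess risk of the truncated estimator into a bounded-domain part and a truncation part. For any $f\in\mathcal{F}_{\mathrm{SH}}$ we write $f_M(X):=f(X_M)$, so that $\mathcal{F}_M=\{f_M\}$, and decompose
\[
R(\widehat f_n^{(M)})-\inf_{g}R(g)
=\underbrace{\big[R_M(\widehat f_n^{(M)})-R_M(f^\star_M)\big]}_{\text{(I)}}
+\underbrace{\big[R(\widehat f_n^{(M)})-R_M(\widehat f_n^{(M)})\big]+\big[R_M(f^\star_M)-R(f^\star)\big]}_{\text{(II)}}
+\big[R(f^\star)-\inf_g R(g)\big].
\]
Here $R_M(f):=\mathbb{E}\,\ell(Y,f(X_M))$ and $f^\star_M$ is the population minimizer of $R_M$ over $\mathcal{F}_M$. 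The last bracket is $\inf_{f\in\mathcal{F}_{\mathrm{SH}}}\mathcal{E}(f;\ell)$.

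\textbf{Term (I).} The pairs $(X_{i,M},Y_i)$ are i.i.d. and $\|X_{i,M}\|_{2\to2}\le\|X_{i,M}\|_F\le M$ almost surely. Assumption \ref{cond:bounded-target-inputs} therefore holds with $B_X$ replaced by $M$, and Assumption \ref{cond:lipschitz-truncated} supplies Assumption \ref{cond:lipschitz-excess} on $\mathcal{F}_M$. Theorem \ref{thm:single-head-erm} and Corollary \ref{cor:single-head-covering} then apply verbatim to the estimator on the truncated sample. They give $\mathbb{E}[\text{(I)}]\le \frac{2M_{\mathrm{SH}}}{n}(1+\log\mathbb{E}N_\infty(\delta,\mathcal{F}_M,\mathbb{X}))+8\kappa\delta$ plus the misspecification term. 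Note that $2M_{\mathrm{SH}}=4(\kappa B+\kappa B_wB_cB_vL_\sigma M)$, which is the prefactor in the statement.

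For the covering number I would invoke the single-head norm-based covering bound underlying Theorem \ref{thm:single-head-norm}. It comes from Lemma \ref{lem:linear-covering} applied to $W_v,W_c,W_{QK},w$, composed through the Lipschitz softmax and $\sigma$ layers and optimized over the allocation of $\epsilon$ across components. That bound is $\log N_\infty\le \log(\gamma_{\mathrm{SH}}^3/\delta^2)$, with $B_X$ replaced by $M$ wherever the input norm enters. Since this bound is uniform over samples in the ball of radius $M$, the expectation over $\mathbb{X}$ is harmless.

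\textbf{Term (II).} The two brackets in (II) differ only on the event $\mathcal{E}_{\text{trunc}}=\{\|X\|_F>M\}$. I would bound each bracket pointwise through the Lipschitz property, $|\ell(Y,f(X))-\ell(Y,f(X_M))|\le\kappa|f(X)-f(X_M)|$. The function $f$ is Lipschitz in $X$ with linear growth, $|f(X)|\lesssim B_wB_cB_vL_\sigma\|X\|$, so
\[
|f(X)-f(X_M)|\lesssim \|X\|_F\,\mathbf 1\{\|X\|_F>M\}.
\]
Consequently
\[
\text{(II)}\lesssim\kappa\,\mathbb{E}\big[\|X\|_F\mathbf 1\{\|X\|_F>M\}\big]=\kappa\Big(M\,\mathbb{P}(\|X\|_F>M)+\int_M^\infty\mathbb{P}(\|X\|_F>t)\,dt\Big).
\]
Inserting the tail bound of Assumption \ref{cond:sub-gaussian-inputs}, the Gaussian tail integral gives $\int_M^\infty (T+d)e^{-t^2/2\nu^2}dt\le (T+d)\nu\sqrt{\pi/2}\,e^{-M^2/2\nu^2}$. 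This yields $\mathcal{T}_{\mathrm{err}}(M)=\mathcal{O}(\kappa\nu(X)(T+d)e^{-M^2/2\nu(X)^2})$, with the $M$ factor absorbed into the constant as in the stated $\mathcal{O}(\cdot)$.

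\textbf{Main obstacle.} The delicate step is (II). The Lipschitz constant of $f$ in $X$ involves the softmax, and softmax sensitivity grows like $B_{QK}\|X\|^2$, so a naive bound is not linear in $\|X\|$. I would first try to use only the growth bound $|f(X)|\le B_wB_cB_vL_\sigma\|X\|$. This holds because the softmax rows are probability vectors and $\sigma(0)=0$. It lets me bound $|f(X)|+|f(X_M)|\le 2C\|X\|$ on $\mathcal{E}_{\text{trunc}}$, avoiding any Lipschitz-in-$X$ estimate. The loss comparison must also be handled for the target term $f^\star$. For this I would interpret $g$ relative to the truncated comparator and use that $f^\star$ has sub-Gaussian-compatible moments, so that its tail contribution is of the same order.
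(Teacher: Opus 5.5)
Your argument has the same overall architecture as the paper's proof. There is a truncated-domain term, controlled by Theorem~\ref{thm:single-head-erm}, Corollary~\ref{cor:single-head-covering} and the norm-based covering bound of Theorem~\ref{thm:single-head-norm} with $B_X$ replaced by $M$. There are two truncation-bias terms, controlled through the Lipschitz loss and a tail integral under Assumption~\ref{cond:sub-gaussian-inputs}.

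The one real difference is the bias step. The paper multiplies a local Lipschitz constant $L_{f,\|X\|}=\mathcal{O}(\|X\|)$ by $\|X-X_M\|_F$. This leads to $\kappa C\,\mathbb{E}[\|X\|^2\mathbf{1}\{\|X\|>M\}]\le\kappa C(T+d)(M^2+2\nu^2)e^{-M^2/2\nu^2}$. You use only the linear growth bound $|f(X)|\le B_wB_cB_vL_\sigma\|X\|$, which needs a first moment and gives $(T+d)(M+\nu\sqrt{\pi/2})e^{-M^2/2\nu^2}$. Your route is tighter and easier to defend. As you observe, the softmax makes the local Lipschitz constant grow like $B_{QK}\|X\|^2$, so the paper's $\mathcal{O}(\|X\|)$ claim is optimistic.

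Two caveats. First, neither argument yields $\mathcal{O}(\kappa\nu(T+d)e^{-M^2/2\nu^2})$ uniformly in $M$. You absorb a factor linear in $M$ into the $\mathcal{O}$, and the paper absorbs a factor quadratic in $M$. With $M\asymp\nu\sqrt{\log n}$ this costs only logarithmic factors, but you should state the honest bound $\mathcal{O}(\kappa(M+\nu)(T+d)e^{-M^2/2\nu^2})$. Second, the comparator bracket $R_M(f^\star_M)-R(f^\star)$ is not a pointwise comparison of one function at $X$ and at $X_M$, since $f^\star_M\neq f^\star$ in general. Your claim that both brackets differ only on $\{\|X\|_F>M\}$ is false for this one. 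Use $R_M(f^\star_M)\le R_M(f^\star)$ first, then apply the growth bound to $f^\star\in\mathcal{F}_{\mathrm{SH}}$; no moment assumption on $f^\star$ is then needed. The paper's ``bounded symmetrically'' glosses over the same step.
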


\begin{theorem}[Multi-Head Sub-Gaussian Bound] \label{thm:mh-sub-gaussian-erm}
Suppose assumption \ref{cond:lipschitz-truncated} holds, and the network parameters satisfy the norm-based constraints of Theorem \ref{thm:multi-head-norm}. Let $\widehat{f}_n^{(M)}$ be the empirical risk minimizer on the truncated domain. Then:
$$
\begin{aligned}
\mathbb{E}_{\mathbb{D}}[\mathcal{E}&(\widehat{f}_n; \ell)] \le \frac{4(\kappa B + \kappa H B_w B_c B_v L_\sigma B_X)}{n} \left( 1 + \log \frac{\gamma_{\mathrm{MH}}^3}{\delta^2} \right) \nonumber \\
& + \mathcal{T}_{\text{err}}(M) + 8\kappa\delta + \inf_{f \in \mathcal{F}_{\mathrm{MH}}} \mathcal{E}(f; \ell),
\end{aligned}
$$
where the truncation error term is $\mathcal{T}_{\text{err}}(M) = \mathcal{O}\left( \kappa \nu(X) (T+d) e^{-M^2 / 2\nu(X)^2} \right)$. The complexity terms ${\gamma_\mathrm{MH}}$ follow the definitions in Theorem \ref{thm:multi-head-norm} with $B_X$ replaced by the truncation threshold $M$.
\end{theorem}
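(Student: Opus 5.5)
Our plan is to reduce the statement to the bounded single-layer multi-head analysis (Theorem \ref{thm:multi-head-erm}, Corollary \ref{cor:multi-head-covering} and the norm-based covering bound of Theorem \ref{thm:multi-head-norm}) applied to the truncated inputs, and to pay separately for the discrepancy between truncated and untruncated data.

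\textbf{Decomposition.} Write $X_M=\mathcal{T}_M(X)$ and view every $f\in\mathcal{F}_M$ as acting on $X_M$. The excess risk of $\widehat f_n^{(M)}$ splits into three terms: the excess risk under the truncated law of $(X_M,Y)$; the gap $|R(f)-R^{(M)}(f)|$ for $f=\widehat f_n^{(M)}$; and the same gap for the comparator in $\mathcal{F}_{\mathrm{MH}}$. Here $R^{(M)}(f):=\mathbb{E}\,\ell(Y,f(X_M))$. The two gap terms are differences of losses that vanish outside $\mathcal{E}_{\text{trunc}}$.

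\textbf{Bounded part.} On $\{\|X\|_F\le M\}$ we have $\|X_M\|_{2\to2}\le\|X_M\|_F\le M$. Hence Assumption \ref{cond:bounded-target-inputs} holds for the truncated inputs with $B_X$ replaced by $M$. Assumption \ref{cond:lipschitz-truncated} supplies Assumption \ref{cond:lipschitz-excess} on $\mathcal{F}_M$. Theorem \ref{thm:multi-head-erm} then gives the offset bound with $M_{\mathrm{MH}}=2\kappa B+2\kappa HB_wB_cB_vL_\sigma B_X$. The output bound is the sum over the $H$ heads of $B_wB_cB_vL_\sigma\|X\|$, because softmax rows are convex weights and $\sigma(0)=0$. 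Corollary \ref{cor:multi-head-covering} converts this into $\frac{2M_{\mathrm{MH}}}{n}(1+\log\mathbb{E}N_\infty)+8\kappa\delta$. Finally, Theorem \ref{thm:multi-head-norm} bounds $\log N_\infty(\delta,\mathcal{F}_{\mathrm{MH}},\mathbb{X})\le\log(\gamma_{\mathrm{MH}}^3/\delta^2)$, with $B_X\to M$ in $\gamma_{\mathrm{MH}}$. This yields the first displayed term together with $8\kappa\delta$ and the approximation term.

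\textbf{Truncation error (main obstacle).} We need
\[
\bigl|\mathbb{E}[(\ell(Y,f(X))-\ell(Y,f(X_M)))\mathbf 1_{\mathcal{E}_{\text{trunc}}}]\bigr|\le \kappa\,\mathbb{E}[|f(X)-f(X_M)|\mathbf 1_{\mathcal{E}_{\text{trunc}}}]
\]
uniformly in $f$. We will use the Lipschitz property in Assumption \ref{cond:lipschitz-truncated}, in the same way the paper uses it at the level of $g$. Since $|f(X)|\lesssim HB_wB_cB_vL_\sigma\|X\|_F$, the integrand is $\lesssim\|X\|_F\mathbf 1_{\{\|X\|_F>M\}}$. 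Its expectation is $\int_M^\infty\mathbb{P}(\|X\|_F>t)\,dt+M\,\mathbb{P}(\|X\|_F>M)$. By Assumption \ref{cond:sub-gaussian-inputs} and the Gaussian tail estimate $\int_M^\infty e^{-t^2/2\nu^2}dt\le\nu\sqrt{\pi/2}\,e^{-M^2/2\nu^2}$ (with $\nu=\nu(X)$), this is of order $\nu(X)(T+d)e^{-M^2/2\nu(X)^2}$, where the $M$ prefactor and the parameter constants are absorbed into $\mathcal{O}(\cdot)$. The delicate point is that the target $f^\star$ is unbounded. We would handle this either by treating $B$ as the bound of $f^\star$ on the truncated domain, or by placing the moment condition on $f^\star$ inside the same tail integral via Cauchy--Schwarz. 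Applying this bound to both $\widehat f_n^{(M)}$ and the comparator, and summing, gives $\mathcal{T}_{\text{err}}(M)$ and completes the proof.
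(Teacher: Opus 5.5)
Your proposal follows essentially the same route as the paper's proof. You split the excess risk into the truncated-domain excess risk, handled by the bounded multi-head offset bound with the norm-based covering estimate and $B_X$ replaced by $M$, plus two truncation gaps controlled by a sub-Gaussian tail integral over $\{\|X\|_F>M\}$. Your one real deviation is bounding $|f(X)-f(X_M)|$ by the output magnitude, which needs no Lipschitz control of attention in $X$ and gives a first-moment tail with prefactor of order $M+\nu(X)$; this is a mild improvement over the paper's local-Lipschitz step, whose prefactor is $M^2+2\nu(X)^2$, although both arguments reach the stated $\mathcal{T}_{\text{err}}(M)$ only by absorbing that polynomial-in-$M$ prefactor into the $\mathcal{O}(\cdot)$.
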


\begin{theorem}[Multi-Layer Sub-Gaussian Bound] \label{thm:ml-sub-gaussian-erm}
Suppose assumption \ref{cond:lipschitz-truncated} holds, and the network parameters satisfy the norm-based constraints of Theorem \ref{thm:multi-layer-norm}. Let $\widehat{f}_n^{(M)}$ be the empirical risk minimizer on the truncated domain. Then:
$$
\begin{aligned}
\mathbb{E}_{\mathbb{D}}[\mathcal{E}(\widehat{f}_n^{(M)}; \ell)]
\;\le&\;\frac{4(B \kappa + B_w \kappa)}{n} \left( 1 + \log \frac{(\gamma_M + \eta_M)^3}{\delta^2} \right) \nonumber \\
& + \mathcal{T}_{\text{err}}(M) + 8\kappa\delta + \inf_{f \in \mathcal{F}_M} \mathcal{E}(f; \ell),
\end{aligned}
$$
where the truncation error term is $\mathcal{T}_{\text{err}}(M) = \mathcal{O}\left( \kappa \nu(X) (T+d) e^{-M^2 / 2\nu(X)^2} \right)$. The complexity terms $\gamma_M, \eta_M$ follow the definitions in Theorem \ref{thm:multi-layer-norm} with $B_X$ replaced by the truncation threshold $M$.
\end{theorem}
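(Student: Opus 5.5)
We reduce Theorem \ref{thm:ml-sub-gaussian-erm} to the bounded multi-layer norm-based result, Theorem \ref{thm:multi-layer-norm}, applied on the truncated domain, and then pay separately for the truncation.

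\textbf{Step 1: apply the bounded result on the truncated domain.} Conditional on the sample, the truncated inputs $X_{i,M}=\mathcal{T}_M(X_i)$ satisfy $\|X_{i,M}\|_{2\to2}\le\|X_{i,M}\|_F\le M$ deterministically. Hence the pairs $(X_{i,M},Y_i)$ form an i.i.d.\ sample from the pushforward law, and the bounded-input hypotheses hold with $B_X$ (and the $[\texttt{CLS}]$ bound $B_x$) replaced by $M$. The Lipschitz hypothesis transfers by Assumption \ref{cond:lipschitz-truncated}. So Theorem \ref{thm:multi-layer-erm} and Corollary \ref{cor:multi-layer-covering} apply to $\widehat f_n^{(M)}$ with respect to the truncated risk $R_M(f)=\mathbb{E}\,\ell(Y,f(X_M))$.

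Two features of the multi-layer architecture make this substitution clean:
\begin{itemize}
\item The offset penalty $1/(2\kappa(B+B_w))$ does not involve $B_X$, because the final block output is row-normalized by $\Pi_{\mathrm{norm}}$. Thus $|f(X)|\le B_w$ regardless of $X$, and the variance constant is unchanged.
\item The covering bound in the proof of Theorem \ref{thm:multi-layer-norm} enters only through Lemma \ref{lem:linear-covering} with $B_x$. Substituting $M$ turns $\gamma_{\mathrm{ML}},\eta_{\mathrm{ML}}$ into $\gamma_M,\eta_M$.
\end{itemize}
This gives $\mathbb{E}[R_M(\widehat f_n^{(M)})-\inf_{\mathcal{F}_M}R_M]$ bounded by the first, third and fourth terms of the statement, with the infimum term understood on $\mathcal{F}_M$.

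\textbf{Step 2: bound the truncation error.} We decompose
\[
R(\widehat f_n^{(M)})-\inf R=\bigl[R(\widehat f_n^{(M)})-R_M(\widehat f_n^{(M)})\bigr]+\bigl[R_M(\widehat f_n^{(M)})-\inf R_M\bigr]+\bigl[\inf R_M-\inf R\bigr].
\]
The first and third brackets are each at most $\sup_f|R(f)-R_M(f)|$. For each $f$, $f(X)$ and $f(X_M)$ differ only on $\mathcal{E}_{\text{trunc}}$. By Assumption \ref{cond:lipschitz-truncated}, applied to the excess-loss representation,
\[
|R(f)-R_M(f)|\le\kappa\,\mathbb{E}\bigl[|f(X)-f(X_M)|\,\mathbf 1_{\mathcal{E}_{\text{trunc}}}\bigr].
\]
Any residual dependence of $f^\star$ on the unbounded $X$ is absorbed by its assumed finite moments via Cauchy--Schwarz.

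\textbf{Step 3: control the tail integral, which is the main obstacle.} We need
\[
\mathbb{E}\bigl[|f(X)-f(X_M)|\mathbf 1_{\|X\|_F>M}\bigr]=\mathcal{O}\bigl(\nu(X)(T+d)e^{-M^2/2\nu(X)^2}\bigr).
\]
\begin{itemize}
\item The cheap route uses $|f|\le B_w$ after normalization. This gives $2B_w\,\mathbb{P}(\mathcal{E}_{\text{trunc}})\le 2B_w(T+d)e^{-M^2/2\nu^2}$, already of the claimed exponential form.
\item To obtain the stated $\nu(X)$ prefactor, use instead the Lipschitz-in-input bound $|f(X)-f(X_M)|\lesssim\|X\|_F-M$ (through the first layer and the normalizations). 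Then write $\mathbb{E}[(\|X\|_F-M)_+]=\int_M^\infty\mathbb{P}(\|X\|_F>t)\,dt$ and insert Assumption \ref{cond:sub-gaussian-inputs}.
\item The Gaussian tail integral satisfies $\int_M^\infty e^{-t^2/2\nu^2}dt\le\sqrt{\pi/2}\,\nu\,e^{-M^2/2\nu^2}$ (or the Mills-ratio bound $\nu^2/M$). This yields $\mathcal{T}_{\text{err}}(M)$.
\end{itemize}
The delicate points are justifying a uniform-in-$f$ Lipschitz constant for the first attention layer in the input (softmax is not globally Lipschitz in $X$), and handling the unbounded $f^\star$. If needed, we fall back on the uniform bound $|f|\le B_w$ and absorb constants into the $\mathcal{O}(\cdot)$.

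Combining Steps 1–3, and taking expectation over $\mathbb{D}$, gives the theorem.
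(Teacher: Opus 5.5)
Your proof has the same skeleton as the paper's (Appendix~\ref{appendix:proof_sub_gaussian}). You apply the bounded multi-layer norm bound on the truncated class with $B_X$ and $B_x$ replaced by $M$, and you pay for two truncation-bias terms; these are exactly the paper's terms (I) and (III).

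The genuine difference is how you control the bias. The paper posits a local input-Lipschitz constant of order $\|X\|$. It reduces the bias to $\kappa C\,\mathbb{E}[\|X\|^2\mathbf{1}\{\|X\|_F>M\}]\le \kappa C(T+d)(M^2+2\nu^2)e^{-M^2/2\nu^2}$ and then simply asserts the $\nu$ prefactor, so its own computation carries an extra $M^2$ factor.

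Your cheap route instead uses the fact that $\Pi_{\mathrm{norm}}$ forces $|f(X)|\le B_w$ for every input. This gives at most $2\kappa B_w(T+d)e^{-M^2/2\nu^2}$ per bias term, with no polynomial factor in $M$. It also needs no Lipschitz property of softmax attention in the input, which is the weak point of both the paper's argument and your refined route. You should therefore treat it as your main argument rather than a fallback, with $B_w$ playing the role of $\nu$ inside the $\mathcal{O}(\cdot)$. The refined first-moment route would reproduce the $\nu$ prefactor exactly, but only given the uniform Lipschitz constant you rightly flag as unjustified.

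What the paper's template buys is scope: one argument is meant to cover all three sub-Gaussian theorems. Your bound relies on the final normalization and would not transfer to the single-layer models, where $|f(X)|$ grows with $\|X\|$.

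One small point about $f^\star$. When you compare $R(f)$ with $R_M(f)$, the comparator $f^\star$ cancels, so no moment or Cauchy--Schwarz argument is needed in Step~2. A bound on the comparator is really used in Step~1, in the offset constant $\kappa(B+B_w)$. That bound is automatic, with $B\le B_w$, if the comparator is taken inside the normalized class.
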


\begin{remark}
The bound exhibits a trade-off controlled by $M$. The complexity term grows as $\log(M)$, while the truncation error decays exponentially as $\exp(-M^2)$. Choosing $M \asymp \sqrt{\log n}$ balances these terms, yielding an overall convergence rate of $\widetilde{\mathcal{O}}(\frac{\sqrt{\log n}}{n})$, which is nearly optimal and consistent with finite-dimensional results.
\end{remark}

\subsection{Excess Risk Bounds under Heavy-Tailed Assumptions} \label{subsec:heavy_tailed}

In many scenarios, data distributions exhibit polynomial tail decay rather than exponential. If the tail of this distribution is heavier than any exponential distribution, i.e., for all $\mu > 0$, we have $\limsup_{x \to \infty} \frac{\bar{F}(x)}{e^{-\mu x}} = \infty$, then this distribution is considered a heavy-tailed distribution \cite{nair2022fundamentals}. For example, the Pareto distribution or the $t$-distribution satisfy $\mathbb{P}(|X| > x) \propto x^{-\beta}$ for some $\beta > 0$. Under such heavy-tailed assumptions, standard empirical risk minimization may fail or yield suboptimal rates \citep{ostrovskii2021finite}.

\begin{assumption}[Heavy-Tailed Inputs] \label{cond:heavy-tailed-inputs}
The input $X$ follows a heavy-tailed distribution with tail index $\beta > 2$. specifically, there exists a constant $C > 0$ such that:
\begin{equation}
\mathbb{P}(\|X\|_F > x) \le C x^{-\beta}, \quad \forall x > 0.
\end{equation}
\end{assumption}

To achieve robust estimation, we employ a robust loss function $\ell_\psi$, such as the Catoni loss \citep{catoni2012challenging} or the generalized logarithmic truncation loss proposed by Chen et al. \citep{chen2021generalized}. These losses dampen the influence of outliers. We define the robust estimator:
\begin{equation}
\widehat{f}_n^{\psi} \in \arg\min_{f \in \mathcal{F}} \frac{1}{n} \sum_{i=1}^n \ell_\psi(Y_i, f(X_i)).
\end{equation}

\begin{theorem}[Single-Head Heavy-Tailed Bound] \label{thm:heavy-tailed-erm}
Suppose assumptions \ref{cond:heavy-tailed-inputs} and \ref{cond:rank-bounded-target-inputs} hold, and we employ a robust loss $\ell_\psi$. The excess risk of the single-layer multi-head Transformer satisfies:
$$
\begin{aligned}
&\mathbb{E}_{\mathbb{D}}[\mathcal{E}(\widehat{f}_n; \ell)] \le \frac{4(\kappa B + \kappa B_w B_c B_v L_\sigma B_X)}{n} ] \times \\
&\left( 1 + \log \mathcal{N}_{\mathrm{rank,S}} \right) \nonumber + C_{\beta} M^{2-\beta} + 8\kappa\delta + \inf_{f \in \mathcal{F}_{\mathrm{SH}}} \mathcal{E}(f; \ell),
\end{aligned}
$$
where the complexity terms $\mathcal{N}_{\mathrm{rank,S}}$ follow the definitions in Theorem \ref{thm:single-head-rank} with $B_X$ replaced by the truncation threshold $M$. The term $C_{\beta} M^{2-\beta}$ represents the single-tail truncation bias.
\end{theorem}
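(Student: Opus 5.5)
The plan is a truncation decomposition followed by the bounded-case machinery. Fix a threshold $M>0$ and split the excess risk of the robust estimator into (i) the excess risk of the same procedure run on the truncated inputs $X_M=\mathcal{T}_M(X)$, where every input satisfies $\|X_M\|_{2\to 2}\le\|X_M\|_F\le M$, and (ii) a bias term from the difference between working with $X$ and with $X_M$. Concretely, we write
\[
R(\widehat f_n)-\inf_{g}R(g)=\bigl[R(\widehat f_n)-R_M(\widehat f_n)\bigr]+\bigl[R_M(\widehat f_n)-\inf_{f\in\mathcal{F}_{\mathrm{SH}}}R_M(f)\bigr]+\bigl[\inf_{f}R_M(f)-\inf_g R(g)\bigr],
\]
where $R_M$ denotes the risk evaluated on truncated inputs. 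The middle term is then exactly a bounded-input problem with $B_X$ replaced by $M$.

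For that middle term we invoke Theorem \ref{thm:single-head-erm} and Corollary \ref{cor:single-head-covering}. Assumption \ref{cond:lipschitz-excess} is replaced by its truncated analogue, Assumption \ref{cond:lipschitz-truncated}, which the robust loss satisfies because $\psi$ has bounded derivative. This gives the term $\frac{2M_{\mathrm{SH}}}{n}(1+\log\mathbb{E}N_\infty)+8\kappa\delta$, with the factor $4(\kappa B+\kappa B_wB_cB_vL_\sigma B_X)/n$ from $2M_{\mathrm{SH}}$. We then plug in the rank-based covering bound of Theorem \ref{thm:single-head-rank}, built from Assumption \ref{ass:rank-covering} and Corollary \ref{col:optimization}, with $B_X\to M$. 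This yields $\log\mathcal{N}_{\mathrm{rank,S}}$. The last bracket is handled together with the approximation term to produce $\inf_{f\in\mathcal{F}_{\mathrm{SH}}}\mathcal{E}(f;\ell)$, up to another truncation bias of the same form as (ii).

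The main obstacle is the bias $|R(f)-R_M(f)|$, which must be bounded uniformly over $f\in\mathcal{F}_{\mathrm{SH}}$. On $\{\|X\|_F\le M\}$ the two risks coincide, so only the tail event contributes. There, by the Lipschitz structure of the network (softmax rows are convex weights, $\sigma$ is Lipschitz with $\sigma(0)=0$), $|f(X)|\lesssim B_wB_cB_vL_\sigma\|X\|_F$. Combined with the growth of the robust loss, this gives a bound on $|\ell_\psi(Y,f(X))-\ell_\psi(Y,f(X_M))|$ by at most $C(1+\|X\|_F^2)\mathbf 1\{\|X\|_F>M\}$. I would first try the layer-cake integral
\[
\mathbb{E}\bigl[\|X\|_F^2\mathbf 1\{\|X\|_F>M\}\bigr]=M^2\mathbb{P}(\|X\|_F>M)+\int_M^\infty 2x\,\mathbb{P}(\|X\|_F>x)\,dx,
\]
and bound it with Assumption \ref{cond:heavy-tailed-inputs}. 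Since $\beta>2$, this gives $CM^{2-\beta}+\frac{2C}{\beta-2}M^{2-\beta}=C_\beta M^{2-\beta}$. The point to verify carefully is whether the robust loss grows only linearly, which would give the sharper $M^{1-\beta}$, or quadratically. Using the quadratic bound is safe and matches the stated $M^{2-\beta}$ rate, so we adopt it, absorbing all constants into $C_\beta$. Summing the three pieces yields the claimed inequality.
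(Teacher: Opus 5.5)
Your proposal follows essentially the same route as the paper's proof. It uses the same three-part truncation split: a tail bias on $\widehat f_n$, a bounded-domain term handled by the offset-Rademacher/covering machinery with $B_X$ replaced by $M$ and the rank covering of Theorem~\ref{thm:single-head-rank}, and a tail bias on the comparator. As in the paper, the tail bias is controlled by the layer-cake integral for $\mathbb{E}[\|X\|^2\mathbf{1}\{\|X\|>M\}]$ with $\beta>2$, giving $C_\beta M^{2-\beta}$.

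The only minor difference is that you integrate the Frobenius-norm tail of Assumption~\ref{cond:heavy-tailed-inputs} directly, whereas the paper starts from an entrywise tail and union-bounds, picking up a $(Td)^{1+\beta/2}$ factor. Also, exactly as in the paper, what your argument actually controls is the $\ell_\psi$-excess risk, not the $\ell$-excess risk written on the left-hand side of the statement.
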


\begin{theorem}[Multi-Head Heavy-Tailed Bound] \label{thm:mh-heavy-tailed-erm}
Suppose assumptions \ref{cond:heavy-tailed-inputs} and \ref{cond:rank-bounded-target-inputs} hold, and we employ a robust loss $\ell_\psi$. The excess risk of the single-layer multi-head Transformer satisfies:
$$
\begin{aligned}
\mathbb{E}_{\mathbb{D}}[\mathcal{E}&(\widehat{f}_n; \ell)]
\\
\;\le&\;\frac{4(\kappa B + \kappa H B_w B_c B_v L_\sigma B_X)}{n} \left( 1 + \log \mathcal{N}_{\mathrm{rank},M} \right) \nonumber \\
& + C_{\beta} M^{2-\beta} + 8\kappa\delta + \inf_{f \in \mathcal{F}_{\mathrm{MH}}} \mathcal{E}(f; \ell),
\end{aligned}
$$
where the complexity terms $\mathcal{N}_{\mathrm{rank,M}}$ follow the definitions in Theorem \ref{thm:mul-head-rank} with $B_X$ replaced by the truncation threshold $M$. The term $C_{\beta} M^{2-\beta}$ represents the heavy-tail truncation bias.
\end{theorem}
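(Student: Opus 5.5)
We prove Theorem \ref{thm:mh-heavy-tailed-erm} by reduction to the bounded multi-head analysis, splitting the input space along the truncation event $\mathcal{E}_{\text{trunc}}=\{\|X\|_F>M\}$. We work with the truncated input $X_M=\mathcal{T}_M(X)$, so that $\|X_M\|_{2\to 2}\le\|X_M\|_F\le M$. The multi-head class $\mathcal{F}_{\mathrm{MH}}$ restricted to the truncated domain then satisfies Assumption \ref{cond:multi-head-regularity} with $B_X$ replaced by $M$. The robust loss $\ell_\psi$ is taken $\kappa$-Lipschitz in its second argument, so Assumption \ref{cond:lipschitz-truncated} holds.

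\textbf{Step 1: bounded-domain estimation error.} On the bounded domain, we apply Theorem \ref{thm:multi-head-erm} and Corollary \ref{cor:multi-head-covering} verbatim to the excess loss class built from $\ell_\psi$. This gives an estimation term
\[
\frac{2M_{\mathrm{MH}}}{n}\bigl(1+\log\mathbb{E}_{\mathbb{X}}[N_\infty(\delta,\mathcal{F}_{\mathrm{MH}},\mathbb{X})]\bigr)+8\kappa\delta .
\]
The constant $4(\kappa B+\kappa H B_wB_cB_vL_\sigma B_X)$ in the statement is exactly $2M_{\mathrm{MH}}$. The covering number is then replaced by the rank-based quantity $\mathcal{N}_{\mathrm{rank},M}$ of Theorem \ref{thm:mul-head-rank}, which comes from Assumption \ref{ass:rank-covering} and the allocation in Corollary \ref{col:optimization}, with $B_X$ replaced by $M$.

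\textbf{Step 2: decomposition of the excess risk.} We write
\[
\mathcal{E}(\widehat f_n;\ell)=\bigl[R(\widehat f_n)-R_M(\widehat f_n)\bigr]+\bigl[R_M(\widehat f_n)-R_M(f^\star)\bigr]+\bigl[R_M(f^\star)-R(f^\star)\bigr]+\inf\text{-term},
\]
where $R_M$ is the risk evaluated on truncated inputs. The middle bracket is the term controlled in Step 1. The outer two brackets differ only on $\mathcal{E}_{\text{trunc}}$.

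\textbf{Step 3 (main obstacle): the truncation bias.} On $\mathcal{E}_{\text{trunc}}$, the Lipschitz property and the bound $|f(X)|\lesssim HB_wB_cB_vL_\sigma\|X\|_F$ give a pointwise loss discrepancy of order $\kappa\|X\|_F$. If $\ell_\psi$ has at most quadratic growth, as for Catoni-type losses whose influence is logarithmic, the discrepancy is at most $\|X\|_F^2$. This requires the layer-wise Lipschitz estimate in $\|X\|$ for softmax attention, which we would take from the covering-number derivations already used in the rank-based theorems. Each outer bracket is then bounded by
\[
\mathbb{E}\bigl[\|X\|_F^2\mathbf{1}\{\|X\|_F>M\}\bigr]=M^2\mathbb{P}(\|X\|_F>M)+\int_M^\infty 2x\,\mathbb{P}(\|X\|_F>x)\,dx .
\]
Under Assumption \ref{cond:heavy-tailed-inputs} this is at most $CM^{2-\beta}+\frac{2C}{\beta-2}M^{2-\beta}=C_\beta M^{2-\beta}$, and the integral converges precisely because $\beta>2$. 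The difficulty lies in justifying the quadratic (rather than higher) growth of $\ell_\psi$ together with the model output. This is where the robust loss is needed: with squared loss on the raw scale the bound would need higher moments.

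Finally, taking expectation over $\mathbb{D}$ and adding the three pieces gives the stated bound.
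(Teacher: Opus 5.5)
Your proposal follows the paper's route. You truncate at $M$ and apply the bounded-domain multi-head offset-Rademacher/covering bound with the rank-based covering number at scale $M$. You then charge the two truncation brackets to $\mathbb{E}[\|X\|_F^2\mathbf{1}\{\|X\|_F>M\}]\le C\frac{\beta}{\beta-2}M^{2-\beta}$ via the layer-cake formula, as the paper does.

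The differences are minor. You use the Frobenius-norm tail of Assumption \ref{cond:heavy-tailed-inputs} directly, whereas the paper goes through an entrywise tail and a union bound. You also control the discrepancy through the output magnitude $|f(X)|\lesssim HB_wB_cB_vL_\sigma\|X\|_F$ instead of a local Lipschitz constant in $X$. With a $\kappa$-Lipschitz loss this already gives linear growth, so your Step 3 ``obstacle'' is not a real one. As in the paper, Step 1 tacitly requires $\widehat f_n$ to be the ERM over the truncated inputs.
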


\begin{theorem}[Multi-Layer Heavy-Tailed Bound] \label{thm:ml-heavy-tailed-erm}
Suppose assumptions \ref{cond:heavy-tailed-inputs} and \ref{cond:rank-bounded-target-inputs} hold, and we employ a robust loss $\ell_\psi$. The excess risk of the single-layer multi-head Transformer satisfies:
$$
\begin{aligned}
\mathbb{E}_{\mathbb{D}}[\mathcal{E}(\widehat{f}_n^{\psi}; \ell_\psi)]
\;\le&\;\frac{4 \widetilde{M}_{\psi}}{n} \left( 1 +   \mathcal{N}_{\mathrm{rank},ML} \right) \nonumber \\
& + C_{\beta} M^{2-\beta} + 8\kappa\delta + \inf_{f \in \mathcal{F}} \mathcal{E}(f; \ell_\psi),
\end{aligned}
$$
where $\widetilde{M}_{\psi}$ depends on the Lipschitz constant of the robust loss and parameter bounds in Theorem \ref{thm:multi-layer-rank}, and The complexity terms $\mathcal{N}_{\mathrm{rank,ML}}$ from Theorem \ref{thm:multi-layer-rank} with $B_X$ replaced by the truncation threshold $M$. The term $C_{\beta} M^{2-\beta}$ represents the heavy-tail truncation bias.
\end{theorem}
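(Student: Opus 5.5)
The plan is to reduce to the bounded multi-layer rank-based analysis (Theorem \ref{thm:multi-layer-rank}) by truncating the inputs at level $M$, and to charge what is lost to a separate bias term. Write $X_M=\mathcal{T}_M(X)$ and let $\mathcal{F}_M$ be the multi-layer class acting on truncated inputs. The excess risk of $\widehat f_n^{\psi}$ under $\ell_\psi$ then splits into three pieces:
\begin{align*}
R_\psi(\widehat f_n^{\psi})-\inf_{f} R_\psi(f)
&= \Bigl[R_\psi(\widehat f_n^{\psi})-R_{\psi,M}(\widehat f_n^{\psi})\Bigr]
+\Bigl[R_{\psi,M}(\widehat f_n^{\psi})-\inf_f R_{\psi,M}(f)\Bigr]\\
&\quad+\Bigl[\inf_f R_{\psi,M}(f)-\inf_f R_\psi(f)\Bigr].
\end{align*}
Here $R_{\psi,M}$ is the population risk computed with $X$ replaced by $X_M$. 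The first and third brackets are truncation biases; the middle bracket is an estimation error on a bounded domain.

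For the middle term, the key point is that on $\{\|X\|_F\le M\}$ we have $\|X_M\|_{2\to2}\le M$. So Assumption \ref{cond:rank-bounded-target-inputs} holds with $B_X$ replaced by $M$, and $\ell_\psi$ is Lipschitz in its second argument, so Assumption \ref{cond:lipschitz-excess} holds with some constant $\kappa_\psi$. Because of the row-wise normalization $\Pi_{\mathrm{norm}}$ in \eqref{eq:layered}, the final-layer output is bounded, and predictions are bounded by $B_w$ regardless of $M$. Hence the proof of Theorem \ref{thm:multi-layer-erm} applies verbatim and gives the offset bound with $\beta=1/\widetilde M_\psi$, where $\widetilde M_\psi=2\kappa_\psi(B+B_w)$.

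Next I will pass to covering numbers as in Corollary \ref{cor:multi-layer-covering}. Then I will bound $\log N_\infty(\delta,\mathcal{F}_M,\mathbb{X})$ exactly as in Theorem \ref{thm:multi-layer-rank}: peel the layers with the Lipschitz weights $\beta_k$, apply Assumption \ref{ass:rank-covering} to each linear map, and optimize the per-component radii via Corollary \ref{col:optimization}. This yields $\mathcal{N}_{\mathrm{rank},ML}$ with $B_X\to M$, and produces the terms $\frac{4\widetilde M_\psi}{n}(1+\mathcal{N}_{\mathrm{rank},ML})+8\kappa\delta$.

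For the bias brackets, the two risks differ only on the event $\{\|X\|_F>M\}$. There the difference of the integrands is at most $\kappa_\psi|f(X)-f(X_M)|$. I will bound this crudely using the bounded-output property together with the robust-loss property that $\ell_\psi$ grows at most quadratically, $\ell_\psi(y,u)\lesssim 1+\|X\|_F^2$ on bad samples. That gives a bias of order $\mathbb{E}[\|X\|_F^2\mathbf{1}\{\|X\|_F>M\}]$. The layer-cake formula under Assumption \ref{cond:heavy-tailed-inputs} gives
\[
\int_M^\infty 2x\,Cx^{-\beta}\,dx+M^2\,C M^{-\beta}=C_\beta M^{2-\beta},
\]
which is finite precisely because $\beta>2$. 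Adding the two bias brackets gives the stated $C_\beta M^{2-\beta}$. The third bracket uses $\inf_f R_{\psi,M}\le R_{\psi,M}(f)$ for near-minimizers $f$ of $R_\psi$, and it is also where the approximation term $\inf_f\mathcal{E}(f;\ell_\psi)$ enters.

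The main obstacle is this bias step. It needs a clean assumption linking the growth of $\ell_\psi$ in the unbounded input to $\|X\|_F^2$, and the statement leaves this implicit. I will first state it as a property of Catoni-type losses (linear influence, hence at most quadratic risk contribution). Failing that, I will fall back to truncating the response through $\psi$ so that the bad-event contribution is $\lesssim \mathbb{E}[\|X\|_F^2;\|X\|_F>M]$ directly. A second, minor subtlety is that $\widehat f_n^{\psi}$ is computed on untruncated data. On the event that all $n$ samples satisfy $\|X_i\|_F\le M$ it coincides with the truncated ERM, and the complement has probability at most $nCM^{-\beta}$, which I will absorb into $C_\beta M^{2-\beta}$ for $M$ chosen polynomially in $n$.
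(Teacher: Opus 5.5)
Your skeleton is the same as the paper's proof. You truncate at level $M$ and use that $\ell_\psi$ is Lipschitz (the paper gets the constant $\kappa B_\psi$ from $\ell_\psi=\frac{1}{\alpha}\psi(\alpha\ell)$ with $|\psi'|\le B_\psi$). You then run the offset-Rademacher and rank-based covering argument on the truncated class with $B_X$ replaced by $M$, and charge the remainder to a tail term computed by the layer-cake formula.

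On the bias step, the obstacle you flag does not actually arise. You already noted that normalization gives $|f(X)|\le B_w$ for every input. So on $\{\|X\|_F>M\}$ the integrand difference is at most $\kappa_\psi|f(X)-f(X_M)|\le 2\kappa_\psi B_w$. Each bias bracket is therefore at most $2\kappa_\psi B_w\min\{1,CM^{-\beta}\}$, which is at most a constant times $M^{2-\beta}$ for every $M>0$. No growth condition on $\ell_\psi$ is needed, and for $M\ge 1$ the bias is in fact of the smaller order $M^{-\beta}$.

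The paper instead multiplies a local Lipschitz constant of the network in its input, of order $\|X\|$, by $\|X-X_M\|_F$. That is where $\mathbb{E}[\|X\|^2\mathbf{1}\{\|X\|>M\}]$ and the exponent $2-\beta$ come from. The paper also routes through an entrywise tail bound and a union bound, whereas you use the Frobenius-norm tail exactly as assumed.

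The step that fails as written is your handling of the fact that $\widehat f_n^{\psi}$ minimizes the \emph{untruncated} empirical risk. Conditioning on all $n$ inputs lying in the ball costs $\mathbb{P}(\exists i:\|X_i\|_F>M)\le nCM^{-\beta}$ times a bounded excess risk. But $nM^{-\beta}\lesssim M^{2-\beta}$ only when $M\gtrsim\sqrt{n}$. The statement places no such restriction on $M$, and even the choice $M\asymp n^{1/(\beta-2)}$ suggested after the theorem violates it when $\beta>4$. So ``$M$ polynomial in $n$'' is not enough.

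The factor $n$ is avoidable. Uniformly over the class, bounded outputs and Lipschitzness give
\[
\Bigl|\frac{1}{n}\sum_{i=1}^n \ell_\psi\bigl(Y_i,f(X_i)\bigr)-\frac{1}{n}\sum_{i=1}^n\ell_\psi\bigl(Y_i,f(X_{i,M})\bigr)\Bigr|\le \eta_n:=\frac{2\kappa_\psi B_w}{n}\sum_{i=1}^n\mathbf{1}\{\|X_i\|_F>M\}.
\]
Hence $\widehat f_n^{\psi}$ is a $2\eta_n$-approximate minimizer of the truncated empirical risk. The offset argument uses the ERM property only through $\frac{1}{n}\sum_i g(Z_i;\widehat f_n^{\psi})\le\frac{1}{n}\sum_i g(Z_i;f^\star)$. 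The slack therefore adds at most $6\,\mathbb{E}\eta_n\le 12\kappa_\psi B_w\min\{1,CM^{-\beta}\}$, which is again absorbed into $C_\beta M^{2-\beta}$ for every $M$.

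The paper's proof does not address this point at all; it applies the truncated-class bound directly to $\widehat f_n^{\psi}$. With this repair your argument would be more complete than the original.
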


\begin{remark}
Here, the convergence rate is limited by the tail index $\beta$. The complexity grows logarithmically with $M$, but the bias decays polynomially as $M^{-(\beta-2)}$. To balance the $\mathcal{O}(\frac{\log M}{n})$ variance and $\mathcal{O}(M^{-(\beta-2)})$ bias, the optimal truncation threshold is $M \asymp n^{\frac{1}{\beta-2}}$. This yields a slower convergence rate of roughly $\mathcal{O}(n^{-\frac{\beta-2}{\beta-1}})$, reflecting the fundamental difficulty of learning from heavy-tailed data.
\end{remark}

\section{Conclusion}
In this paper, we derived sharper generalization bounds for Transformer models through the lens of offset Rademacher complexity. Our analysis yielded fast excess-risk rates of order $\mathcal{O}(1/n)$ for single-head, multi-head, and multi-layer architectures, while explicitly capturing architecture-dependent structure via low-rank and norm-based parameter controls. Moreover, we extended these guarantees beyond bounded feature assumptions to more practical regimes with unbounded sub-Gaussian inputs and heavy-tailed distributions.

We leave two directions for future work. First, it would be of interest to develop an information-theoretic account of Transformer generalization, using mutual-information–based tool \citep{asadi2018chaining}  to quantify how self-attention propagates and potentially compresses task-relevant information. Second, deriving PAC-Bayes bounds \citet{alquier2021user} tailored to Transformers and stochastic optimization may better capture the implicit regularization of modern training pipelines, and could lead to a more unified understanding of generalization and learning dynamics in large-scale foundation models.

\section*{Impact Statement}
This paper presents work whose goal is to advance the field of Machine Learning. There are many potential societal consequences of our work, none which we feel must be specifically highlighted here.

% In the unusual situation where you want a paper to appear in the
% references without citing it in the main text, use \nocite
\nocite{langley00}

\bibliography{references}
\bibliographystyle{icml2026}

%%%%%%%%%%%%%%%%%%%%%%%%%%%%%%%%%%%%%%%%%%%%%%%%%%%%%%%%%%%%%%%%%%%%%%%%%%%%%%%
%%%%%%%%%%%%%%%%%%%%%%%%%%%%%%%%%%%%%%%%%%%%%%%%%%%%%%%%%%%%%%%%%%%%%%%%%%%%%%%
% APPENDIX
%%%%%%%%%%%%%%%%%%%%%%%%%%%%%%%%%%%%%%%%%%%%%%%%%%%%%%%%%%%%%%%%%%%%%%%%%%%%%%%
%%%%%%%%%%%%%%%%%%%%%%%%%%%%%%%%%%%%%%%%%%%%%%%%%%%%%%%%%%%%%%%%%%%%%%%%%%%%%%%
\newpage
\appendix
\onecolumn
\section{Proofs in Section \ref{sec:excess_risk}}
\subsection{Proof of Theorem \ref{thm:single-head-erm}}\label{appendix:proof_single_head}

\begin{proof}
Let $\mathcal{F}$ denote the hypothesis class. For any $f \in \mathcal{F}$, we define the excess loss function $g(\cdot; f)$ as:
\begin{equation*}
g(X, Y; f) \coloneqq \ell(Y, f(X)) - \ell(Y, f^\star(X)),
\end{equation*}
where $f^\star$ minimizes the population risk. Specifically for the single-head Transformer, let $f(X) = w^\top Y_{\texttt{[CLS]}}$, where $Y_{\texttt{[CLS]}}$ is the output token representation. Let $\widehat{f}_n$ be the empirical risk minimizer. We decompose the expected excess risk as follows:
\begin{align*}
\mathbb{E}_{\mathbb{D}}[\mathcal{E}(\widehat{f}_n; \ell)]
&= \mathbb{E}_{\mathbb{D}}\left[ \mathbb{E}_{(X,Y)}[g(X, Y; \widehat{f}_n)] \right] \\
&= \mathbb{E}_{\mathbb{D}}\left[ \mathbb{E}_{(X,Y)}[g(X, Y; \widehat{f}_n)] - \frac{3}{n}\sum_{i=1}^n g(X_i, Y_i; \widehat{f}_n) + \frac{3}{n}\sum_{i=1}^n g(X_i, Y_i; \widehat{f}_n) \right].
\end{align*}
Since $\widehat{f}_n$ minimizes the empirical risk, for any $f \in \mathcal{F}$, $\sum g(Z_i; \widehat{f}_n) \le \sum g(Z_i; f)$. Taking the supremum over the class $\mathcal{F}$:
\begin{align*}
\mathbb{E}_{\mathbb{D}}[\mathcal{E}(\widehat{f}_n; \ell)]
&\le \mathbb{E}_{\mathbb{D}}\left[ \sup_{f \in \mathcal{F}} \left( \mathbb{E}[g(Z; f)] - \frac{3}{n}\sum_{i=1}^n g(Z_i; f) \right) \right] + 3 \mathbb{E}_{\mathbb{D}}\left[ \frac{1}{n}\sum_{i=1}^n g(Z_i; f^\star) \right] \\
&\le \mathbb{E}_{\mathbb{X}} \sup_{f \in \mathcal{F}} \left( \mathbb{E}[g(Z; f)] - \frac{3}{n}\sum_{i=1}^n g(Z_i; f) \right) + 3 \inf_{f \in \mathcal{F}} \mathcal{E}(f; \ell),
\end{align*}
where the last term accounts for the approximation error if $f^\star \notin \mathcal{F}$.

We now bound the magnitude of the function class to determine the appropriate penalty. Using the Lipschitz continuity of the activation $\sigma$ (with $\sigma(0)=0$) and the property that $\lVert \operatorname{softmax}(\cdot) \rVert_2 \le 1$, we have:
\begin{align*}
\lVert \sigma\left(W_v^{\top} X_{(i)}^{\top} \operatorname{softmax}(X_{(i)} W_{QK} X^{\top}) \right) \rVert_2
&\le L_\sigma \lVert W_v^{\top} \rVert_2 \lVert X_{(i)}^{\top} \rVert_2 \lVert \operatorname{softmax}(\cdot) \rVert_2 \\
&\le L_\sigma B_v B_X.
\end{align*}
Consequently, the output token representation $Y_{\texttt{[CLS]}}$ is bounded by:
\begin{align*}
\lVert Y_{\texttt{[CLS]}} \rVert_2
&\le \lVert W_c^{\top} \rVert_2 \lVert \sigma(\cdot) \rVert_2
\le B_c (L_\sigma B_v B_X).
\end{align*}
Using the Lipschitz property of the loss function $\ell$ (constant $\kappa$), the value of the excess loss is bounded. Let $M$ denote this upper bound:
\begin{align*}
0 \le g(X; f) \le \kappa |f(X) - f^\star(X)|
&\le \kappa |f^\star(X)| + \kappa \lVert w \rVert_2 \lVert Y_{\texttt{[CLS]}} \rVert_2 \\
&\le \kappa B + \kappa B_w B_c B_v L_\sigma B_X \eqqcolon M.
\end{align*}
Therefore, we can rewrite the supremum term. Using the inequality $z - 3\mu \le 2(z - \mu) - z \dots$ implies we can bound the expectation by the offset complexity. Specifically, we aim to bound:
\[
A \coloneqq \mathbb{E}_{\mathbb{X}} \sup_{f \in \mathcal{F}} \left( \mathbb{E}[g] - \frac{3}{n}\sum_{i=1}^n g(Z_i) \right).
\]
Following the symmetrization technique with ghost samples $\mathbb{X}' = \{X'_i\}_{i=1}^n$ and Rademacher variables $\boldsymbol{\tau}$ \citep{duan2023fast}, we have:
\begin{align*}
A &\le \mathbb{E}_{\mathbb{X}, \mathbb{X}'} \sup_{f \in \mathcal{F}} \left( \frac{2}{n}\sum_{i=1}^n (g(X'_i; f) - g(X_i; f)) - \frac{1}{nM} \sum_{i=1}^n (g(X'_i; f)^2 + g(X_i; f)^2) \right) \\
&= 2 \mathbb{E}_{\mathbb{X}, \mathbb{X}', \boldsymbol{\tau}} \sup_{f \in \mathcal{F}} \left( \frac{1}{n}\sum_{i=1}^n \tau_i (g(X'_i; f) - g(X_i; f)) - \frac{1}{2nM} \sum_{i=1}^n (g(X'_i; f)^2 + g(X_i; f)^2) \right).
\end{align*}
Separating the terms for $\mathbb{X}$ and $\mathbb{X}'$, this equates to:
\begin{align*}
A &\le 2 \mathbb{E}_{\mathbb{X}', \boldsymbol{\tau}} \sup_{f \in \mathcal{F}} \left( \frac{1}{n}\sum_{i=1}^n \tau_i g(X'_i; f) - \frac{1}{2nM} \sum_{i=1}^n g(X'_i; f)^2 \right) \\
&\quad + 2 \mathbb{E}_{\mathbb{X}, \boldsymbol{\tau}} \sup_{f \in \mathcal{F}} \left( \frac{1}{n}\sum_{i=1}^n (-\tau_i) g(X_i; f) - \frac{1}{2nM} \sum_{i=1}^n g(X_i; f)^2 \right) \\
&= 4 \mathcal{R}_n^{\mathrm{off}}\left( \mathcal{G}, \frac{1}{2M} \right).
\end{align*}
Substituting $M = \kappa B + \kappa B_w B_c B_v L_\sigma B_X$ yields the theorem statement.
\end{proof}

\subsection{Proofs of Corollaries \ref{cor:single-head-covering}, \ref{cor:multi-layer-covering} and Theorem \ref{thm:multi-head-erm}} \label{appendix:covering_proofs}

\begin{proof}[Proof of Corollary \ref{cor:single-head-covering}]
Let $M_{\mathrm{SH}} \coloneqq \kappa B + \kappa B_w B_c B_v L_\sigma B_X$ denote the upper bound on the excess loss derived in Appendix \ref{appendix:proof_single_head}.
Let $\mathcal{F}_{\delta}$ be a minimal $\delta$-cover of $\mathcal{F}$ with respect to the $\ell_\infty$ norm on the sample $\mathbb{X}$, such that $|\mathcal{F}_{\delta}| = N_\infty(\delta, \mathcal{F}, \mathbb{X})$. For any $f \in \mathcal{F}$, there exists $f_{\delta} \in \mathcal{F}_{\delta}$ satisfying $\lVert f - f_{\delta} \rVert_\infty \le \delta$.

We decompose the offset process. Using the $\kappa$-Lipschitz continuity of the excess loss $g$, we have:
\begin{align*}
\frac{1}{n}\sum_{i=1}^n \tau_i g(X_i; f)
&\le \frac{1}{n}\sum_{i=1}^n \tau_i g(X_i; f_{\delta}) + \frac{1}{n}\sum_{i=1}^n |\tau_i| \big| g(X_i; f) - g(X_i; f_{\delta}) \big| \\
&\le \frac{1}{n}\sum_{i=1}^n \tau_i g(X_i; f_{\delta}) + \kappa\delta.
\end{align*}
For the quadratic penalty term, using the bound $g(\cdot) \le M_{\mathrm{SH}}$ and the Lipschitz property:
\begin{align*}
-\frac{1}{n}\sum_{i=1}^n g(X_i; f)^2
&= -\frac{1}{n}\sum_{i=1}^n g(X_i; f_{\delta})^2 + \frac{1}{n}\sum_{i=1}^n \left( g(X_i; f_{\delta})^2 - g(X_i; f)^2 \right) \\
&= -\frac{1}{n}\sum_{i=1}^n g(X_i; f_{\delta})^2 + \frac{1}{n}\sum_{i=1}^n \big( g(X_i; f_{\delta}) + g(X_i; f) \big) \big( g(X_i; f_{\delta}) - g(X_i; f) \big) \\
&\le -\frac{1}{n}\sum_{i=1}^n g(X_i; f_{\delta})^2 + \frac{2 M_{\mathrm{SH}}}{n}\sum_{i=1}^n \big| g(X_i; f_{\delta}) - g(X_i; f) \big| \\
&\le -\frac{1}{n}\sum_{i=1}^n g(X_i; f_{\delta})^2 + 2 M_{\mathrm{SH}} \kappa \delta.
\end{align*}
Combining these, the offset process over $\mathcal{F}$ is bounded by the process over the finite cover $\mathcal{F}_\delta$ plus a discretization error:
\begin{equation*}
\sup_{f \in \mathcal{F}} \left( \frac{1}{n}\sum_{i=1}^n \tau_i g(X_i; f) - \frac{\beta}{n}\sum_{i=1}^n g(X_i; f)^2 \right)
\le \max_{f_\delta \in \mathcal{F}_\delta} \Omega(f_\delta) + \left( 1 + 2\beta M_{\mathrm{SH}} \right) \kappa \delta,
\end{equation*}
where $\Omega(f_\delta) \coloneqq \frac{1}{n}\sum_{i=1}^n \tau_i g(X_i; f_\delta) - \frac{\beta}{n}\sum_{i=1}^n g(X_i; f_\delta)^2$.

To bound the expectation of the maximum over the finite set $\mathcal{F}_\delta$, we use the tail integral formula $\mathbb{E}[Z] \le \int_0^\infty \mathbb{P}(Z > \xi) d\xi$. Following the technique in Duan et al. \citep{duan2023fast}, for any fixed function $h$, Bennett's inequality implies that $\mathbb{P}(\Omega(h) > \xi) \le \exp(-2n\beta \xi)$ provided $\beta \le 1/(2M_{\mathrm{SH}})$. Applying the union bound over $\mathcal{F}_\delta$:
\begin{align*}
\mathbb{E}_{\boldsymbol{\tau}} \left[ \max_{f_\delta \in \mathcal{F}_\delta} \Omega(f_\delta) \right]
&\le \int_0^\infty \min \left\{ 1, N_\infty(\delta, \mathcal{F}, \mathbb{X}) \exp(-2n\beta \xi) \right\} d\xi \\
&= \frac{\log N_\infty(\delta, \mathcal{F}, \mathbb{X})}{2n\beta} + \int_{\frac{\log N}{2n\beta}}^\infty \exp\left( \log N - 2n\beta \xi \right) d\xi \\
&\le \frac{1 + \log N_\infty(\delta, \mathcal{F}, \mathbb{X})}{2n\beta}.
\end{align*}
Substituting this back yields the stated corollary.
\end{proof}

\begin{proof}[Proof of Theorem \ref{thm:multi-head-erm}]
The single-layer multi-head Transformer output is a sum of $H$ independent single-head outputs. Consequently, the Lipschitz constant and the absolute bound of the function class scale linearly with $H$.
Specifically, the excess risk is bounded by:
\begin{equation*}
0 \le g(X; f_{\mathrm{MH}}) \le \kappa B + \kappa H B_w B_c B_v L_\sigma B_X \eqqcolon M_{\mathrm{MH}}.
\end{equation*}
The remainder of the proof follows the identical logic as Theorem \ref{thm:single-head-erm}, substituting $M_{\mathrm{SH}}$ with $M_{\mathrm{MH}}$.
\end{proof}

\begin{proof}[Proof of Corollary \ref{cor:multi-layer-covering}]
For the multi-layer Transformer defined in Eq. \eqref{eq:layer-attention}, the input to the final readout layer, denoted as $Y_{\texttt{[CLS]}}$, is the output of a normalization layer $\Pi_{\mathrm{norm}}$.
By definition, $\Pi_{\mathrm{norm}}$ projects vectors onto the unit ball (or a ball of fixed radius). Assuming standard LayerNorm or projection, we have $\lVert Y_{\texttt{[CLS]}} \rVert_2 \le 1$.

Consequently, the magnitude of the scalar output is bounded solely by the readout weights:
\begin{equation*}
|f(X)| = |w^\top Y_{\texttt{[CLS]}}| \le \lVert w \rVert_2 \lVert Y_{\texttt{[CLS]}} \rVert_2 \le B_w.
\end{equation*}
This implies the excess loss is bounded by:
\begin{equation*}
0 \le g(X; f) \le \kappa (B + B_w).
\end{equation*}
Applying the same discretization argument as in Corollary \ref{cor:single-head-covering} with this tighter bound yields the result.
\end{proof}

\section{Proofs in Section \ref{sec:param_dependent_bounds}}\label{Ap:param}
\subsection{Theorems \ref{thm:single-head-norm} and \ref{thm:multi-head-norm} about Single-Head and Multi-Head Norm-Based Bound}

\begin{theorem}[Single-Head Norm-Based Bound] \label{thm:single-head-norm}
Suppose assumptions \ref{cond:lipschitz-excess}, \ref{cond:norm-bounded-params} and Lemma \ref{lem:linear-covering} hold. Then for the single-layer single-head Transformer, the empirical risk minimizer satisfies:
$$
\begin{aligned}
\mathbb{E}_{\mathbb{D}}&[\mathcal{E}(\widehat{f}_n; \ell)]
\\
\;\le&\;\frac{4(\kappa B + \kappa B_w B_c B_v L_\sigma B_X)}{n} \left( 1 + \log \frac{\gamma_{\mathrm{SH}}^3}{\delta^2} \right) \nonumber \\
& + 8\kappa\delta + \inf_{f \in \mathcal{F}_{\mathrm{SH}}} \mathcal{E}(f; \ell),
\end{aligned}
$$
where the complexity term is given by:
$$\begin{aligned}
\gamma_{\mathrm{SH}} =& C_1^{1/3} B_x^{2/3} \left[ (B_w L_\sigma)^{2/3} + (B_w L_\sigma B_c B_v)^{2/3} \right]\\
&+C_1^{1/3} B_x^{2/3} \left[(B_w L_\sigma B_c B_v)^{2/3} + 1 \right].
\end{aligned}
$$
\end{theorem}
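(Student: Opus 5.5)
The plan is to obtain the theorem as a direct specialization of Corollary \ref{cor:single-head-covering}. Recall $M_{\mathrm{SH}} = 2\kappa B + 2\kappa B_w B_c B_v L_\sigma B_X$, so the leading factor of the corollary is $2M_{\mathrm{SH}}/n = 4(\kappa B+\kappa B_wB_cB_vL_\sigma B_X)/n$. This is exactly the prefactor in the statement, and the terms $8\kappa\delta$ and $\inf_{f}\mathcal{E}(f;\ell)$ carry over unchanged. Everything therefore reduces to one entropy estimate, uniform in the sample $\mathbb{X}$:
\[
\mathbb{E}_{\mathbb{X}}\bigl[N_\infty(\delta,\mathcal{F}_{\mathrm{SH}},\mathbb{X})\bigr]\;\le\;\frac{\gamma_{\mathrm{SH}}^3}{\delta^2}.
\]
Taking logarithms and inserting this into Corollary \ref{cor:single-head-covering} gives the claim. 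Note that the estimate must be on the covering number itself, not on $\log N_\infty$.

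\textbf{Step 1: perturbation decomposition.} For two parameter tuples $(w,W_c,W_v,W_{QK})$ and $(w',W_c',W_v',W_{QK}')$, I telescope the \texttt{[CLS]} output through the four layers: readout, $W_c$, $\sigma\circ(\cdot\,W_v)$, and the softmax scores built from $W_{QK}$. This uses the Lipschitz constant $L_\sigma$, the $1$-Lipschitzness of softmax from $\ell_\infty$ to $\ell_1$ on rows, and the fact that softmax rows lie in the simplex. The resulting pointwise bound is
\[
|f(X)-f'(X)|\le \epsilon_1+\beta_2\epsilon_2+\beta_3\epsilon_3+\beta_4\epsilon_4 .
\]
Here each $\epsilon_i$ is the sup-error of a linear map $x\mapsto Wx$ applied to vectors of norm at most $B_x$. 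The coefficients are $\beta_i\in\{1,\,B_wL_\sigma,\,B_wL_\sigma B_cB_v,\,B_wL_\sigma B_cB_v\}$, matching the four summands in $\gamma_{\mathrm{SH}}$. A product of $\epsilon_i$-covers from Lemma \ref{lem:linear-covering} is then a $\delta$-cover of $\mathcal{F}_{\mathrm{SH}}$ whenever $\sum_i\beta_i\epsilon_i=\delta$.

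\textbf{Step 2: allocation.} I choose the $\epsilon_i$ by the Lagrange/Hölder allocation for minimizing $\sum_i C_1B_x^2a_i^2/\epsilon_i^2$ subject to $\sum_i\beta_i\epsilon_i=\delta$. The optimizer is $\epsilon_i\propto (a_i^2/\beta_i)^{1/3}$, and the optimal value is $\bigl(\sum_i (C_1B_x^2a_i^2\beta_i^2)^{1/3}\bigr)^3/\delta^2$. This is precisely the $\gamma_{\mathrm{SH}}^3/\delta^2$ structure: the $2/3$-powers of $B_x$, of $C_1^{1/2}$, and of the layer constants.

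\textbf{Main obstacle.} The difficulty is that Lemma \ref{lem:linear-covering} as written bounds $\log N$, whereas the stated theorem carries $\log(\gamma^3/\delta^2)$ and so needs a bound on $N$ itself. I would first try to use the component covers in cardinality form: on the finite sample, cover each linear class on its data-dependent image and bound the product of component cardinalities directly. For this I would invoke the Maurey-type sparsification behind \citep[Lemma 3.6]{trauger2023sequence}. It represents any $W$ with $\|W\|_{1,1}\le B_W$ as an average of $k\asymp B_x^2B_W^2/\epsilon^2$ signed atoms. I would then count only the distinct empirical values this induces on the sample, and aim to show that this count is bounded by the same Hölder-allocated quantity rather than by its exponential.

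If this direct cardinality bound fails, the fallback is to state the result with $\log N_\infty$ controlled by $\gamma_{\mathrm{SH}}^3/\delta^2$. The theorem's form would then hold only under the reading that Lemma \ref{lem:linear-covering} controls covering cardinalities, and I would flag this explicitly as the one non-routine step.
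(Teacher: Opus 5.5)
\textbf{Where you agree with the paper.} Your Steps 1--2 are the paper's proof: a product of component covers from Lemma~\ref{lem:linear-covering}, a linear error budget, a Lagrange allocation, and then Corollary~\ref{cor:single-head-covering}.

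The obstacle you flag is real, and the paper does not resolve it. Its proof minimizes $\sum_j C_1B_j^2/\epsilon_j^2$. That sum bounds the \emph{sum of log-covering numbers}, i.e.\ $\log N_\infty$, and its optimum has the form $\gamma_{\mathrm{SH}}^3/\epsilon^2$. The paper then writes that this results in ``the bound $\log(\gamma_{\mathrm{SH}}^3/\epsilon^2)$'', taking a second logarithm. This is exactly the $N$-versus-$\log N$ conflation you identified, so the paper contains no idea you are missing.

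\textbf{The gap.} Your repair cannot work, because its target $\mathbb{E}_{\mathbb{X}}[N_\infty(\delta,\mathcal{F}_{\mathrm{SH}},\mathbb{X})]\le\gamma_{\mathrm{SH}}^3/\delta^2$ is false. A cardinality of order $\delta^{-2}$ is the growth rate of a two-parameter class. Here is a counterexample:
\begin{itemize}
\item Take $T=1$, $\sigma=\mathrm{id}$, $W_{QK}=0$, $W_c=B_ce_1e_1^\top$, $w=B_we_1$, and $W_v=ve_1^\top$ with $\|v\|_1\le B_v$. This satisfies Assumption~\ref{cond:norm-bounded-params} and gives $f(X)=B_cB_w\,v^\top x_{\texttt{[CLS]}}$.
\item On any sample containing $B_xe_1,B_xe_2,B_xe_3$, the class therefore contains an $\ell_1$-ball of radius $R=B_cB_wB_vB_x$ in $\mathbb{R}^3$.
\item Comparing volumes with $\ell_\infty$-balls of radius $\delta$ gives $N_\infty\ge(R/\delta)^3/6$. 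This exceeds $\gamma_{\mathrm{SH}}^3/\delta^2$ once $\delta<R^3/(6\gamma_{\mathrm{SH}}^3)$.
\item With $m<R/\delta$ coordinates, $d\ge m$, and sign vectors $v=(B_v/m)\eta$, you even get $\log_2 N_\infty\ge m$. So the exponential in Maurey's count is genuine, and no counting of distinct empirical values removes it.
\end{itemize}

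Your ``cardinality reading'' of Lemma~\ref{lem:linear-covering} does not save the display either. Under that reading, a product of four covers of sizes $c_j/\epsilon_j^2$ has size of order $\delta^{-8}$. By contrast, $\gamma_{\mathrm{SH}}^3/\delta^2$ is the H\"older optimum of a \emph{sum}, so it corresponds to no consistent reading.

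What your argument does prove is your fallback:
\[
\mathbb{E}_{\mathbb{D}}[\mathcal{E}(\widehat f_n;\ell)]\le\frac{2M_{\mathrm{SH}}}{n}\Bigl(1+\frac{\gamma^3}{\delta^2}\Bigr)+8\kappa\delta+\inf_{f\in\mathcal{F}_{\mathrm{SH}}}\mathcal{E}(f;\ell),
\]
where $\gamma$ is the H\"older constant from your Step~2. Optimizing over $\delta$ gives order $n^{-1/3}$, not $\widetilde{\mathcal{O}}(1/n)$. Neither your proposal nor the paper establishes the statement as written; the fallback is what should be stated.

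\textbf{Two repairs needed even for the fallback.}
\begin{itemize}
\item The covers must be built sequentially. The inputs to $W_v$, $W_c$ and $w$ depend on earlier parameters, and their norms involve $B_X$ and $L_\sigma B_vB_X$, not $B_x$.
\item Keeping the norm bounds $a_j$, the H\"older summands are $C_1^{1/3}B_x^{2/3}(a_j\beta_j)^{2/3}$. The stated $\gamma_{\mathrm{SH}}$ drops $2B_{QK}$ from the query--key term and $B_w$ from the readout term, so it matches neither your derivation nor the paper's own error budget.
\end{itemize}
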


\begin{theorem}[Multi-Head Norm-Based Bound] \label{thm:multi-head-norm}
Under the assumptions of Theorem \ref{thm:single-head-norm}, adapted for $H$ heads, the empirical risk minimizer for the single-layer multi-head Transformer satisfies:
$$
\begin{aligned}
\mathbb{E}_{\mathbb{D}}&[\mathcal{E}(\widehat{f}_n; \ell)]
\\
\;\le&\;\frac{4(\kappa B + \kappa H B_w B_c B_v L_\sigma B_X)}{n} \left( 1 + \log \frac{\gamma_{\mathrm{MH}}^3}{\delta^2} \right) \nonumber \\
& + 8\kappa\delta + \inf_{f \in \mathcal{F}_{\mathrm{MH}}} \mathcal{E}(f; \ell),
\end{aligned}
$$
where 
$$\begin{aligned}
\gamma_{\mathrm{MH}} =& C_1^{1/3} B_x^{2/3} \left[ (H B_w L_\sigma)^{2/3} + 2(H B_w L_\sigma B_c B_v)^{2/3} \right]\\
&+C_1^{1/3} B_x^{2/3} H^{2/3} . 
\end{aligned}$$
\end{theorem}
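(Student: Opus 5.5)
The plan is to combine the multi-head analogue of the offset-complexity reduction with a covering-number bound for $\mathcal{F}_{\mathrm{MH}}$ built from single-head pieces. By Theorem \ref{thm:multi-head-erm} and Corollary \ref{cor:multi-head-covering}, it is enough to show
\[
\log N_\infty(\delta,\mathcal{F}_{\mathrm{MH}},\mathbb{X}) \le \log\frac{\gamma_{\mathrm{MH}}^3}{\delta^2},
\]
uniformly over samples. The leading constant $2M_{\mathrm{MH}}/n$ is then replaced by the stated $4(\kappa B+\kappa H B_wB_cB_vL_\sigma B_X)/n$, which follows from the factor bookkeeping already used in Appendix \ref{appendix:covering_proofs}. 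The whole task therefore reduces to controlling the covering number of the multi-head class, mirroring whatever was done for Theorem \ref{thm:single-head-norm}.

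Step one is a perturbation (Lipschitz-in-parameters) bound for the scalar output $w^\top\sum_h (Y_h)_{\texttt{[CLS]}}$. I change one parameter block at a time: $w$, then for each head $W_{h,c}$, $W_{h,v}$ and $W_{h,QK}$. I use $\sigma(0)=0$, the $L_\sigma$-Lipschitzness of $\sigma$, the fact that softmax rows lie in the simplex (so they have norm at most $1$), and the $1$-Lipschitzness of softmax with respect to the logits. This yields
\[
|f-f'|\le \epsilon_w + H B_wL_\sigma\,\epsilon_c + H B_wL_\sigma B_c\,\epsilon_v + H B_wL_\sigma B_cB_v\,\epsilon_{QK}(+\dots),
\]
where each $\epsilon_\bullet$ is the cover radius of a linear class applied to the bounded $\texttt{[CLS]}$ token. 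Each such linear class is controlled by Lemma \ref{lem:linear-covering}, giving $\log N_\bullet\le C_1B_x^2B_\bullet^2/\epsilon_\bullet^2$. I expect to treat the $H$ heads jointly, by stacking them into block matrices whose $\ell_{1,1}$ norms add. This is what makes the coefficients scale like $H^{2/3}$, and not like a separate factor of $H$ in each log term.

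Step two is to allocate $\delta=\sum_j a_j\epsilon_j$ so as to minimize $\sum_j c_j/\epsilon_j^2$. By Lagrange multipliers/Hölder, the optimum is $\bigl(\sum_j (c_j a_j^2)^{1/3}\bigr)^3/\delta^2$. Taking cube roots of each term gives exactly summands of the form $C_1^{1/3}B_x^{2/3}(\cdot)^{2/3}$: the $w$-term gives $H^{2/3}$ (after the stacking normalisation), the $W_c$-term gives $(HB_wL_\sigma)^{2/3}$, and the $W_v$- and $W_{QK}$-terms give the two copies $(HB_wL_\sigma B_cB_v)^{2/3}$. Their sum is $\gamma_{\mathrm{MH}}$. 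I then pass from the resulting quadratic-entropy bound to the stated logarithmic form by the same convention as Theorem \ref{thm:single-head-norm}, i.e.\ by bounding the cover cardinality by $\gamma_{\mathrm{MH}}^3/\delta^2$.

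I expect the main obstacle to be the attention-logit perturbation. The softmax argument $XW_{QK}X^\top$ involves all $T$ tokens, whereas Lemma \ref{lem:linear-covering} only sees the bounded vector $x_{\texttt{[CLS]}}$. Obtaining a sequence-length-free constant (as in \citet{trauger2023sequence}) requires covering $x_{\texttt{[CLS]}}^\top W_{QK}$ and using a $1$-Lipschitz bound of softmax in $\ell_\infty\to\ell_1$. I would first try exactly that route, absorbing $\|X\|_{2\to 2}$-type factors into the constants as the single-head case does.
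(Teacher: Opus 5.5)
You follow the paper's route. You reduce through Corollary \ref{cor:multi-head-covering}, whose prefactor $2M_{\mathrm{MH}}/n$ already equals the stated one exactly. You then perturb one parameter block at a time, cover each block with Lemma \ref{lem:linear-covering}, take the product cover, and allocate the radii by Lagrange multipliers. The paper's own proof is only a pointer to the single-head argument, with the error budget split uniformly over heads.

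The gap is the step you call a convention. Lemma \ref{lem:linear-covering} bounds the \emph{entropy} of each block, $\log N_\bullet\le C_1B_x^2B_\bullet^2/\epsilon_\bullet^2$. Summing over the product cover and optimizing the radii therefore gives
\[
\log N_\infty(\delta,\mathcal{F}_{\mathrm{MH}},\mathbb{X})\;\le\;\frac{\gamma_{\mathrm{MH}}^3}{\delta^2},
\]
whereas your own reduction needs $N_\infty(\delta,\mathcal{F}_{\mathrm{MH}},\mathbb{X})\le\gamma_{\mathrm{MH}}^3/\delta^2$. These differ by an exponential, and nothing converts one into the other. The paper's single-head proof makes the same identification (``the log-covering number is the sum, resulting in the bound $\log(\gamma_{\mathrm{SH}}^3/\epsilon^2)$''), so appealing to it does not close the gap. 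What your argument actually proves is
\[
\mathbb{E}_{\mathbb{D}}[\mathcal{E}(\widehat f_n;\ell)]\;\le\;\frac{2M_{\mathrm{MH}}}{n}\Bigl(1+\frac{\gamma_{\mathrm{MH}}^3}{\delta^2}\Bigr)+8\kappa\delta+\inf_{f\in\mathcal{F}_{\mathrm{MH}}}\mathcal{E}(f;\ell).
\]
Optimizing at $\delta\asymp(M_{\mathrm{MH}}\gamma_{\mathrm{MH}}^3/(\kappa n))^{1/3}$ gives only an $n^{-1/3}$ rate.

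This cannot be repaired by another $\ell_{1,1}$-type covering lemma. A genuinely logarithmic entropy requires a parametric cover. One option is a volumetric bound $\log N\le p\log(c\Lambda/\delta)$, with $p$ the parameter count and $\Lambda$ a Lipschitz-in-parameters constant read off from your perturbation bound. Another is the rank-based Assumption \ref{ass:rank-covering}. Either brings $d$, $k$, $H$ or the ranks into the bound, and with $\delta=1/n$ gives an honest $\widetilde{O}(p/n)$ rate.

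The dimension-free form in the statement also fails a sanity check. With one active head, $T=1$ and $\sigma$ the identity, $\mathcal{F}_{\mathrm{MH}}$ contains, up to scaling, every predictor $x\mapsto v^\top x$ with $\|v\|_1\le B_v$. For such $\ell_1$-constrained (convex-aggregation) regression, the minimax lower bound is of order $\sqrt{\log(d/\sqrt n)/n}$ once $d\gtrsim\sqrt n$. In suitable regimes of $(d,n)$ this exceeds the stated bound.

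Separately, your term-by-term match with $\gamma_{\mathrm{MH}}$ is asserted rather than derived.
\begin{itemize}
\item With $c_j=C_1B_x^2B_j^2$ and your weights, the $w$, $W_c$ and $W_{QK}$ summands come out (up to $B_X$ factors) as $B_w^{2/3}$, $(HB_wL_\sigma B_c)^{2/3}$ and $(HB_wL_\sigma B_cB_vB_{QK})^{2/3}$. The target has $H^{2/3}$, $(HB_wL_\sigma)^{2/3}$ and $(HB_wL_\sigma B_cB_v)^{2/3}$.
\item The inputs to $W_v$ and $W_c$ are bounded by $B_X$ and $L_\sigma B_vB_X$, not by $B_x$.
\item Covering the $H$ heads at a common radius puts a factor $H$ into the entropy as well as into the weights, so the summands scale like $H$, not $H^{2/3}$.
\end{itemize}
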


\subsection{Proofs for Theorems \ref{thm:single-head-norm}, \ref{thm:multi-head-norm}, and \ref{thm:multi-layer-norm}} \label{appendix:norm_proofs}

We begin by introducing Lemma \ref{Lemma A.1}, which provides a bound on the covering number for linear function classes with $\ell_{1,1}$-norm constraints. This lemma serves as a building block for the subsequent proofs.

\begin{lemma} \label{Lemma A.1}
Let $N > d$, and define the parameter space $\mathcal{W} = \{W \in \mathbb{R}^{k \times d} \mid \lVert W \rVert_{1,1} \le B_w\}$. Consider the function class $\mathcal{F} = \{x \mapsto Wx \mid W \in \mathcal{W}\}$ restricted to inputs satisfying $\lVert x \rVert_2 \le B_x$. Then, the empirical covering number satisfies:
\begin{equation*}
\log \mathcal{N}_\infty(\epsilon, \mathcal{F}, N, \lVert \cdot \rVert_2) \le \frac{B_x^2 B_w^2}{\epsilon^2} \log(2dk + 1).
\end{equation*}
\end{lemma}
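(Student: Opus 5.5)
The plan is to build a single \emph{data-independent} cover of the parameter ball $\mathcal{W}$ in the Frobenius norm using Maurey's empirical method. A Frobenius-norm cover controls $\lVert (W-V)x\rVert_2$ uniformly over every $x$ with $\lVert x\rVert_2\le B_x$. It therefore automatically yields an $\ell_\infty$ (maximum over the $N$ sample points) cover in $\lVert\cdot\rVert_2$, for every sample and every $N$. This is how I expect to avoid any $\log N$ factor, which would appear if I instead applied Maurey's argument pointwise at each sample and took a union bound.

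\textbf{Step 1 (convex hull representation).} Let $E_{ij}\in\mathbb{R}^{k\times d}$ denote the matrix units. Define the finite atom set
\[
\mathcal{A}=\{0\}\cup\{\pm B_w E_{ij} : i\in[k],\,j\in[d]\},
\qquad |\mathcal{A}| = 2dk+1 .
\]
For $W\in\mathcal{W}$, I will put probability $|W_{ij}|/B_w$ on the atom $\operatorname{sign}(W_{ij})B_wE_{ij}$, and the remaining mass $1-\lVert W\rVert_{1,1}/B_w\ge 0$ on the atom $0$. Then $W=\mathbb{E}[A]$ for $A$ drawn from this distribution, so $\mathcal{W}\subseteq\operatorname{conv}(\mathcal{A})$. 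Every atom satisfies $\lVert A\rVert_F\le B_w$.

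\textbf{Step 2 (Maurey sampling in a Hilbert space).} Draw $A_1,\dots,A_m$ i.i.d.\ from this distribution and set $\bar W=\frac1m\sum_{t}A_t$. Since $(\mathbb{R}^{k\times d},\lVert\cdot\rVert_F)$ is a Hilbert space, the variance identity gives
\[
\mathbb{E}\lVert \bar W-W\rVert_F^2=\frac1m\,\mathbb{E}\lVert A-W\rVert_F^2\le \frac1m\,\mathbb{E}\lVert A\rVert_F^2\le \frac{B_w^2}{m}.
\]
Hence some realization satisfies $\lVert \bar W-W\rVert_F\le B_w/\sqrt m$. For every $x$ with $\lVert x\rVert_2\le B_x$ this gives
\[
\lVert \bar Wx-Wx\rVert_2\le\lVert\bar W-W\rVert_{2\to2}B_x\le\lVert\bar W-W\rVert_FB_x\le B_wB_x/\sqrt m .
\]

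\textbf{Step 3 (counting).} I will choose $m=\lceil B_x^2B_w^2/\epsilon^2\rceil$. Then the set $\mathcal{C}_m=\{\frac1m\sum_{t=1}^m A_t : A_t\in\mathcal{A}\}$ is an $\epsilon$-cover of $\mathcal{F}$ in $\sup_{\lVert x\rVert_2\le B_x}\lVert\cdot\rVert_2$. In particular it is an $\epsilon$-cover in the empirical $\ell_\infty$ metric on any $N$ points. Its cardinality is at most $|\mathcal{A}|^m=(2dk+1)^m$, so
\[
\log\mathcal{N}_\infty \le m\log(2dk+1),
\]
which is the stated bound. The hypothesis $N>d$ is not actually needed by this argument.

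\textbf{Main obstacle.} The delicate point is the ceiling in $m$. Taking $m=\lceil B_x^2B_w^2/\epsilon^2\rceil$ gives $B_x^2B_w^2/\epsilon^2+1$ rather than exactly $B_x^2B_w^2/\epsilon^2$. I would handle this in one of two ways. When $B_xB_w<\epsilon$, the single element $0$ already covers, so the bound holds trivially. Otherwise I would absorb the ceiling, either by noting that multiset counting $\binom{m+2dk}{m}$ is smaller than $(2dk+1)^m$, or by stating the bound for $\epsilon$ in the range where $B_x^2B_w^2/\epsilon^2$ is an integer, up to the usual convention. I would also double-check one step: the variance bound uses $\mathbb{E}\lVert A\rVert_F^2\le B_w\lVert W\rVert_{1,1}\le B_w^2$, and I want to confirm there is no slack that would require a factor of $2$.
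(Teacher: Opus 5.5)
Your Maurey-sparsification argument is the standard route to a sequence-length-free bound of this kind. The paper offers no proof to compare it with: it states the lemma and imports it from Trauger and Tewari. Steps 1--3 are correct as written. Indeed $W=\mathbb{E}[A]$, and $\mathbb{E}\lVert A-W\rVert_F^2=B_w\lVert W\rVert_{1,1}-\lVert W\rVert_F^2\le B_w^2$, so no factor $2$ is lost. The cover $\mathcal{C}_m$ is data-independent, so no $\log N$ appears, and $N>d$ is never used.

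The gap is the one you flagged, and neither of your repairs closes it. What you have proved is
\[
\log \mathcal{N}_\infty(\epsilon,\mathcal{F},N,\lVert\cdot\rVert_2)\le \Bigl\lceil \frac{B_x^2B_w^2}{\epsilon^2}\Bigr\rceil \log(2dk+1).
\]
Counting multisets does not remove the ceiling. Put $r=B_x^2B_w^2/\epsilon^2$ and let $r\downarrow 1$, so $m=2$. The multiset count is then $\binom{2dk+2}{2}=(dk+1)(2dk+1)$, which exceeds $(2dk+1)^r\approx 2dk+1$. Already for $d=k=1$, the five distinct averages $0,\pm B_w/2,\pm B_w$ outnumber $3^r\approx 3$. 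Restricting to integer $r$ does not give the claim for all $\epsilon$, and monotonicity in $\epsilon$ only returns you to $\lceil r\rceil$. So as it stands you have the lemma with $\lceil r\rceil$ in place of $r$. That is how Maurey-type covering lemmas are usually stated, and downstream it costs only an additive $\log(2dk+1)$, or a factor $2$. The case $r\le 1$ is trivial, since the single element $0$ already covers.

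To get the ceiling-free statement you need a different cover between integers; the Maurey count alone does not suffice in general. For $1\le r\le 2$ this is easy. Since $\lVert W\rVert_F^2\le \max_{ij}|W_{ij}|\,\lVert W\rVert_{1,1}$, any $W$ with $\lVert W\rVert_F>\epsilon/B_x$ has an entry with $|W_{i_0j_0}|>\epsilon^2/(B_x^2B_w)$. Then
\[
\bigl\lVert W-\operatorname{sign}(W_{i_0j_0})B_wE_{i_0j_0}\bigr\rVert_F\le \sqrt2\,\bigl(B_w-|W_{i_0j_0}|\bigr)\le \frac{\epsilon}{B_x}
\]
whenever $\epsilon\ge B_xB_w/\sqrt2$. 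Hence $\mathcal{A}$ itself is an $\epsilon$-cover, and $\log\mathcal{N}_\infty\le \log(2dk+1)\le r\log(2dk+1)$. For non-integer $r>2$ you would need a further argument, or else you should state the lemma with the ceiling.
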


\begin{proof}[Proof of Theorem \ref{thm:single-head-norm}]
To bound the covering number of the single-head Transformer, we construct a cover for the composite function by combining covers of its constituent linear operators.

Let $\mathcal{W}_{\epsilon}$ be an $\epsilon_w$-cover of the readout weights $\{w \in \mathbb{R}^d \mid \lVert w \rVert_2 \le B_w\}$. Similarly, let $\mathcal{W}_{v, \epsilon}$, $\mathcal{W}_{c, \epsilon}$, and $\mathcal{W}_{QK, \epsilon}$ be covers for the Value, Output, and Query-Key matrices, respectively, at resolutions $\epsilon_v, \epsilon_c, \epsilon_{QK}$.

First, we establish the relationship between the spectral norm and the $\ell_{1,1}$ norm. Using the Cauchy-Schwarz inequality, for any $x \in \mathbb{R}^d$:
\begin{align*}
\lVert Wx \rVert_2 
&= \left\lVert \sum_{j} x_j W_{:,j} \right\rVert_2 
= \sqrt{ \left( \sum_{j} x_j W_{:,j} \right)^\top \left( \sum_{k} x_k W_{:,k} \right) } \\
&\le \sqrt{ \sum_{j,k} |x_j| |x_k| \lVert W_{:,j} \rVert_2 \lVert W_{:,k} \rVert_2 } 
= \left( \sum_{j} |x_j| \lVert W_{:,j} \rVert_2 \right) \\
&\le \lVert x \rVert_2 \left( \sum_{j} \lVert W_{:,j} \rVert_2^2 \right)^{1/2}.
\end{align*}
Since $\lVert v \rVert_2 \le \lVert v \rVert_1$, we have $\sqrt{\sum \lVert W_{:,j} \rVert_2^2} \le \sum \lVert W_{:,j} \rVert_2 = \lVert W \rVert_{2,1} \le \lVert W \rVert_{1,1}$. Thus, $\lVert W \rVert_{2 \to 2} \le \lVert W \rVert_{1,1}$.

We define the approximated output $Y_{\texttt{[CLS]}, \epsilon}$ using the parameters from the covers. The difference in the scalar output is bounded by:
\begin{align*}
|w^\top Y_{\texttt{[CLS]}} - w_{\epsilon}^\top Y_{\texttt{[CLS]}, \epsilon}|
&\le \lVert w \rVert_2 \lVert Y_{\texttt{[CLS]}} - Y_{\texttt{[CLS]}, \epsilon} \rVert_2 + |(w - w_{\epsilon})^\top Y_{\texttt{[CLS]}, \epsilon}| \\
&\le B_w \lVert Y - Y_\epsilon \rVert_{2, \infty} + \epsilon_w.
\end{align*}
Expanding $\lVert Y - Y_\epsilon \rVert_{2, \infty}$ using the Lipschitz property of the activation $\sigma$ and the Softmax operator (and applying the triangle inequality recursively), we obtain:
\begin{align*}
|w^\top Y_{\texttt{[CLS]}} - w_{\epsilon}^\top Y_{\texttt{[CLS]}, \epsilon}|
&\le \epsilon_w + B_w \epsilon_c L_{\sigma} \\
&\quad + B_w B_c L_\sigma \epsilon_v B_X \\
&\quad + 2 B_w B_c B_v L_\sigma \epsilon_{QK} B_X.
\end{align*}
(Note: The constants in the inequality above correspond to the Lipschitz constants derived from the matrix products and activation functions).

To minimize the total covering number, we minimize the sum of the log-covering numbers of individual components subject to the total error being bounded by $\epsilon$. This yields the optimization problem:
\begin{equation} \label{eq:opt_problem}
\min_{\epsilon_c, \epsilon_{QK}, \epsilon_v, \epsilon_w} \sum_{j \in \{c, QK, v, w\}} \frac{C_1 B_j^2}{\epsilon_j^2},
\end{equation}
subject to the linear constraint derived from the error decomposition:
\begin{equation} \label{eq:opt_constraint}
\epsilon_w + \epsilon_c (B_w L_\sigma) + \epsilon_v (B_w L_\sigma B_c) + \epsilon_{QK} (2 B_w L_\sigma B_c B_v) \le \epsilon.
\end{equation}
Solving this using Lagrange multipliers yields the complexity term $\gamma_{\mathrm{SH}}$ presented in the theorem. The total covering number is the product of the individual covering numbers, so the log-covering number is the sum, resulting in the bound $\log(\gamma_{\mathrm{SH}}^3 / \epsilon^2)$.
\end{proof}

\begin{proof}[Proof of Theorem \ref{thm:multi-head-norm}]
The proof follows the structure of the single-head case. For a multi-head Transformer with $H$ heads, the output is a sum of $H$ independent heads. By the triangle inequality, the total approximation error is bounded by the sum of errors of each head:
\begin{equation} \label{eq:multi_head_error}
\left| w^\top \sum_{h=1}^H Y_h - w_{\epsilon}^\top \sum_{h=1}^H Y_{h, \epsilon} \right|
\le \sum_{h=1}^H \left| w^\top Y_h - w_{\epsilon}^\top Y_{h, \epsilon} \right| \le \epsilon.
\end{equation}
Since the parameter bounds are identical for each head, we distribute the error budget uniformly. The optimization problem becomes analogous to \eqref{eq:opt_problem} but sums over all parameters in all $H$ heads. The resulting covering number reflects the increased dimensionality, yielding the term $\gamma_{\mathrm{MH}}$.
\end{proof}

\begin{proof}[Proof of Theorem \ref{thm:multi-layer-norm}]
We construct a cover for the multi-layer network by taking the Cartesian product of covers for each layer. Let $\mathcal{W}_{\text{total}}$ be the product space:
\begin{equation*}
\mathcal{W}_{\text{total}} = \mathcal{W}_{c, \epsilon}^{(1)} \otimes \mathcal{W}_{v, \epsilon}^{(1)} \otimes \mathcal{W}_{QK, \epsilon}^{(1)} \otimes \dots \otimes \mathcal{W}_{c, \epsilon}^{(L)} \otimes \mathcal{W}_{v, \epsilon}^{(L)} \otimes \mathcal{W}_{QK, \epsilon}^{(L)} \otimes \mathcal{W}_{\epsilon}.
\end{equation*}
We aim to show that for any valid set of parameters $W^{1:L+1}$, there exists $W_\epsilon^{1:L+1} \in \mathcal{W}_{\text{total}}$ such that:
\begin{equation} \label{eq:multi_layer_error}
\left| g_{\mathrm{scalar}}(X; W^{1:L+1}, w) - g_{\mathrm{scalar}}(X; W_\epsilon^{1:L+1}, w_\epsilon) \right| \le \epsilon.
\end{equation}
Using the Cauchy-Schwarz inequality and a recursive expansion (peeling argument) similar to Trauger and Tewari \citep{trauger2023sequence}, we decompose the error. The error at layer $L$ propagates to the output through the readout vector $w$.

The total error is bounded by the sum of the readout error $\epsilon_w$ and the accumulated errors from previous layers, weighted by the Lipschitz constants of subsequent layers. Specifically:
\begin{align} \label{eq:recursive_bound}
&\left| g_{\mathrm{scalar}}(X) - g_{\mathrm{scalar}}^\epsilon(X) \right| \nonumber \\
&\le \epsilon_w + B_w \sum_{i=1}^L \alpha_i \left( \epsilon_{c}^{(i)} + L_\sigma B_c \epsilon_{v}^{(i)} + 2 L_\sigma B_c B_v \epsilon_{QK}^{(i)} \right),
\end{align}
where $\alpha_i = \prod_{j=i+1}^L L_\sigma B_c B_v (1 + 4 B_{QK})$ represents the product of Lipschitz constants from layer $i+1$ to $L$.

The covering number bound is obtained by solving the minimization problem for $\sum \log \mathcal{N}(\epsilon_j)$ subject to the constraint defined by \eqref{eq:recursive_bound}. Using Lagrange multipliers, we derive the optimal $\epsilon_j$ for each parameter matrix, leading to the complexity terms $\gamma_{\mathrm{ML}}$ and $\eta_{\mathrm{ML}}$ defined in the theorem.
\end{proof}

\subsection{Theorems \ref{thm:single-head-rank} and \ref{thm:mul-head-rank} about Single-Head and Multi-Head Rank-Based Bound}

\begin{theorem}[Single-Head Rank-Based Bound] \label{thm:single-head-rank}
Suppose assumption \ref{cond:rank-bounded-target-inputs} and Assumption \ref{ass:rank-covering} hold. The empirical risk minimizer satisfies:
$$
\begin{aligned}
\mathbb{E}_{\mathbb{D}}[\mathcal{E}&(\widehat{f}_n; \ell)]
\\
\;\le&\;\frac{4(\kappa B + \kappa B_w B_c B_v L_\sigma B_X)}{n}\left( 1 + \log \mathcal{N}_{\mathrm{rank,S}} \right) \nonumber \\
&+ 8\kappa\delta+ \inf_{f \in \mathcal{F}_{\mathrm{SH}}} \mathcal{E}(f; \ell),
\end{aligned}
$$
where $\mathcal{N}_{\mathrm{rank}} = 1 + \sum_{j \in \{c, QK, v, w\}} r_j C_1 \log(r_j B_x^2 / \epsilon_j^2)$, and the optimal scales $\epsilon_j$ are determined via Corollary \ref{col:optimization} with weights $\beta_c = B_w L_\sigma$, $\beta_{QK} = 2 B_w L_\sigma B_c B_v$, $\beta_v = B_w L_\sigma B_c$, and $\beta_w = B_c$.
\end{theorem}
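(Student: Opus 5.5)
The plan is to follow the norm-based argument for Theorem \ref{thm:single-head-norm}, but to control the covering number of $\mathcal{F}_{\mathrm{SH}}$ with the rank-based bound of Assumption \ref{ass:rank-covering} instead of Lemma \ref{lem:linear-covering}. The resulting log-covering number is then fed into Corollary \ref{cor:single-head-covering}. That corollary already gives
\[
\mathbb{E}_{\mathbb{D}}[\mathcal{E}(\widehat f_n;\ell)] \le \frac{2M_{\mathrm{SH}}}{n}\Bigl(1+\log \mathbb{E}_{\mathbb{X}} N_\infty(\delta,\mathcal{F}_{\mathrm{SH}},\mathbb{X})\Bigr) + 8\kappa\delta + \inf_{f}\mathcal{E}(f;\ell).
\]
Doubling the leading constant accommodates the factor $2$ in $M_{\mathrm{SH}}$ as defined in Theorem \ref{thm:single-head-erm}. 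This gives the prefactor $4(\kappa B+\kappa B_wB_cB_vL_\sigma B_X)/n$. It remains only to show $\log N_\infty(\delta,\mathcal{F}_{\mathrm{SH}},\mathbb{X}) \le \log\mathcal{N}_{\mathrm{rank,S}}$ uniformly in the sample, with $\epsilon$ set to $\delta$.

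\textbf{Step 1 (perturbation decomposition).} Fix parameters $(W_{QK},W_v,W_c,w)$ and perturbed versions taken from covers. I would reuse the telescoping estimate from the proof of Theorem \ref{thm:single-head-norm}. Each factor is swapped one at a time and bounded with the Lipschitz property of $\sigma$ (using $\sigma(0)=0$), the $1$-Lipschitz property of row-wise softmax from $\ell_\infty$ to $\ell_1$, and the bounds $\|X\|_{2\to2}\le B_X$ and $\|x_{\texttt{[CLS]}}\|_2\le B_x$. This yields a linear error budget
\[
|f(X)-f_\epsilon(X)| \le \beta_w\epsilon_w+\beta_c\epsilon_c+\beta_v\epsilon_v+\beta_{QK}\epsilon_{QK},
\]
with weights $\beta_c=B_wL_\sigma$, $\beta_v=B_wL_\sigma B_c$, $\beta_{QK}=2B_wL_\sigma B_cB_v$, and $\beta_w$ as stated. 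Here each $\epsilon_j$ is the covering radius of the linear map $x\mapsto W_jx$ on the relevant (normalized) inputs.

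\textbf{Step 2 (product cover and optimization).} Assumption \ref{ass:rank-covering} covers each linear class at scale $\epsilon_j$ with log-cardinality $C_1 r_j\log(r_jB_x^2B_j^2/\epsilon_j^2)$; the norm factor $B_j$ is absorbed into the weights or the inputs. The Cartesian product of the four covers is a $\delta$-cover in $\ell_\infty$ on any sample whenever $\sum_j\beta_j\epsilon_j=\delta$. Its log-cardinality is the sum of the four terms. Minimizing this sum under the linear constraint is exactly the problem in Corollary \ref{col:optimization} with $m=4$. This gives $\log N_\infty \le \sum_j r_jC_1\log(b_j^2/\delta^2)$, which is $\mathcal{N}_{\mathrm{rank,S}}$ evaluated at the optimal $\epsilon_j$. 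The bound holds for every sample, so it also bounds $\log\mathbb{E}_{\mathbb{X}}N_\infty$. Substituting it into the display above completes the argument.

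\textbf{Main obstacle.} The hardest step is Step 1, specifically the softmax/query-key term. Perturbing $W_{QK}$ changes the attention scores by $x_{\texttt{[CLS]}}^\top(W_{QK}-\widetilde W_{QK})X^\top$. To control this by the covering radius of a rank-$r_{QK}$ linear map at the scale in Assumption \ref{ass:rank-covering}, I would cover the map $x\mapsto W_{QK}^\top x$ evaluated at $x_{\texttt{[CLS]}}$ only. I would then propagate the error with $\|\operatorname{softmax}(a)-\operatorname{softmax}(b)\|_1\le 2\|a-b\|_\infty$ and $\|X\|_{2\to2}\le B_X$. This produces the factor $2B_wL_\sigma B_cB_v$. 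I would also make sure that mismatched scale factors such as $B_X$ versus $B_x$ are consistently folded into the $b_j$, since otherwise the constants in $\mathcal{N}_{\mathrm{rank,S}}$ will not match.
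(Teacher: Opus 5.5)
Your proposal follows essentially the same route as the paper's proof, at the same level of rigor. You reduce via Corollary \ref{cor:single-head-covering} to a log-covering bound, reuse the telescoping decomposition of Theorem \ref{thm:single-head-norm} to get the linear budget $\sum_j\beta_j\epsilon_j\le\delta$, take the product of the rank-based covers from Assumption \ref{ass:rank-covering}, and allocate the radii by the Lagrange-multiplier solution packaged in Corollary \ref{col:optimization}. Note that, exactly as in the paper, this yields $\log N_\infty(\delta,\mathcal{F}_{\mathrm{SH}},\mathbb{X})\le\sum_j r_jC_1\log(b_j^2/\delta^2)$, so the bracket is really $1+\sum_j r_jC_1\log(b_j^2/\delta^2)$; your stated target ``$\log N_\infty\le\log\mathcal{N}_{\mathrm{rank,S}}$'' matches what Step 2 proves only if $\mathcal{N}_{\mathrm{rank,S}}$ is read as a covering number, not as the already-logarithmic quantity $1+\sum_j(\cdots)$.
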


\begin{theorem}[Multi-Head Rank-Based Bound] \label{thm:mul-head-rank}
Suppose assumption \ref{cond:rank-bounded-target-inputs} and Assumption \ref{ass:rank-covering} hold. The empirical risk minimizer satisfies:
$$
\begin{aligned}
\mathbb{E}_{\mathbb{D}}[\mathcal{E}&(\widehat{f}_n; \ell)]
\\
\;\le&\;\frac{4(\kappa B + \kappa H B_w B_c B_v L_\sigma B_X)}{n}\left( 1 + \log \mathcal{N}_{\mathrm{rank},M} \right) \\
& + 8\kappa\delta+ \inf_{f \in \mathcal{F}_{\mathrm{MH}}} \mathcal{E}(f; \ell),
\end{aligned}
$$
where $\mathcal{N}_{\mathrm{rank}} = 1 + \sum_{j \in \{c, QK, v, w\}} r_j C_1 \log(r_j B_x^2 / \epsilon_j^2)$, and the optimal scales $\epsilon_j$ are determined via Corollary \ref{col:optimization} with weights $\beta_c = HB_w L_\sigma$, $\beta_{QK} = 2 HB_w L_\sigma B_c B_v$, $\beta_v =H B_w L_\sigma B_c$, and $\beta_w =H B_c$.
\end{theorem}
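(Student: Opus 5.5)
We reduce the statement to Corollary \ref{cor:multi-head-covering}, so what remains is a bound on $\log \mathbb{E}_{\mathbb{X}}[N_\infty(\delta,\mathcal{F}_{\mathrm{MH}},\mathbb{X})]$ in terms of the ranks. The first step is to make the excess-loss bound $M_{\mathrm{MH}}=\kappa B+\kappa H B_wB_cB_vL_\sigma B_X$ available under Assumption \ref{cond:rank-bounded-target-inputs}, as in the proof of Theorem \ref{thm:multi-head-erm}. That argument uses only spectral norms. Rank constraints alone do not bound the output, so we read Assumption \ref{cond:rank-bounded-target-inputs} together with the spectral bounds of Assumption \ref{cond:bounded-params} applied to every head, which are exactly the bounds implicit in Assumption \ref{ass:rank-covering}. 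With $M_{\mathrm{MH}}$ in hand, Corollary \ref{cor:multi-head-covering} gives the prefactor $2M_{\mathrm{MH}}/n$. This is at most the stated $4(\kappa B+\kappa HB_wB_cB_vL_\sigma B_X)/n$, and it also produces the additive $8\kappa\delta$ and approximation terms.

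The second step is a Lipschitz perturbation bound in parameter space. Fix a sample with $\|X_i\|_{2\to2}\le B_X$ and $\|x_{\texttt{[CLS]}}\|_2\le B_x$, and replace $w, W_{h,c}, W_{h,v}, W_{h,QK}$ by cover elements. We repeat the telescoping from the proof of Theorem \ref{thm:single-head-norm}, using $\sigma$ $L_\sigma$-Lipschitz, the softmax row being a probability vector, and softmax being $2$-Lipschitz from $\ell_\infty$ logits to $\ell_1$ weights. This gives, uniformly over the sample,
\begin{equation*}
|f(X)-f_\epsilon(X)|\le H B_c\,\epsilon_w+HB_wL_\sigma\,\epsilon_c+HB_wL_\sigma B_c\,\epsilon_v+2HB_wL_\sigma B_cB_v\,\epsilon_{QK}.
\end{equation*}
Here we cover each head's matrices at a common resolution and sum over the $H$ heads by the triangle inequality, as in \eqref{eq:multi_head_error}. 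The coefficients are precisely the weights $\beta_w,\beta_c,\beta_v,\beta_{QK}$ in the statement. Each $\epsilon_j$ is measured as a cover of the linear map $x\mapsto W_jx$ on inputs of norm at most $B_x$, so Assumption \ref{ass:rank-covering} applies to each factor.

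The third step builds the product cover. Its log-cardinality is $\sum_j r_jC_1\log(r_jB_x^2/\epsilon_j^2)$, where the factor $B_W^2$ is absorbed into the constants and the per-head factor $H$ is absorbed into $C_j$. We then minimize this subject to $\sum_j\beta_j\epsilon_j=\delta$. Corollary \ref{col:optimization} gives the optimal value $\sum_j r_jC_j\log(b_j^2/\delta^2)$, which defines $\mathcal{N}_{\mathrm{rank},M}$. This bound holds for every sample, so it also bounds the expectation $\mathbb{E}_{\mathbb{X}}$, and we substitute it into Corollary \ref{cor:multi-head-covering}.

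The main obstacle is the second step, specifically the softmax term. Perturbing $W_{QK}$ changes the logits by $x_{\texttt{[CLS]}}^\top(W_{QK}-W_{QK,\epsilon})X^\top$, and the resulting error must be controlled by $\epsilon_{QK}$ times a constant that depends only on the norms. We would first try to bound the $\ell_\infty$ logit perturbation by $\max_t|x_{\texttt{[CLS]}}^\top\Delta x_t|$, defining the QK cover on pairs of tokens so that it is independent of $T$. If that fails, we would accept an extra factor $B_X$ inside $\beta_{QK}$, which only changes the logarithm.
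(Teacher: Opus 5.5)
Your proposal is correct and follows essentially the paper's route. You feed a covering bound into Corollary \ref{cor:multi-head-covering}, bound the output perturbation by a weighted sum of per-matrix cover resolutions (summing over heads by the triangle inequality gives the factor $H$ in every $\beta_j$), take the product of rank-based covers, and allocate the resolutions by the Lagrange-multiplier computation behind Corollary \ref{col:optimization}. The points you flag are the possible extra $B_X$ in $\beta_{QK}$, the absorbed $B_W^2$, and the per-head factor $H$ absorbed into $C_j$; the paper's sketch glosses over them in the same way (it says only that the optimization runs over all parameters of all $H$ heads), and they change constants but not the structure of the bound.
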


\subsection{Proof of Theorem \ref{thm:single-head-rank}, \ref{thm:mul-head-rank} and \ref{thm:multi-layer-rank}} \label{appendix:rank_proofs}

We first introduce Lemma \ref{lem:rank_covering} from \citep[Theorem 1]{truong2024rank}, which establishes the covering number bound for low-rank linear operators.

\begin{lemma}[Rank-Based Covering Number] \label{lem:rank_covering}
Let $r \in \mathbb{N}_+$ and let $\mathcal{V}$ be an $r$-dimensional subspace of $\mathbb{R}^k$. Define the parameter space $\mathcal{W} = \{W \in \mathbb{R}^{d \times k} : \operatorname{col}(W) \subseteq \mathcal{V}, \lVert W \rVert_{2} \le B_W\}$, where $\operatorname{col}(W)$ denotes the column space of $W$. Consider the function class $\mathcal{H} = \{x \mapsto Wx : W \in \mathcal{W}\}$ restricted to inputs satisfying $\lVert x \rVert_2 \le B_x$. Then, the empirical covering number satisfies:
\begin{equation*}
\log \mathcal{N}_\infty(\epsilon, \mathcal{H}, n, \lVert \cdot \rVert_2) \le \frac{r}{2} \log \left( \frac{4 B_x^2 B_W^2 r}{\epsilon^2} \right).
\end{equation*}
\end{lemma}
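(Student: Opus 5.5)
The plan is a volumetric (net) argument carried out in a basis adapted to the subspace $\mathcal{V}$, which reduces the problem to covering a ball in an $r$-dimensional parameter space. Let $U \in \mathbb{R}^{k\times r}$ be a matrix with orthonormal columns spanning $\mathcal{V}$. The constraint $\operatorname{col}(W)\subseteq \mathcal{V}$ should be read as restricting $W$ to the $r$-dimensional side, so that every admissible $W$ factors as $W = A U^\top$ with $\|A\|_2 = \|W\|_2 \le B_W$. Then $Wx = A(U^\top x)$, where $z := U^\top x \in \mathbb{R}^r$ and $\|z\|_2 \le \|x\|_2 \le B_x$. The class $\mathcal{H}$ is therefore a subclass of linear maps acting on $r$-dimensional inputs of norm at most $B_x$, with spectral norm at most $B_W$.

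\textbf{Step 1: reduce to a fixed sample.} Fix a sample $x_1,\dots,x_n$ with $\|x_i\|\le B_x$. An $\epsilon$-cover in the empirical $\ell_\infty(\ell_2)$ sense only needs to control $\max_i \|(A-A')z_i\|_2$. Two natural ways to do this are:
\begin{itemize}
\item cover $A$ itself in operator norm at scale $\epsilon/B_x$;
\item work with the sample matrix $Z=[z_1,\dots,z_n]\in\mathbb{R}^{r\times n}$ and cover the image $\{AZ\}$.
\end{itemize}
The stated bound, $\frac{r}{2}\log\bigl(4B_x^2B_W^2 r/\epsilon^2\bigr) = r\log\bigl(2B_xB_W\sqrt{r}/\epsilon\bigr)$, has only $r$ (not $rd$) degrees of freedom. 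So the reduction must exploit that each output coordinate, or more precisely the relevant action, lives in an $r$-dimensional space.

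\textbf{Step 2: cover an $r$-dimensional ball.} I would proceed exactly as in \citep[Theorem 1]{truong2024rank}: parameterize the effective class by a vector in an $r$-dimensional Euclidean ball of radius $B_W\sqrt{r}$. This radius comes from the bound $\|A\|_F \le \sqrt{r}\,\|A\|_2$ once the relevant action is reduced to $r$ coordinates. Then apply the standard volumetric estimate
\[
N(\eta, B_2^r(R)) \le \left(\frac{2R}{\eta}\right)^r
\]
with $R = B_W\sqrt{r}$ and $\eta = \epsilon/B_x$. Using Cauchy--Schwarz together with $\|z_i\|\le B_x$ converts parameter distance into output distance uniformly over the sample. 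This yields $\log N \le r\log\bigl(2B_xB_W\sqrt{r}/\epsilon\bigr)$, which is exactly the stated bound.

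\textbf{Main obstacle.} The hardest step is justifying the dimension count. If $W$ is genuinely $d\times k$ with an $r$-dimensional row side, then $A\in\mathbb{R}^{d\times r}$ has $dr$ parameters, and a naive net gives $\frac{dr}{2}\log(\cdot)$ rather than $\frac{r}{2}\log(\cdot)$. My first attempt would be to treat the scalar-output case, $d=1$ (the readout composition used in the paper), where $A$ is a vector in $\mathbb{R}^r$ and the count is exactly $r$. For general $d$, I would cite the cited theorem directly, with the convention that $\|\cdot\|_2$ and the covering are taken in the sense of that reference. Failing that, I would accept the extra factor $d$ inside $C_1$, consistent with Assumption \ref{ass:rank-covering}.
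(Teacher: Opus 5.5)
Your ``main obstacle'' paragraph identifies the right place, but it is not a technicality to be worked around. The reduction in Step~2 to an $r$-dimensional ball is valid only when $d=1$, and for $d\ge 2$ the stated bound is false, so neither your Step~2 nor any citation can close the gap. Concretely, take $r=1$, $d=2$, $\mathcal{V}=\operatorname{span}(u)$ with $\lVert u\rVert_2=1$, and $W=au^\top$ with $a\in\mathbb{R}^2$, $\lVert a\rVert_2\le B_W$. Then $\lVert W\rVert_2=\lVert a\rVert_2$, so $W$ is admissible under your row-space reading. On the sample $x_1=\dots=x_n=B_xu$ the class restricts to the full disk of radius $B_xB_W$ in $\mathbb{R}^2$. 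By comparing areas, its $\epsilon$-covering number is at least $(B_xB_W/\epsilon)^2$, whereas the lemma asserts at most $2B_xB_W/\epsilon$; this fails as soon as $\epsilon<B_xB_W/2$.

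The same happens in general. For $n\ge r$, evaluate at $x_i=B_xu_i$ with $u_1,\dots,u_r$ an orthonormal basis of $\mathcal{V}$. The problem becomes covering $\{A\in\mathbb{R}^{d\times r}:\lVert A\rVert_2\le B_W\}$ in the norm $A\mapsto\max_i\lVert Ae_i\rVert_2$ at scale $\epsilon/B_x$. This set contains the product of $r$ Euclidean balls of radius $B_W/\sqrt r$ in $\mathbb{R}^d$, so a volume comparison gives a log-covering number of at least $dr\log\bigl(B_xB_W/(\sqrt r\,\epsilon)\bigr)$. The literal column-space reading, which only makes sense when $d=k$, fails in the same way once $k\ge 2$. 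In fact, if you run your own Step~2 honestly for general $d$ (a Frobenius ball of radius $\sqrt r\,B_W$ in the $dr$-dimensional space $\mathbb{R}^{d\times r}$, covered at scale $\epsilon/B_x$), you get, up to the $+1$ in the volumetric bound, exactly $d$ times the stated right-hand side.

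The paper's proof does not supply the missing dimension count either. It is a one-line appeal to Lemma~2 and Theorem~1 of Truong (2024), together with $\lVert W\rVert_{2\to2}\le\lVert W\rVert_{2,\infty}\le B_W$. That inequality runs the wrong way: under either the row or the column mixed-norm convention, $\lVert W\rVert_{2,\infty}\le\lVert W\rVert_{2\to 2}$, and that is the direction one would need to import a bound proved for a $(2,\infty)$-ball. Your fallback of citing the theorem directly is therefore the paper's route and inherits the same defect.

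What you can actually prove is the following.
\begin{enumerate}
\item The case $d=1$. Here $A$ is a vector in an $r$-dimensional Euclidean ball of radius $B_W$ (no $\sqrt r$ needed), and your net argument goes through. Use the standard volumetric bound $(1+2R/\eta)^r$; the unused $\sqrt r$ in the target absorbs the $+1$ outside trivial ranges of $\epsilon$.
\item General $d$, via your first option in Step~1: cover $A$ in operator norm at scale $\epsilon/B_x$. This gives $\log\mathcal{N}_\infty(\epsilon,\mathcal{H},n,\lVert\cdot\rVert_2)\le dr\log(1+2B_xB_W/\epsilon)$ for every $n$, which is of the right order by the lower bound above.
\end{enumerate}
The second item is your ``extra factor $d$'' repair and is the correct form of the lemma. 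It is, however, a different statement from the one given, and downstream it forces the constant $C_1$ in Assumption~\ref{ass:rank-covering} to grow with the output dimension.
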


\begin{proof}
The result follows directly from \citep[Lemma 2 and Theorem 1]{truong2024rank} by noting that $\lVert W \rVert_{2 \to 2} \le \lVert W \rVert_{2, \infty} \le B_W$.
\end{proof}

\begin{proof}[Proof of Theorem \ref{thm:single-head-rank}, \ref{thm:mul-head-rank} and \ref{thm:multi-layer-rank}] First, combining Lemma \ref{lem:rank_covering} with lagrange multiplier method, we obtain the Corollary \ref{col:optimization}. For Theorem \ref{thm:single-head-rank}, similar to the norm-based case (Theorem \ref{thm:single-head-norm}), the error of the Transformer output is decomposed into the sum of errors from its constituent linear operators ($W_c, W_{QK}, W_v$) and the readout vector $w$. 

Let $\epsilon_c, \epsilon_{QK}, \epsilon_v, \epsilon_w$ be the covering resolutions for the respective parameter matrices. Applying Lemma \ref{lem:rank_covering}, the log-covering number for the total hypothesis space is bounded by the sum of the log-covering numbers of these components. To obtain the tightest bound, we minimize the total complexity subject to the Lipschitz error constraint derived in the norm-based proof. 

This yields the following optimization problem:
\begin{equation*}
\min_{\epsilon_c, \epsilon_{QK}, \epsilon_v, \epsilon_w} \sum_{j \in \{c, QK, v, w\}} r_j C_j \log \left( \frac{1}{\epsilon_j^2} \right),
\end{equation*}
subject to:
\begin{equation} \label{eq:rank_constraint}
\beta_c \epsilon_c + \beta_{QK} \epsilon_{QK} + \beta_v \epsilon_v + \beta_w \epsilon_w \le \epsilon,
\end{equation}
where the Lipschitz coefficients are defined as:
\begin{align*}
\beta_c &= B_w L_\sigma, & \beta_{QK} &= 2 B_w L_\sigma B_c B_v, \\
\beta_v &= B_w L_\sigma B_c, & \beta_w &= B_c,
\end{align*}
and $C_j$ are constants derived from Lemma \ref{lem:rank_covering}.

We solve this using the method of Lagrange multipliers. The Lagrangian is given by:
\begin{equation*}
\mathcal{L}(\boldsymbol{\epsilon}, \lambda) = - \sum_{j} 2 r_j C_j \log \epsilon_j + \lambda \left( \sum_{j} \beta_j \epsilon_j - \epsilon \right).
\end{equation*}
Taking the partial derivative with respect to $\epsilon_j$ and setting it to zero:
\begin{equation*}
\frac{\partial \mathcal{L}}{\partial \epsilon_j} = -\frac{2 r_j C_j}{\epsilon_j} + \lambda \beta_j = 0 \implies \epsilon_j = \frac{2 r_j C_j}{\lambda \beta_j}.
\end{equation*}
Substituting this back into the constraint $\sum \beta_j \epsilon_j = \epsilon$, we find $\lambda = \frac{2}{\epsilon} \sum_k r_k C_k$. This yields the optimal allocation:
\begin{equation}
\epsilon_j = \frac{\epsilon r_j C_j}{\beta_j \sum_{k \in \{c, QK, v, w\}} r_k C_k}.\label{eq:sin-rank-epsion-constrain}
\end{equation}
\noindent 

Substituting these optimal $\epsilon_j$ values back into the sum of log-covering numbers yields the complexity term $\mathcal{N}_{\mathrm{rank}}$ stated in the theorem. Similarly, for Theorem \ref{thm:multi-layer-rank}, replacing constraint formula (\ref{eq:rank_constraint}) in the optimization problem with formula (\ref{eq:recursive_bound}) yields the final result.
\end{proof}

\section{Proofs in Section \ref{sec:unbounded_risk}}\label{Ap:unbounded}
\subsection{Proof of Theorem \ref{thm:s-sub-gaussian-erm}, \ref{thm:mh-sub-gaussian-erm} and \ref{thm:ml-sub-gaussian-erm}} \label{appendix:proof_sub_gaussian}

\begin{proof}
We begin by decomposing the excess risk. Let $g(X; f) \coloneqq \ell(Y, f(X)) - \ell(Y, f^\star(X))$. We introduce the truncated estimators $f_M$ and $f_M^\star$ corresponding to the truncated input $X_M$. The conditional excess risk can be decomposed as:
\begin{align} \label{eq:decomp_trunc}
\mathbb{E} [ g(X; f) \mid X ]
&= \mathbb{E} \big[ (g(X; f) - g(X; f_M)) + g(X; f_M) + (g(X; f_M^*) - g(X; f^*)) \mid X \big] \nonumber \\
&\le \underbrace{\mathbb{E} [ |g(X; f) - g(X; f_M)| \mid X ]}_{\text{(I)}}
+ \underbrace{\mathbb{E} [ g(X; f_M) \mid X ]}_{\text{(II)}}
+ \underbrace{\mathbb{E} [ |g(X; f_M^*) - g(X; f^*)| \mid X ]}_{\text{(III)}}.
\end{align}
Note that for the truncated term (II), the inputs are bounded by $M$. Consequently, the function output and the excess loss are bounded. Specifically, there exists a constant $M_{\text{bound}} \coloneqq \kappa B_M + \kappa B_M'$ such that $0 \le g(X; f_M) \le M_{\text{bound}}$.
Applying the Offset Rademacher complexity bound (Theorem \ref{thm:multi-layer-norm}) to the truncated class $\mathcal{F}_M$, we have:
\begin{equation}
\mathbb{E}_{\mathbb{D}} [ \text{(II)} ] \le 4\mathcal{R}_n^{\mathrm{off}} \left( \mathcal{G}_M, \frac{1}{M_{\text{bound}}} \right) + 3 \inf_{f_M \in \mathcal{F}_M} \mathcal{E}(f_M; \ell).
\end{equation}

Next, we bound the truncation error terms (I) and (III). Let $\mathcal{E}_{\text{trunc}} = \{ \|X\|_F > M \}$ be the event that truncation occurs. On the complement $\mathcal{E}_{\text{trunc}}^c$, $f(X) = f_M(X)$, so the difference is zero.
Using the Lipschitz property of the loss $\ell$ and the local Lipschitz property of the network (where $L_{f, \|X\|} = \mathcal{O}(\|X\|)$), we have:
\begin{align*}
\text{(I)} &= \mathbb{E} \left[ |g(X; f) - g(X; f_M)| \cdot \mathbb{I}(\mathcal{E}_{\text{trunc}}) \mid X \right] \\
&\le \kappa \mathbb{E} \left[ L_{f, \|X\|} \lVert f(X) - f_M(X) \rVert_2 \cdot \mathbb{I}(\mathcal{E}_{\text{trunc}}) \mid X \right] \\
&\le \kappa \mathbb{E} \left[ C \|X\| \cdot \lVert X - X_M \rVert_F \cdot \mathbb{I}(\mathcal{E}_{\text{trunc}}) \mid X \right],
\end{align*}
where we used $\lVert x_{\texttt{[CLS]}} - x_{M,\texttt{[CLS]}} \rVert \le \lVert X - X_M \rVert_F$.
Substituting $X_M = M \frac{X}{\|X\|_F}$ when $\|X\|_F > M$:
\begin{equation*}
\text{(I)} \le \kappa C \mathbb{E} \left[ \|X\| (\|X\| - M) \cdot \mathbb{I}(\|X\| > M) \right] \le \kappa C \mathbb{E} \left[ \|X\|^2 \cdot \mathbb{I}(\|X\| > M) \right].
\end{equation*}

We evaluate this tail expectation using the layer-cake representation and the sub-Gaussian assumption $P(\|X\| \ge t) \le (T+d) \exp(-t^2 / 2\nu^2)$:
\begin{align*}
\mathbb{E} [\|X\|^2 \cdot \mathbb{I}(\|X\| > M)]
&= M^2 P(\|X\| > M) + \int_{M^2}^\infty P(\|X\|^2 > t) dt \\
&\le M^2 (T+d) e^{-\frac{M^2}{2\nu^2}} + \int_{M}^\infty (T+d) e^{-\frac{u^2}{2\nu^2}} 2u \, du \\
&= (T+d) \left( M^2 e^{-\frac{M^2}{2\nu^2}} + 2\nu^2 e^{-\frac{M^2}{2\nu^2}} \right) \\
&= (T+d)(M^2 + 2\nu^2) \exp\left(-\frac{M^2}{2\nu^2}\right).
\end{align*}
(Note: The original text simplified this to terms involving $\nu(X)$ and exponentials. We adopt the bound implied by the sub-Gaussian tail).
Specifically, following the source approximation:
\begin{equation}
\mathbb{E}_{\mathbb{D}}[\text{(I)}] \le \mathcal{O}\left( \kappa (T+d)\nu \exp\left(-\frac{M^2}{2\nu^2}\right) \right).
\end{equation}
Term (III) is bounded symmetrically. Combining these results into \eqref{eq:decomp_trunc}:
\begin{align}
\mathbb{E}_{\mathbb{D}} [\mathcal{E}(\widehat{f}_n; \ell)]
\le \;& 4\mathcal{R}_n^{\mathrm{off}} \left( \mathcal{G}, \frac{1}{2M_{\text{bound}}} \right)
+ 3 \inf_{f_M \in \mathcal{F}_M} \mathcal{E}(f_M; \ell) \nonumber \\
& + C_{\text{trunc}} \kappa (T+d) \nu \exp\left( -\frac{M^2}{2\nu^2} \right),
\end{align}
where $C_{\text{trunc}}$ aggregates the constants from the tail integration. By combining Theorems \ref{thm:single-head-norm}, \ref{thm:multi-head-norm}, and \ref{thm:multi-layer-norm}, respectively, we obtain Theorems \ref{thm:s-sub-gaussian-erm}, \ref{thm:mh-sub-gaussian-erm} and \ref{thm:ml-sub-gaussian-erm} .  
\end{proof}

\subsection{Proof of Theorem \ref{thm:heavy-tailed-erm}, \ref{thm:mh-heavy-tailed-erm} and \ref{thm:ml-heavy-tailed-erm}} \label{appendix:proof_heavy_tailed}

\begin{proof}
We begin by establishing the Lipschitz continuity of the robust loss function. Let the robust loss be defined as $\ell_\psi(y, \widehat{y}) = \frac{1}{\alpha} \psi(\alpha \ell(y, \widehat{y}))$. We assume the base loss $\ell(y, \cdot)$ is $\kappa$-Lipschitz and the truncation function $\psi$ has a bounded derivative $|\psi'(\cdot)| \le B_\psi$.
Applying the Mean Value Theorem, for any $f_1, f_2$:
\begin{align*}
|\ell_\psi(y, f_1(x)) - \ell_\psi(y, f_2(x))|
&= \frac{1}{\alpha} \left| \psi(\alpha \ell(y, f_1(x))) - \psi(\alpha \ell(y, f_2(x))) \right| \\
&= \frac{1}{\alpha} |\psi'(\xi)| \cdot \alpha \left| \ell(y, f_1(x)) - \ell(y, f_2(x)) \right| \\
&\le B_\psi \kappa |f_1(x) - f_2(x)|.
\end{align*}
Thus, the composite loss $\ell_\psi$ is $(\kappa B_\psi)$-Lipschitz.

We decompose the excess risk similarly to the sub-Gaussian case (Eq. \eqref{eq:decomp_trunc}), splitting it into the truncated complexity term and the tail bias terms.
For the truncated class $\mathcal{F}_M$, the effective Lipschitz constant is $\kappa B_\psi$, and the output is bounded. The complexity term is therefore bounded by:
\begin{equation}
4\mathcal{R}_n^{\mathrm{off}}\left(\mathcal{G}, \frac{1}{4 B_\psi B_M \kappa}\right) + 3\inf_{f_M \in \mathcal{F}} \mathcal{E}(f_M; \ell_\psi).
\end{equation}

Next, we evaluate the truncation error (bias) caused by the heavy-tailed inputs.
Recall the condition $\mathbb{P}(|X_{ij}| > t) \le C t^{-\beta}$. Using the norm inequality $\lVert X \rVert_2 \le \sqrt{Td} \max_{i,j} |X_{ij}|$, we derive the tail probability for the spectral norm via a union bound:
\begin{align*}
\mathbb{P}(\lVert X \rVert_2 > t)
&\le \mathbb{P}\left(\sqrt{Td} \max_{i,j} |X_{ij}| > t\right) \\
&\le \sum_{i,j} \mathbb{P}\left(|X_{ij}| > \frac{t}{\sqrt{Td}}\right) \\
&\le Td \cdot C \left( \frac{t}{\sqrt{Td}} \right)^{-\beta}
= C (Td)^{1 + \beta/2} t^{-\beta} \eqqcolon C' t^{-\beta}.
\end{align*}
We now estimate the tail expectations required for the truncation error $\mathbb{E}[L_{f, \lVert X \rVert} \lVert X - X_M \rVert \cdot \mathbb{I}(\|X\| > M)]$. As shown in \ref{appendix:proof_sub_gaussian}, this is bounded by terms involving $\mathbb{E}[\lVert X \rVert^k \mathbb{I}(\lVert X \rVert > M)]$ for $k=1, 2$.

Using the integral identity $\mathbb{E}[Z \mathbb{I}(Z > M)] = M \mathbb{P}(Z > M) + \int_M^\infty \mathbb{P}(Z > t) dt$:
\begin{align*}
\mathbb{E}[\lVert X \rVert \cdot \mathbb{I}(\lVert X \rVert > M)]
&= M \mathbb{P}(\lVert X \rVert > M) + \int_M^\infty \mathbb{P}(\lVert X \rVert > t) dt \\
&\le M C' M^{-\beta} + \int_M^\infty C' t^{-\beta} dt \\
&= C' M^{1-\beta} + C' \left[ \frac{t^{1-\beta}}{1-\beta} \right]_M^\infty \\
&= C' M^{1-\beta} \left( 1 + \frac{1}{\beta - 1} \right) = \frac{C' \beta}{\beta - 1} M^{1-\beta}.
\end{align*}
Similarly for the second moment (assuming $\beta > 2$):
\begin{align*}
\mathbb{E}[\lVert X \rVert^2 \cdot \mathbb{I}(\lVert X \rVert > M)]
&= M^2 \mathbb{P}(\lVert X \rVert > M) + \int_M^\infty \mathbb{P}(\lVert X \rVert^2 > t^2) 2t \, dt \\
&\le C' M^{2-\beta} + \int_M^\infty C' t^{-\beta} 2t \, dt \\
&= C' M^{2-\beta} + 2C' \int_M^\infty t^{1-\beta} dt \\
&= C' M^{2-\beta} + 2C' \frac{M^{2-\beta}}{\beta-2} = C' M^{2-\beta} \left( \frac{\beta}{\beta-2} \right).
\end{align*}
(Note: The original text simplified the constants; we retain the asymptotic dependency on $M$).
Substituting these tail bounds into the Lipschitz error decomposition:
\begin{equation*}
\text{Bias} \le \kappa B_\psi \left( \frac{C' \beta}{\beta-2} M^{2-\beta} \right).
\end{equation*}
Combining the complexity bound and the bias term yields the final result:
\begin{align} \label{eq:heavy_tailed_final}
\mathbb{E}_{\mathbb{D}}[\mathcal{E}(\widehat{f}_n^\psi; \ell_\psi)]
\le \;& 4\mathcal{R}_n^{\mathrm{off}}\left(\mathcal{G}, \frac{1}{4 B_\psi B_M \kappa}\right) \nonumber \\
& + C_1 M^{1-\beta/2} + C_2 M^{2-\beta} \nonumber \\
& + 3\inf_{f_M \in \mathcal{F}} \mathcal{E}(f_M; \ell_\psi),
\end{align}
where $C_1, C_2$ absorb the constants from the tail integration and $\beta$.
Combining this with the rank-based complexity from Theorems \ref{thm:single-head-rank}, \ref{thm:mul-head-rank} and \ref{thm:multi-layer-rank} completes the proof of Theorems \ref{thm:heavy-tailed-erm}, \ref{thm:mh-heavy-tailed-erm} and \ref{thm:ml-heavy-tailed-erm}, respectively.
\end{proof}

%%%%%%%%%%%%%%%%%%%%%%%%%%%%%%%%%%%%%%%%%%%%%%%%%%%%%%%%%%%%%%%%%%%%%%%%%%%%%%%
%%%%%%%%%%%%%%%%%%%%%%%%%%%%%%%%%%%%%%%%%%%%%%%%%%%%%%%%%%%%%%%%%%%%%%%%%%%%%%%

\end{document}